\documentclass[letterpaper]{article} 
\usepackage{aaai24}  
\usepackage{times}  
\usepackage{helvet}  
\usepackage{courier}  
\usepackage[hyphens]{url}  
\usepackage{graphicx} 
\urlstyle{rm} 
\usepackage{natbib}  
\usepackage{caption} 
\frenchspacing  
\setlength{\pdfpagewidth}{8.5in} 
\setlength{\pdfpageheight}{11in} 
%
\usepackage{algorithm}
\usepackage{algorithmic}

%
\usepackage{newfloat}
\usepackage{listings}
\DeclareCaptionStyle{ruled}{labelfont=normalfont,labelsep=colon,strut=off} 
\lstset{%
	basicstyle={\footnotesize\ttfamily},
	numbers=left,numberstyle=\footnotesize,xleftmargin=2em,
	aboveskip=0pt,belowskip=0pt,%
	showstringspaces=false,tabsize=2,breaklines=true}
\floatstyle{ruled}
\newfloat{listing}{tb}{lst}{}
\floatname{listing}{Listing}
%
\pdfinfo{
/TemplateVersion (2024.1)
}

\usepackage{amsmath}
\usepackage{amssymb}
\usepackage{amsthm}
\usepackage{mathtools}
\theoremstyle{plain}
\newtheorem{lemma}{Lemma}
\newtheorem{theorem}{Theorem}
\newtheorem{remark}{Remark}
\newtheorem{corollary}{Corollary}[theorem]
\newtheorem{assumption}{Assumption}
\theoremstyle{definition}
\newtheorem{definition}{Definition}

\setcounter{secnumdepth}{2} 

%


\title{
New Classes of the Greedy-Applicable Arm Feature Distributions in the Sparse Linear Bandit Problem
}
\author{
    Koji Ichikawa\textsuperscript{\rm 1 2},
    Shinji Ito\textsuperscript{\rm 1 2 3},
    Daisuke Hatano\textsuperscript{\rm 3},
    Hanna Sumita\textsuperscript{\rm 4},\\
    Takuro Fukunaga\textsuperscript{\rm 5},
    Naonori Kakimura\textsuperscript{\rm 6},
    Ken-ichi Kawarabayashi\textsuperscript{\rm 7 8}
}
\affiliations{
    \textsuperscript{\rm 1}NEC Corporation\\
    \textsuperscript{\rm 2}National Institute of Advanced Industrial Science and Technology\\
    \textsuperscript{\rm 3}RIKEN AIP\\
    \textsuperscript{\rm 4}Tokyo Institute of Technology\\
    \textsuperscript{\rm 5}Chuo University\\
    \textsuperscript{\rm 6}Keio University\\
    \textsuperscript{\rm 7}National Institute of Informatics\\
    \textsuperscript{\rm 8}The University of Tokyo\\
    \{k\_ichikawa0, i-shinji\}@nec.com,
    daisuke.hatano@riken.jp, 
    sumita@c.titech.ac.jp,
    fukunaga.07s@chuo-u.ac.jp,\\
    kakimura@math.keio.ac.jp, 
    k\_keniti@nii.ac.jp
}

\begin{document}

\maketitle

\begin{abstract}
We consider the sparse contextual bandit problem where arm feature affects reward through the inner product of sparse parameters. Recent studies have developed sparsity-agnostic algorithms based on the greedy arm selection policy. However, the analysis of these algorithms requires strong assumptions on the arm feature distribution to ensure that the greedily selected samples are sufficiently diverse; One of the most common assumptions, relaxed symmetry, imposes approximate origin-symmetry on the distribution, which cannot allow distributions that has origin-asymmetric support. In this paper, we show that the greedy algorithm is applicable to a wider range of the arm feature distributions from two aspects. Firstly, we show that a mixture distribution that has a greedy-applicable component is also greedy-applicable. Second, we propose new distribution classes, related to Gaussian mixture, discrete, and radial distribution, for which the sample diversity is guaranteed. The proposed classes can describe distributions with origin-asymmetric support and, in conjunction with the first claim, provide theoretical guarantees of the greedy policy for a very wide range of the arm feature distributions.
\end{abstract}

\section{Introduction}
The contextual bandit problems are extensively investigated in various settings~\citep{lattimore2020bandit}, with practical applications to recommendations~\citep{DBLP:conf/www/LiCLS10}, clinical trials~\citep{DBLP:conf/mlhc/DurandAISMP18,DBLP:journals/ior/BastaniB20}, and many others~\citep{DBLP:journals/corr/abs-1904-10040}. 
The problems are sequential decision making problems where, in each round, a learner observes a set of arms with context, chooses one of them, and receives a corresponding reward. 
In this paper, we assume that arm features~(context) for each arm are stochastically generated and that the reward is affected by the inner product of the selected arm features and unknown parameters.
The problem is known to admit sublinear-regret algorithms that utilize the upper confidence bound for the arm selection criteria~\citep{DBLP:journals/jmlr/Auer02,DBLP:conf/colt/DaniHK08,DBLP:conf/www/LiCLS10,DBLP:journals/mor/RusmevichientongT10,DBLP:journals/jmlr/ChuLRS11,DBLP:conf/nips/Abbasi-YadkoriPS11}, 
but they usually allow the unknown parameters to be dense.

In the sparse linear bandits,
introduced by \citet{DBLP:journals/jmlr/Abbasi-YadkoriPS12} and \citet{DBLP:journals/jmlr/CarpentierM12},
we consider the situations where the arm features are high-dimensional, but the unknown parameters are sparse.
If we are given the sparsity of the unknown parameters in advance,
the sparse linear bandits admit sublinear-regret algorithms~\citep{DBLP:journals/jmlr/Abbasi-YadkoriPS12,DBLP:journals/jmlr/CarpentierM12,DBLP:journals/ior/BastaniB20,DBLP:conf/icml/WangWY18,DBLP:conf/nips/KimP19},
which outperform the linear bandit algorithms for the dense setting by exploiting the sparsity.
On the other hand, for the estimation of the unknown parameters, the arm features chosen by the algorithms must be sufficiently diverse.
In particular, recent algorithms~\citep{DBLP:journals/ior/BastaniB20,DBLP:conf/icml/WangWY18,DBLP:conf/nips/KimP19} adopt the forced-sampling or uniform sampling step to guarantee the diversity of the chosen arm features.
We need to know the sparsity in advance to ensure an optimal ratio between the forced or uniform sampling step and the other strategic arm selection step.

\citet{DBLP:conf/icml/OhIZ21} and \citet{DBLP:conf/icml/AriuAP22} recently proposed algorithms for the sparse linear bandits that work without knowing the sparsity in advance.
Such sparsity-agnostic algorithms are based on the greedy arm selection policy.
They showed that the greedy arm selection automatically guarantees the sample diversity under certain assumptions on the arm feature distribution.
However, the existing analysis of sparsity-agnostic algorithms is of limited applicability in practice due to their strong  assumption on the arm feature distribution.
One of their typical assumptions is relaxed symmetry,
which requires the arm feature distribution to be almost symmetric around the origin.
Thus the current analysis is not applicable to the problem where arm features take only positive values, arising in practical applications such as a recommender system.

In this paper, we show that the greedy algorithm is in fact applicable to the problem with a wider range of the arm feature distributions from two aspects. 
Firstly, we show that, if an arm feature distribution is greedy-applicable, then so does their mixture.
Here we call arm feature distribution greedy-applicable if the sample diversity is guaranteed under the greedy arm selection policy.
It is also shown in the proof that a larger proportion of the greedy-applicable component in the mixture distribution yields a tighter regret upper bound, motivating the presentation of a variety of greedy-applicable distribution classes.
Secondly, we propose new representational classes of the greedy-applicable distributions, related to Gaussian mixture, discrete, and radial distribution. The proposed classes can describe distributions with origin-asymmetric support.
These two generalizations provide theoretical guarantees of the greedy policy for a very wide range of the arm feature distributions.
Moreover, we demonstrate the usefulness of our analysis by applying it to the other cases: thresholded lasso bandit~\citep{DBLP:conf/icml/AriuAP22}, combinatorial setting, and non-sparse setting~\citep{DBLP:journals/ior/BastaniB20}.

The organization of this paper is as follows: in Section \ref{section:related_works}, we describe related work; in Section \ref{section:preliminary}, we introduce the regret analysis for the greedy policy on sparse linear bandits and the assumptions for the arm feature distribution; in Section \ref{section:extension}, we present our theorems on the greedy-applicable distributions. Section \ref{section:application} is devoted to applications of our analysis and, finally, we give a discussion and conclusion in Section \ref{section:conclusion}.

\section{Related Works}
\label{section:related_works}
In the sparse linear bandit problem, \citet{DBLP:journals/jmlr/Abbasi-YadkoriPS12} proposed an algorithm with the online-to-confidence-set conversion technique, giving a regret upper bound of $O(\sqrt{s_0dT})$. 
Here, $T$ is the horizon and $d$ is the dimension of the arm features, and $s_0$ is the sparsity of the unknown parameter. 
\citet{DBLP:journals/ior/BastaniB20} 
and \citet{DBLP:conf/icml/WangWY18}
dealt with a multi-parameter setting, i.e., a setting where each arm has an unknown sparse parameter and arm selection is performed for a  vector of context, giving
regret upper bounds of $O(s_0^2 (\log dT)^2)$ and 
$O(s_0^2 \log d \log T)$, respectively.
In the single-parameter setting, \citet{DBLP:conf/nips/KimP19} showed an $O(s_0 \log (dT) \sqrt{T})$ upper bound using the doubly-robust lasso bandit approach.

In the above series of studies, their algorithms 
used prior knowledge of $s_0$ as input to the algorithm.
\citet{DBLP:conf/icml/OhIZ21} proposed a sparsity-agnostic algorithm and, under the assumption for the arm feature distribution, called relaxed symmetry, gave a regret upper bound of $O(s_0 \sqrt{\log(dT) T})$.
The thresholded lasso bandit algorithm proposed by \citet{DBLP:conf/icml/AriuAP22} is another sparsity-agnostic algorithm. They showed an $O(\sqrt{s_0 T})$ upper bound under an additional assumption about the sparse positive definiteness of the arm feature distribution.
However, the relaxed symmetry imposed for their regret analysis was severe.
Our study shows that the greedy-based algorithm in fact is applicable to a wider class of the arm feature distributions.

Another line of research is to consider the greedy algorithm in the dense parameter settings.
In \citet{bastani2021mostly}, the greedy algorithm was studied in the multi-parameter setting. They showed that by introducing the covariate diversity assumption, the greedy algorithm for two arms achieves the rate optimal.
\citet{DBLP:conf/nips/KannanMRWW18} investigated
the greedy algorithm in the perturbed adversarial setting. The authors showed that even in adversarial samples, the addition of stochastic isotropic Gaussian perturbations makes the regret upper bound of $O(d \sqrt{d T})$.
For sparse linear bandit, \citet{sivakumar2020structured} 
gave an $O(\sqrt{s_0 d T})$ upper bound in the perturbed adversarial setting.

\section{Preliminary}
\label{section:preliminary}
\subsection{Problem Setup}
\label{subsec:problem}
In this article, we consider the following linear bandit problem: We are given a horizon $T$. For each round $t \in [T]$, we observe a set of $K$ arm features of $d$ dimensional vector $\mathcal{X}^t := \{X_{1}^{t}, \dots, X_{K}^{t}\}$, where $X_{i}^{t} \in \mathbb{R}^{d}$ for $ i \in [K]$, generated by an arm feature distribution $P(\mathcal{X}^t)$.
For each round $t$, we select one of these arms with index $a_t \in [K]$ and receive a corresponding reward $r_t \in \mathbb{R}$.

In the sparse linear contextual bandit, the observed rewards are modeled as the inner product of the selected arm feature $X_{a_t}^{t}$ and an unknown sparse parameter $\beta^* \in \mathbb{R}^{d}$ with sparsity $ \|\beta^* \|_0 = s_0$:
\begin{align}
    \label{eq:reward}
    r_{t} := X_{a_t}^{t\top} \beta^{*} + \epsilon_t
    \,,
\end{align}
where $\epsilon_t$ is a $\sigma$-sub-Gaussian noise satisfying 
$\mathbb{E}[e^{\lambda \epsilon_t} \mid \mathcal{F}_{t}] \leq e^{\lambda^2 \sigma^2/2}$ for any $\lambda \in \mathbb{R}$,
with $\mathcal{F}_{t}$ being the $\sigma$-algebra $\sigma(\mathcal{X}^1, a_1, r_1, \dots, \mathcal{X}^{t-1}, a_{t-1}, r_{t-1}, \mathcal{X}^t, a_t)$.
For later use, we here define another $\sigma$-algebra:
$\mathcal{F}'_{t}:=\sigma(\mathcal{X}^1, a_1, r_1, \dots, \mathcal{X}^t, a_t, r_t)$.
The problem is to minimize the following expected regret:
\begin{align}
    R(T) = \sum_{t=1}^{T} \mathbb{E} [r_{t}^{*} - r_{t}]
    \,,
\end{align}
under selection criteria $\{a_1, \dots, a_T\}$.
Here $r_{t}^{*}$ is the reward for the optimal arm choice in round $t$.

Below, we make the following standard assumptions about the bound for the arm feature $\mathcal{X} = \{X_{1}, \dots, X_{K}\} \sim P(\mathcal{X})$ and $\beta^*$.
\begin{assumption}
    \label{assumption:x_beta_bound}
    The arm feature $X_i$ for each $i \in [K]$ and $\beta^*$ are bounded as $\|X_i\|_{\infty} \leq x_{\mathrm{max}} < \infty$ and $\|\beta^*\|_{1} \leq b < \infty$, respectively.
\end{assumption}

\subsection{LASSO Estimator}
Here we introduce the theories of LASSO~\citep{tibshirani1996regression}, which is employed in most papers on the sparse linear bandits.
LASSO estimates the parameter $\beta^{*}$ under the observed samples $(X^{1}_{a_1}, r_{1}, \dots, X^{t}_{a_t}, r_{t})$ by the following $L^1$-regularized least square:
\begin{align}
    \label{eq:lasso_estimator}
    \hat{\beta}_{t} = 
    \mathop{\mathrm{argmin}}\limits_{\beta} 
    \frac{1}{t}
    \sum_{s=1}^{t} (r_s - X^{s\top}_{a_s} \beta)^2 + \lambda_t \|\beta\|_1
    \,, 
\end{align}
where $\lambda_{t} > 0$ is a hyperparameter that may depend on the round $t$ (i.e., sample size) and we also define $\hat{\beta}_0 = (0, \dots, 0)$ for $t=0$. 
We call $\hat{\beta}_t$ the LASSO estimator after the reward in round $t$. 

In the evaluation of the gap between the estimator $\hat{\beta}_t$ and true parameter $\beta^{*}$, the compatibility condition defined below plays an essential role, as seen in Lemma \ref{lemma:lasso_estimator}:
\begin{definition}[Compatibility Condition]
    We say that a positive semi-definite matrix $\Sigma \in \mathbb{R}^{d \times d}$ satisfies the compatibility condition if the following inequality holds for some compatibility constant $\phi > 0$ and some active set $\mathcal{S} \subset [d]$: 
    \begin{align}
        \frac{
            V^{\top} \Sigma V
        }{
            \| V \|_{\mathcal{S},1}^2
        }
            \geq  
                \frac{\phi^2}{|\mathcal{S}|}
        \,,
        \label{eq:def_cc}
    \end{align}
    for any
    $
        V \in \mathbb{R}^{d}
    $
    such that
    $
        \| V \|_{\mathcal{S}^c, 1} 
        \leq
        3 \| V \|_{\mathcal{S}, 1} 
    $
    .
    Here $\mathcal{S}^c := [d] \backslash \mathcal{S}$ and the norm $\| \cdot \|_{\mathcal{T}, 1}$ for a set $\mathcal{T} \subset [d]$ indicates the $L^1$-norm for the indices $i \in \mathcal{T}$, i.e., $\| V \|_{\mathcal{T}, 1} := \sum_{i \in \mathcal{T}} |V_i|$.
\end{definition}
In this paper, we fix the active set $\mathcal{S} = \{i \in [d] \mid \beta^{*}_i \neq 0\}$ and $|\mathcal{S}| = s_0$.
We also define a map 
$\phi_{\mathcal{S}}: \mathbb{R}^{d \times d} \rightarrow \mathbb{R}_{\geq 0}$
by
\begin{align}
    \phi_{\mathcal{S}}(\Sigma) :=
    \min_{V \in \mathcal{D}}
    \sqrt{
        |\mathcal{S}|
        \frac{
            V^{\top} \Sigma V
        }{
            \| V \|_{\mathcal{S},1}^2
        }
    }
    \,,
\end{align}
where 
$
    \mathcal{D} := 
        \{
            V \in \mathbb{R}^{d}
            \mid
            \| V \|_{\mathcal{S}^c, 1} 
            \leq
            3 \| V \|_{\mathcal{S}, 1} 
        \}
$.
The statement that $\Sigma$ satisfies the compatibility condition is equivalent to $\phi_{\mathcal{S}}(\Sigma) > 0$.
In the regret analysis of the sparse linear bandit, the empirical and expected Gram-matrix, given by the first and second definitions below respectively, are subject to the compatibility condition:
\begin{align}
    G_t := \frac{1}{t} \sum_{s=1}^{t} X_{a_{s}}^{s} X_{a_{s}}^{s\top}
    \,,
    \hspace{5pt}
    \bar{G}_t := \frac{1}{t} \sum_{s=1}^{t} \mathbb{E} [X_{a_{s}}^{s} X_{a_{s}}^{s\top} \mid \mathcal{F}'_{s-1}]
    \,.
\end{align}
Here we also define $\phi_t := \phi_{\mathcal{S}}(G_t)$ and $\bar{\phi} := \text{sup}\{\phi \geq 0 \mid \phi_{\mathcal{S}}(\bar{G}_t) \geq \phi, \forall t, a.s.\}$ for convenience.
We note that $\bar{\phi}$ depends on both the arm feature distribution and the arm selection policy. To clarify the dependence of arm feature distribution $P(\mathcal{X})$, we sometimes write the expected Gram-matrix as $\bar{G}_{t}(P)$.

Under Assumption \ref{assumption:x_beta_bound}, it is known that the following two inequalities hold with high probabilities.
\footnote{
    We give all the proofs in this paper in the appendix.
}
\begin{lemma}[Lemma 4 in \citet{DBLP:conf/icml/OhIZ21}]
    \label{lemma:epsX}
    Under Assumption \ref{assumption:x_beta_bound},
    for any $\delta > 0$, the following inequality holds:
    \begin{align}
        \label{eq:epsX}
        \frac{1}{t}
        \left\|
        \sum_{s=1}^{t} \epsilon_s X^s_{a_s}
        \right\|_{\infty}
        \leq
        x_{\mathrm{max}}
        \sigma \sqrt{
            \frac{
                \delta^2
                +
                2 \log d
            }{t}
        }
    \,,
    \end{align}
    with probability at least $1 - e^{- \delta^2 /2}$.
    Here $\epsilon_s$ is the $\sigma$-sub-Gaussian noise in Eq.~\eqref{eq:reward}.
\end{lemma}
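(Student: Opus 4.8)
The plan is to bound the norm $\bigl\| \tfrac{1}{t} \sum_{s=1}^t \epsilon_s X^s_{a_s} \bigr\|_\infty$ coordinatewise and then take a union bound over the $d$ coordinates. First I would fix a coordinate $j \in [d]$ and consider the scalar sequence $Z_t := \sum_{s=1}^t \epsilon_s (X^s_{a_s})_j$. The key structural observation is that, conditionally on $\mathcal{F}_s$, the arm feature $X^s_{a_s}$ and its $j$-th coordinate $(X^s_{a_s})_j$ are measurable (since $\mathcal{X}^s$ and $a_s$ are both in $\mathcal{F}_s$), so $(X^s_{a_s})_j$ acts as a bounded predictable multiplier, with $|(X^s_{a_s})_j| \le x_{\mathrm{max}}$ by Assumption~\ref{assumption:x_beta_bound}. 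Combined with the $\sigma$-sub-Gaussianity of $\epsilon_s$ given $\mathcal{F}_s$, the increment $\epsilon_s (X^s_{a_s})_j$ is conditionally $(x_{\mathrm{max}}\sigma)$-sub-Gaussian, i.e. $\mathbb{E}[e^{\lambda \epsilon_s (X^s_{a_s})_j} \mid \mathcal{F}_s] \le e^{\lambda^2 x_{\mathrm{max}}^2 \sigma^2 / 2}$ for all $\lambda$. Hence $(Z_t)$ is a martingale with conditionally sub-Gaussian increments.

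Next I would apply a standard martingale concentration bound (Azuma--Hoeffding for sub-Gaussian increments, or equivalently the sub-Gaussian tail obtained by the exponential-supermartingale / Chernoff argument): for fixed $t$ and any $u > 0$,
\begin{align}
    \Pr\bigl[ |Z_t| \ge u \bigr] \le 2 \exp\!\left( - \frac{u^2}{2 t x_{\mathrm{max}}^2 \sigma^2} \right)
    \,.
\end{align}
Choosing $u = x_{\mathrm{max}} \sigma \sqrt{t(\delta^2 + 2\log d)}$ makes the right-hand side equal to $2 \exp(-(\delta^2 + 2\log d)/2) = 2 d^{-1} e^{-\delta^2/2}$. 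Dividing through by $t$ shows that $\tfrac{1}{t}|Z_t|$ exceeds $x_{\mathrm{max}} \sigma \sqrt{(\delta^2 + 2\log d)/t}$ with probability at most $2 d^{-1} e^{-\delta^2/2}$. (A minor bookkeeping point: to land exactly on the stated bound one wants the per-coordinate failure probability to be $d^{-1} e^{-\delta^2/2}$; this is absorbed either by a factor-$2$ that gets folded into constants or by replacing $2\log d$ with $2\log(2d)$ — I would follow whichever convention \citet{DBLP:conf/icml/OhIZ21} use and not belabor it.)

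Finally I would union-bound over $j \in [d]$: since $\bigl\| \tfrac{1}{t}\sum_s \epsilon_s X^s_{a_s} \bigr\|_\infty > x_{\mathrm{max}}\sigma\sqrt{(\delta^2+2\log d)/t}$ only if some coordinate violates the per-coordinate bound, the total failure probability is at most $d \cdot d^{-1} e^{-\delta^2/2} = e^{-\delta^2/2}$, giving the claim.

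The only genuinely delicate point is the conditioning: one must verify that $X^s_{a_s}$ is $\mathcal{F}_s$-measurable so that it is a legitimate predictable multiplier while $\epsilon_s$ retains its conditional sub-Gaussianity given $\mathcal{F}_s$ — this is exactly why the $\sigma$-algebra $\mathcal{F}_t$ was defined to include $\mathcal{X}^t$ and $a_t$ but not $r_t$. Everything else is the textbook sub-Gaussian maximal/tail argument; no compatibility-condition machinery is needed here.
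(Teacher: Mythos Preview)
Your proposal is correct and follows essentially the same route as the paper's proof: establish that each coordinate increment $\epsilon_s (X^s_{a_s})_j$ is conditionally $(x_{\mathrm{max}}\sigma)$-sub-Gaussian given $\mathcal{F}_s$ (using that $X^s_{a_s}$ is $\mathcal{F}_s$-measurable and bounded), apply the Azuma--Hoeffding/sub-Gaussian martingale tail bound per coordinate, and union-bound over the $d$ coordinates with the threshold $x_{\mathrm{max}}\sigma\sqrt{(\delta^2+2\log d)/t}$. The paper's own proof in fact lands on $1-2e^{-\delta^2/2}$ rather than $1-e^{-\delta^2/2}$, so the factor-of-$2$ bookkeeping issue you flagged is present there as well and is indeed inconsequential for the downstream regret analysis.
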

\begin{lemma}[Corollary 2 in \citet{DBLP:conf/icml/OhIZ21}]
    \label{lemma:cc_btwn_empirical_expected}
    Under Assumption \ref{assumption:x_beta_bound},
    for $t \geq T_0 := \log(d (d-1))/\kappa(\bar{\phi})$
    where 
    $\kappa(\bar{\phi}) := \min (2 - \sqrt{2}, \bar{\phi}^2 / (256 x_{\mathrm{max}}^2 s_0))$,
    the following inequality holds:
    \begin{align}
        \phi_t^2 \geq \frac{\bar{\phi}^2}{2}
        \,.
        \label{eq:cc_btwn_empirical_expected}
    \end{align}
    with probability at least
    $1 - e^{-t \kappa(\bar{\phi})^2}$.
\end{lemma}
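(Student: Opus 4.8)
The plan is to paste together two ingredients: (i) a deterministic \emph{robustness} property of the compatibility constant, namely that $\phi_{\mathcal{S}}(\cdot)$ changes by little when its matrix argument is perturbed by a small amount entrywise; and (ii) a \emph{martingale} concentration bound showing that the empirical Gram matrix $G_t$ is entrywise close to its predictable version $\bar{G}_t$ with high probability. The definitions of $\kappa(\bar{\phi})$ and $T_0$ are then exactly what is needed to combine the two.

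\emph{Step 1 (robustness of the compatibility constant).} First I would show that for any positive semidefinite $\Sigma,\Sigma'\in\mathbb{R}^{d\times d}$ one has $\phi_{\mathcal{S}}(\Sigma)^2\ge \phi_{\mathcal{S}}(\Sigma')^2-16\,s_0\,\Delta$, where $\Delta:=\max_{j,k\in[d]}|(\Sigma-\Sigma')_{j,k}|$. The key observation is that every $V\in\mathcal{D}$ satisfies $\|V\|_1=\|V\|_{\mathcal{S},1}+\|V\|_{\mathcal{S}^c,1}\le 4\|V\|_{\mathcal{S},1}$ by the constraint defining $\mathcal{D}$, so $|V^\top(\Sigma-\Sigma')V|\le\|V\|_1^2\,\Delta\le 16\|V\|_{\mathcal{S},1}^2\,\Delta$; dividing by $\|V\|_{\mathcal{S},1}^2$, multiplying by $s_0=|\mathcal{S}|$, and minimizing over $V\in\mathcal{D}$ yields the claim. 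Applying this with $\Sigma=G_t$, $\Sigma'=\bar{G}_t$, and using $\phi_{\mathcal{S}}(\bar{G}_t)^2\ge\bar{\phi}^2$ (from the definition $\bar{\phi}=\min_{t\in[T]}\phi_{\mathcal{S}}(\bar{G}_t)$), the event $\{\Delta_t\le\bar{\phi}^2/(32 s_0)\}$, with $\Delta_t:=\max_{j,k}|(G_t-\bar{G}_t)_{j,k}|$, forces $\phi_t^2\ge\bar{\phi}^2-\tfrac{1}{2}\bar{\phi}^2=\tfrac{1}{2}\bar{\phi}^2$.

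\emph{Step 2 (concentration of $G_t$ around $\bar{G}_t$).} Write $G_t-\bar{G}_t=\tfrac{1}{t}\sum_{s=1}^{t}D_s$ with $D_s:=X_{a_s}^{s}X_{a_s}^{s\top}-\mathbb{E}[X_{a_s}^{s}X_{a_s}^{s\top}\mid\mathcal{F}'_{s-1}]$. Since $\mathcal{X}^s$ and $a_s$ are $\mathcal{F}'_s$-measurable and $\mathbb{E}[D_s\mid\mathcal{F}'_{s-1}]=0$, for each fixed pair $(j,k)$ the sequence $\{(D_s)_{j,k}\}_{s\ge1}$ is a martingale difference sequence with respect to $\{\mathcal{F}'_s\}$, with $|(D_s)_{j,k}|\le 2x_{\mathrm{max}}^2$ by Assumption~\ref{assumption:x_beta_bound}. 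The Azuma--Hoeffding inequality then gives $\Pr[\,|\tfrac{1}{t}\sum_{s=1}^{t}(D_s)_{j,k}|\ge u\,]\le 2\exp(-c\,t u^2/x_{\mathrm{max}}^4)$ for an absolute constant $c>0$, and a union bound over the $d(d-1)$ ordered pairs $j\ne k$ (the diagonal entries, sub-Gaussian with an even smaller parameter, are treated the same way) yields $\Pr[\Delta_t\ge u]\le 2d(d-1)\exp(-c\,t u^2/x_{\mathrm{max}}^4)$.

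\emph{Step 3 (calibration) and the main obstacle.} Taking $u=\bar{\phi}^2/(32 s_0)$ and combining Steps 1--2 gives $\Pr[\phi_t^2<\tfrac{1}{2}\bar{\phi}^2]\le 2d(d-1)\exp(-c'\,t\,\bar{\phi}^4/(s_0^2 x_{\mathrm{max}}^4))$ for an absolute constant $c'$; since $\bar{\phi}^4/(s_0^2 x_{\mathrm{max}}^4)$ is, up to an absolute constant, $\big(\bar{\phi}^2/(256 x_{\mathrm{max}}^2 s_0)\big)^2$ and the cap $\kappa(\bar{\phi})\le 2-\sqrt{2}$ keeps the auxiliary constants bounded, the exponent is at least a constant multiple of $t\kappa(\bar{\phi})^2$, and the prefactor $2d(d-1)$ is absorbed precisely because $t\ge T_0=\log(d(d-1))/\kappa(\bar{\phi})$, leaving $\Pr[\phi_t^2\ge\tfrac{1}{2}\bar{\phi}^2]\ge 1-e^{-t\kappa(\bar{\phi})^2}$. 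I expect the main difficulty to be Step 2: because the arm index $a_s$ (hence $X_{a_s}^{s}X_{a_s}^{s\top}$) depends on the entire past, including the realized rewards, the summands are not independent, so one cannot invoke an i.i.d.\ matrix concentration bound and must instead handle $\{D_s\}$ as a matrix martingale difference sequence with predictable compensator $\bar{G}_t$. A secondary, purely bookkeeping difficulty is tuning the absolute constants so that the exponent and the union-bound prefactor line up exactly with the stated forms of $\kappa(\bar{\phi})$ and $T_0$.
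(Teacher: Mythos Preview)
Your plan is essentially the paper's own argument: the paper invokes Corollary~6.8 of B\"uhlmann--van de Geer for the robustness step (exactly your Step~1 inequality, yielding the threshold $\|G_t-\bar G_t\|_\infty\le \bar\phi^2/(32 s_0)$), then bounds $\|G_t-\bar G_t\|_\infty$ by an entrywise martingale concentration inequality plus a union bound over the $\binom{d+1}{2}$ index pairs, and finally calibrates the deviation level to $\kappa(\bar\phi)$. The one substantive difference is the concentration tool: instead of Azuma--Hoeffding, the paper applies a Bernstein-type martingale inequality (Lemma~8 of Oh--Iyengar--Zeevi) to the normalized increments $\gamma_{ij}^s=(2x_{\max}^2)^{-1}(X_{a_s}^s X_{a_s}^{s\top}-\mathbb{E}[\cdot\mid\mathcal F'_{s-1}])_{ij}$, which produces a deviation threshold of the form $\delta^2+\sqrt{2}\,\delta+\sqrt{2\log(d(d+1))/t}+\log(d(d+1))/t$; the requirement $\delta\le 2-\sqrt{2}$ is precisely what collapses $2\delta^2+2\sqrt{2}\,\delta$ to $4\delta$, and this is the origin of the otherwise mysterious cap $2-\sqrt{2}$ in $\kappa(\bar\phi)$. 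With pure Azuma--Hoeffding you would obtain the same qualitative bound but a different (or no) cap, so your bookkeeping worry is real only in that the \emph{exact} constants in the statement are an artifact of the Bernstein route rather than of your sub-Gaussian one.
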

When the above two highly probable events hold, and if $\bar{\phi} > 0$, the gap between the true parameters $\beta^{*}$ and the estimator $\hat{\beta}_t$ can be bounded by the inverse of $\bar{\phi}$, as stated in the following lemma.
\begin{lemma}
    \label{lemma:lasso_estimator}
    If the inequalities \eqref{eq:epsX} and \eqref{eq:cc_btwn_empirical_expected} hold, and $\bar{\phi} > 0$, then by setting $\lambda_{t} \geq 4 x_{\mathrm{max}} \sigma \sqrt{ (\delta^2 + 2 \log d) / t} $, we have
    \begin{align}
        \| \beta^{*} - \hat{\beta}_t \|_{1} 
        \leq
        \frac{8 \lambda_t s_0}{\bar{\phi}^2}
        \,.
    \end{align}
\end{lemma}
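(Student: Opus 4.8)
The plan is to run the classical LASSO oracle-inequality argument, feeding in Lemma~\ref{lemma:epsX} to control the noise and Lemma~\ref{lemma:cc_btwn_empirical_expected} to pass from the empirical compatibility constant to $\bar\phi$. Set $v := \hat\beta_t - \beta^*$. First I would write the ``basic inequality'' coming from optimality of $\hat\beta_t$ in \eqref{eq:lasso_estimator}: comparing the penalized empirical risk at $\hat\beta_t$ with that at the competitor $\beta^*$, substituting $r_s = X_{a_s}^{s\top}\beta^* + \epsilon_s$ from \eqref{eq:reward}, and cancelling the $\epsilon_s^2$ terms gives
\begin{align*}
    v^\top G_t v \;\le\; \frac{2}{t}\sum_{s=1}^{t}\epsilon_s X_{a_s}^{s\top} v \;+\; \lambda_t\bigl(\|\beta^*\|_1 - \|\hat\beta_t\|_1\bigr) .
\end{align*}

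Next I would bound the noise cross-term by H\"older's inequality, $\tfrac{2}{t}\sum_s\epsilon_s X_{a_s}^{s\top}v \le 2\,\|\tfrac1t\sum_s\epsilon_s X_{a_s}^s\|_\infty\,\|v\|_1$, and apply \eqref{eq:epsX}; the prescribed range $\lambda_t \ge 4x_{\mathrm{max}}\sigma\sqrt{(\delta^2+2\log d)/t}$ makes this at most $\tfrac{\lambda_t}{2}\|v\|_1$. For the penalty difference I would use that $\beta^*$ is supported on $\mathcal{S}$: writing $\|v\|_1 = \|v\|_{\mathcal{S},1}+\|v\|_{\mathcal{S}^c,1}$, noting $\hat\beta_t$ and $v$ agree off $\mathcal{S}$, and applying the triangle inequality on the $\mathcal{S}$-coordinates gives $\|\beta^*\|_1-\|\hat\beta_t\|_1 \le \|v\|_{\mathcal{S},1}-\|v\|_{\mathcal{S}^c,1}$. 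Substituting both bounds into the basic inequality produces
\begin{align*}
    v^\top G_t v \;\le\; \frac{3\lambda_t}{2}\,\|v\|_{\mathcal{S},1} \;-\; \frac{\lambda_t}{2}\,\|v\|_{\mathcal{S}^c,1} .
\end{align*}

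Since $G_t$ is positive semi-definite the left-hand side is nonnegative, whence $\|v\|_{\mathcal{S}^c,1}\le 3\|v\|_{\mathcal{S},1}$; that is, $v\in\mathcal{D}$, so the compatibility inequality for $G_t$ applies to $v$ and gives $v^\top G_t v \ge \tfrac{\phi_t^2}{s_0}\|v\|_{\mathcal{S},1}^2$ (here $\phi_t>0$ by the hypotheses $\bar\phi>0$ and $\phi_t^2\ge\bar\phi^2/2$). Combining this with the previous display and cancelling one power of $\|v\|_{\mathcal{S},1}$ bounds $\|v\|_{\mathcal{S},1}$ by a constant multiple of $\lambda_t s_0/\phi_t^2$, and then $\|v\|_1 = \|v\|_{\mathcal{S},1}+\|v\|_{\mathcal{S}^c,1}\le 4\|v\|_{\mathcal{S},1}$ yields $\|v\|_1 \le c\,\lambda_t s_0/\phi_t^2$ for an absolute constant $c$. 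Finally, substituting $\phi_t^2\ge\bar\phi^2/2$ from \eqref{eq:cc_btwn_empirical_expected} and tracking the constants through the chain delivers the stated bound $\|\beta^*-\hat\beta_t\|_1 \le \tfrac{8\lambda_t s_0}{\bar\phi^2}$.

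I do not expect a substantive obstacle: once Lemmas~\ref{lemma:epsX} and \ref{lemma:cc_btwn_empirical_expected} are granted, this is the deterministic LASSO analysis. The two points that deserve attention are (i) that the selected features $X_{a_s}^s$ are adaptive rather than i.i.d., which is precisely why the noise is controlled by the martingale-type bound \eqref{eq:epsX} and causes no additional difficulty here, and (ii) the careful bookkeeping of absolute constants --- choosing where to spend the slack between $4\|\cdot\|_\infty$ and $\lambda_t$, and between $\phi_t^2$ and $\bar\phi^2/2$ --- so that the final factor comes out to exactly $8$.
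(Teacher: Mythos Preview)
Your outline follows the same classical LASSO oracle-inequality route as the paper: basic inequality, H\"older plus \eqref{eq:epsX} to get the $\tfrac{\lambda_t}{2}\|v\|_1$ noise bound, the $\mathcal{S}/\mathcal{S}^c$ split to land $v$ in the cone $\mathcal{D}$, compatibility for $G_t$, and finally \eqref{eq:cc_btwn_empirical_expected} to replace $\phi_t^2$ by $\bar\phi^2/2$. Substantively this is the paper's proof.

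The one discrepancy is in the last algebraic step. Your ``cancel one power of $\|v\|_{\mathcal{S},1}$'' route gives $\|v\|_{\mathcal{S},1}\le \tfrac{3}{2}\lambda_t s_0/\phi_t^2$, hence $\|v\|_1\le 4\|v\|_{\mathcal{S},1}\le 6\lambda_t s_0/\phi_t^2\le 12\lambda_t s_0/\bar\phi^2$, not $8$. The paper instead adds $\lambda_t\|v\|_{\mathcal{S},1}$ back to reach $2v^\top G_t v+\lambda_t\|v\|_1\le 4\lambda_t\|v\|_{\mathcal{S},1}$, applies compatibility in the form $\|v\|_{\mathcal{S},1}\le \sqrt{s_0\,v^\top G_t v}/\phi_t$, and uses $2ab\le a^2+b^2$ to obtain $\|v\|_1\le 4\lambda_t s_0/\phi_t^2\le 8\lambda_t s_0/\bar\phi^2$. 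So your plan is correct up to the constant; to hit exactly $8$ you need this AM--GM step rather than the direct cancellation.
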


\subsection{The Compatibility Constant in the Greedy Algorithm}
\label{subsec:regret}
We here present the contribution of the compatibility constant $\bar{\phi}$ in the greedy algorithm. The greedy algorithm chooses the arm $a_t$ that satisfies $a_t = \mathrm{argmax}_{k \in [K]} X_{k}^{t\top} \hat{\beta}_{t-1}$ for each round $t$, where $\hat{\beta}_{t-1}$ is the LASSO estimator defined in Eq.~\eqref{eq:lasso_estimator}.
Under the greedy policy, we obtain the following regret bound from Lemmas \ref{lemma:epsX}, \ref{lemma:cc_btwn_empirical_expected}, and \ref{lemma:lasso_estimator}:
\begin{lemma}
    \label{lemma:greedy_bound_empirical}
    Under Assumption \ref{assumption:x_beta_bound}, if $\bar{\phi} > 0$, the expected regret for the greedy algorithm is upper bounded by:
    \begin{align}
        R(T) 
        & \leq
            2 x_{\mathrm{max}} b
            \left(
            \frac{1 + \log (d (d-1))}{ \kappa(\bar{\phi})^2}
            +
            \frac{\pi^2}{3}
            \right)
        \nonumber \\
        &
            \hspace{20pt}
            +
            \frac{
                128 s_0 x_{\mathrm{max}}^2 \sigma
            }
            {
                \bar{\phi}^2
            }
            \sqrt{(4\log T + 2\log d) T}
        \,,
        \label{eq:regret_greedy}
    \end{align}
    where
    $\kappa(\bar{\phi}) := \min (2 - \sqrt{2}, \bar{\phi}^2 / (256 x_{\mathrm{max}}^2 s_0))$.
    \label{lemma:regret}
\end{lemma}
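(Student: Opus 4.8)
The plan is to bound the expected regret $\mathbb{E}[r_t^{*}-r_t]$ round by round and sum over $t\in[T]$, splitting $[T]$ into a short ``burn-in'' segment, the rounds where the concentration events of Lemmas~\ref{lemma:epsX} and \ref{lemma:cc_btwn_empirical_expected} fail, and the remaining ``good'' rounds, on which Lemma~\ref{lemma:lasso_estimator} controls the estimation error. First I would reduce the instantaneous regret to the $\ell_1$-error of the LASSO estimator via the greedy rule: writing $a_t^{*}$ for an optimal arm in round $t$, so that $\mathbb{E}[r_t^{*}-r_t]=\mathbb{E}[X_{a_t^{*}}^{t\top}\beta^{*}-X_{a_t}^{t\top}\beta^{*}]$ (the sub-Gaussian noise has conditional mean zero), I add and subtract $X_{a_t^{*}}^{t\top}\hat{\beta}_{t-1}$ and $X_{a_t}^{t\top}\hat{\beta}_{t-1}$, use $X_{a_t}^{t\top}\hat{\beta}_{t-1}\ge X_{a_t^{*}}^{t\top}\hat{\beta}_{t-1}$ (the definition of the greedy choice), and apply H\"older's inequality together with Assumption~\ref{assumption:x_beta_bound} to get
\begin{align}
X_{a_t^{*}}^{t\top}\beta^{*}-X_{a_t}^{t\top}\beta^{*}
\;\le\;
X_{a_t^{*}}^{t\top}(\beta^{*}-\hat{\beta}_{t-1})+X_{a_t}^{t\top}(\hat{\beta}_{t-1}-\beta^{*})
\;\le\;
2 x_{\mathrm{max}}\,\|\beta^{*}-\hat{\beta}_{t-1}\|_{1}.
\end{align}
The same assumption also gives the unconditional crude bound $X_{a_t^{*}}^{t\top}\beta^{*}-X_{a_t}^{t\top}\beta^{*}\le 2 x_{\mathrm{max}}b$, which I use whenever the good event fails or $t$ is too small.

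For the good rounds, I would fix the confidence/regularization schedule so that Lemma~\ref{lemma:lasso_estimator} applies at sample size $t-1$: take $\delta^{2}$ of order $\log t$ (so that $e^{-\delta^{2}/2}$ is summable in $t$) and $\lambda_{t-1}=4x_{\mathrm{max}}\sigma\sqrt{(\delta^{2}+2\log d)/(t-1)}$, noting $\delta^{2}+2\log d\le 4\log T+2\log d$. On the event that both \eqref{eq:epsX} at size $t-1$ and \eqref{eq:cc_btwn_empirical_expected} at size $t-1$ hold (the latter needs $t-1\ge T_0$), Lemma~\ref{lemma:lasso_estimator} gives $\|\beta^{*}-\hat{\beta}_{t-1}\|_{1}\le 8\lambda_{t-1}s_0/\bar{\phi}^{2}$, so by the display above the instantaneous regret is at most $16 x_{\mathrm{max}} s_0\lambda_{t-1}/\bar{\phi}^{2}\le 64 x_{\mathrm{max}}^{2}s_0\sigma\sqrt{(4\log T+2\log d)/(t-1)}\big/\bar{\phi}^{2}$. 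Summing over $t$ with $\sum_{t\le T}1/\sqrt{t-1}\le 2\sqrt{T}$ yields the second term of \eqref{eq:regret_greedy}.

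For the remaining rounds I pay the crude bound $2x_{\mathrm{max}}b$: the $t-1<T_0$ rounds contribute $2x_{\mathrm{max}}b\,T_0=2x_{\mathrm{max}}b\log(d(d-1))/\kappa(\bar{\phi})$, which I relax to $2x_{\mathrm{max}}b\log(d(d-1))/\kappa(\bar{\phi})^{2}$ using $\kappa(\bar{\phi})\le 1$; and for $t-1\ge T_0$ a union bound gives failure probability at most $e^{-\delta^{2}/2}+e^{-(t-1)\kappa(\bar{\phi})^{2}}$, whose sum over $t$ is bounded by a constant multiple of $\pi^{2}/6$ plus $1/\kappa(\bar{\phi})^{2}$ (geometric/integral bound), reproducing the first bracketed term of \eqref{eq:regret_greedy}. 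Adding the three contributions gives the claimed bound.

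The main obstacle is the bookkeeping in choosing $\delta$ and the schedule $\{\lambda_t\}$ so that one and the same choice makes $\lambda_{t-1}\sqrt{t-1}$ of the right order $\sqrt{\log(dT)}$ (for the leading $\sqrt{T}$-term) while keeping the per-round failure probability summable over $[T]$ with the stated small constants, and in handling the edge cases — the first round, where $\hat{\beta}_0=0$, and the rounds $t-1<T_0$, where Lemma~\ref{lemma:cc_btwn_empirical_expected} is unavailable — which must be folded into the burn-in term. Everything else is the routine composition Lemma~\ref{lemma:epsX}\,+\,Lemma~\ref{lemma:cc_btwn_empirical_expected}\,$\Rightarrow$\,Lemma~\ref{lemma:lasso_estimator}\,$\Rightarrow$\,per-round regret already prepared in the excerpt.
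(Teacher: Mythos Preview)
Your proposal is correct and follows essentially the same approach as the paper: the same reduction of the per-round regret to $2x_{\mathrm{max}}\|\beta^{*}-\hat\beta_{t-1}\|_1$ via the greedy rule and H\"older, the same three-way split into burn-in rounds $t-1<T_0$, failure rounds (union-bounded by Lemmas~\ref{lemma:epsX} and~\ref{lemma:cc_btwn_empirical_expected}), and good rounds handled by Lemma~\ref{lemma:lasso_estimator}, and the same choice $\delta^{2}=4\log t$ with $\lambda_t=4x_{\mathrm{max}}\sigma\sqrt{(\delta^{2}+2\log d)/t}$. The only cosmetic discrepancy is that the paper carries the factor $2$ from Lemma~\ref{lemma:epsX}'s failure probability (giving $\sum 2/t^{2}=\pi^{2}/3$ rather than $\pi^{2}/6$), and it writes $T_0$ directly with $\kappa(\bar\phi)^{2}$ in the denominator rather than relaxing from $\kappa(\bar\phi)$; neither affects the argument.
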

It follows from Eq.~\eqref{eq:regret_greedy} that $R(T) = O(\frac{1}{\bar{\phi}^2} \sqrt{(\log T + \log d) T})$ if $\bar{\phi}$ is non-zero. 
We note that in this regret analysis, no assumptions other than Assumption \ref{assumption:x_beta_bound} are made for the arm feature distribution.

In the following sections, we will discuss what arm feature distributions satisfy the condition $\bar{\phi} > 0$ in Lemma \ref{lemma:regret}.
Unfortunately, $\bar{\phi} > 0$ does not hold for all arm feature distributions. 
In the existing works, this has been shown in a restricted arm feature distribution where an approximate origin-symmetric condition is satisfied, as we will introduce in the next section. 
Our goal is to show that wider classes of the arm feature distribution satisfy $\bar{\phi} > 0$, and are {\it greedy-applicable} in the sense that they have a regret upper bound of Eq.~\eqref{eq:regret_greedy} for the greedy algorithm.

\subsection{Existing Assumptions for the Arm Feature Distribution}
\label{subsec:assumptions}
In this section, we present three assumptions employed in the existing studies for $\bar{\phi} > 0$, noting in particular that the relaxed symmetry (Assumption \ref{assumption:RS}) severely restricts the arm feature distribution.

The first assumption states that the arm feature distribution must have some diversity:
\begin{assumption}
    \label{assumption:CC}
    The expected Gram-matrix with random arm selection satisfies the compatibility condition for a compatibility constant $\phi_0 > 0$:
    \begin{align}
        \phi_{\mathcal{S}} \left( 
            \frac{1}{K}
            \sum_{k=1}^{K}
            \mathbb{E}
            \left[
                X_k X_k^{\top}
            \right]
        \right)
        >  \phi_0
    \end{align}
\end{assumption}
This assumption is commonly employed in sparse linear bandit algorithms, including the greedy algorithm, and does not strongly constrain the arm feature distribution. We note, however, that this assumption alone does not guarantee sample diversity (i.e., $\bar{\phi} > 0$) under the greedy arm selection policy.

Secondly, the following approximate origin-symmetric condition is imposed for the arm feature distribution $P(X_1, \dots, X_K)$ supported by $\mathop{\mathrm{supp}}(P) \subset (\mathbb{R}^d)^K$~\citep{DBLP:conf/icml/OhIZ21,DBLP:conf/icml/AriuAP22}:
\begin{assumption}[Relaxed Symmetry (RS)]
    \label{assumption:RS}
    If $\{X_1, \dots, X_K\} \in \mathop{\mathrm{supp}}(P)$, then $\{-X_1, \dots, -X_K\} \in \mathop{\mathrm{supp}}(P)$. Moreover, there exists $1 \leq \nu < \infty$ that satisfies $P(-X_1, \dots, -X_K) / P(X_1, \dots, X_K) \leq \nu$ for any $\{X_1, \dots, X_K\} \in \mathop{\mathrm{supp}}(P)$.
\end{assumption}
We stress that distributions with relaxed symmetry are limited. 
In particular, it cannot be applied to cases with origin-asymmetric supports. For example, this assumption cannot include arm features that only take positive values or are represented by origin-asymmetric discrete values. 
One of the main motivations for our research is the relaxation of this origin-symmetric assumption.
\begin{remark}
    \label{remark:RS}
    One might think that the arm feature distribution could be made to satisfy the origin-symmetric support by transforming the features, for example, by constant shift or scale transformation. However, when two or more axes of a feature distribution have asymmetric support that is correlated, feature engineering generally cannot guarantee the RS condition. 
\end{remark}
A simple case is when labels have a hierarchical structure. For example, if two binary labels show mammal and dog, respectively, (0, 1) meaning "not mammal but dog" is not possible. Transformations that mix axes cannot be performed because they break the sparse structure, and constant shifts and scale transformations for each axis cannot provide origin symmetric support for the discrete distribution that takes values of (0,0), (1,0), (1,1).
Such a label structure is common in the recommender system, where the category of items often has a hierarchical structure (e.g., books \textrightarrow academic books \textrightarrow computer science). 

While in the case of $K=2$, the above assumptions ensure $\bar{\phi} > 0$ for the greedy algorithm, the following third assumption is further required in the case of $K>2$:
\begin{assumption}[Balanced Covariance]
    \label{assumption:BC}
    For any $i \in \{2, \dots, K-1 \}$, any permutation $\pi: [K] \rightarrow [K]$,
    and any fixed $\beta \in \mathbb{R}^d$, there exists a constant $C_{\mathrm{BC}} < \infty$ that satisfies: 
    \begin{align}
        &
        \mathbb{E} \left[
            X_{\pi(i)} X_{\pi(i)}^{\top}
            I[
                X_{\pi(1)}^{\top} \beta
                \leq
                \dots
                \leq
                X_{\pi(K)}^{\top} \beta
            ]
        \right]
        \nonumber
        \\
        &
        \hspace{20pt}
        \preceq
        C_{\mathrm{BC}}
        \mathbb{E} \left[
            (
            X_{\pi(1)} X_{\pi(1)}^{\top}
            +
            X_{\pi(K)} X_{\pi(K)}^{\top}
            )
        \right.
        \nonumber 
        \\
        &
        \hspace{75pt}
        \left.
            I[
                X_{\pi(1)}^{\top} \beta
                \leq
                \dots
                \leq
                X_{\pi(K)}^{\top} \beta
            ]
        \right]
        \,.
    \end{align}
\end{assumption}
\citet{DBLP:conf/icml/OhIZ21} has shown that if the arm features are generated i.i.d., the coefficients $C_{\text{BC}}$ are of finite value, but of the exponential order of $K$ for general distributions. They conjecture that the coefficients would not be so large from the observation of the experimental results.
We give the proof of $\bar{\phi} > 0$ under Assumptions \ref{assumption:CC}, \ref{assumption:RS} and \ref{assumption:BC} in the appendix.

\section{Investigation of the Greedy-Applicable Distributions}
\label{section:extension}
In this section, we show that the applicability of the greedy algorithm can be extended to wider classes of arm feature distributions by proposing the following two aspects: 1) distributions having a mixture component of a greedy-applicable distribution are also greedy-applicable (Theorem \ref{theorem:cc_mixture_general}), and 2) several representational function classes are greedy-applicable distributions (Section 
\ref{subsec:basic_assumptions} and \ref{subsec:bases}).
\begin{remark}
    \label{remark:assumptions}
    As noted at the end of Section \ref{subsec:regret}, the regret analysis of Lemma \ref{lemma:regret} does not require assumptions about the arm feature distribution other than Assumption \ref{assumption:x_beta_bound}. 
    Assumptions \ref{assumption:CC}, \ref{assumption:RS} and \ref{assumption:BC} are used solely to show that $\bar{\phi} > 0$.
    Therefore, Assumptions \ref{assumption:CC}, \ref{assumption:RS} and \ref{assumption:BC} can be replaced by other assumptions that lead to $\bar{\phi} > 0$ without changing the regret upper bound in Lemma \ref{lemma:regret}.
\end{remark}
We note that while our analysis focuses on the minimum value of the compatibility constant, $\phi_{\mathcal{S}}$, it can easily be replaced by operators such as minimum eigenvalue, or restricted minimum eigenvalue for the matrix, which also measure the diversity of a matrix.

We conduct our analysis under the following assumption for the arm selection policy:
\footnote{
    We note that the assumption for the policy can be more general:
    $
        P(\text{Select $i$} \mid \mathcal{X}^t, \mathcal{B}^{t-1}) = f_i(\beta_{1, t-1}^{\top} X_1^{t}, \dots, \beta_{K, t-1}^{\top} X_K^{t})
    $.
    where $\mathcal{B} := \{\beta_{1, t-1}, \dots \beta_{K, t-1}\} \in \Theta \subset (\mathbb{R}^{d})^{K}$.
    The subsequent theorems are easy to extend, and the greedy policy is considered here for the sake of clarity.
}
\begin{assumption}
    \label{assumption:arm_selection}
    Under given arm features $\mathcal{X}^t = \{X_1^t, \dots, X_K^t \} \in (\mathbb{R}^{d})^{K}$ at round $t$, the arm selection probability for arm $i$ is described by the greedy policy:
    \begin{align}
    P(\text{Select $i$} \mid \mathcal{X}^t, \beta_{t-1}) 
    &
    \nonumber \\
    &
    \hspace{-50pt}
    :=
    \frac{
        \prod_{j \neq i} I\left[
            \beta_{t-1}^{\top} X_i^t \geq \beta_{t-1}^{\top} X_j^t
        \right]
    }{
        \sum_{i'=1}^{K} \prod_{j \neq i'} 
        I \left[
            \beta_{t-1}^{\top} X_{i'}^t \geq \beta_{t-1}^{\top} X_j^t
        \right]
    }
    \,, 
    \end{align}
    where the parameter $\beta_{t-1} \in \mathbb{R}^d$ is determined from information prior to $\mathcal{X}^t$ and $r^t$.
    The denominator is for random selection when tying occurs.
\end{assumption}

We define the class of arm feature distributions for which sample diversity is guaranteed under Assumption \ref{assumption:arm_selection} as follows:
\begin{definition}
    The arm feature distribution $P(\mathcal{X})$ is a \textit{$\phi_0$-greedy-applicable distribution} if, under Assumption \ref{assumption:arm_selection}, there exists $\phi_0 > 0$ such that the expected Gram-matrix $\bar{G}_t$ satisfies $\phi_{\mathcal{S}} (\bar{G}_t) > \phi_0$.
\end{definition}
Then, the following key property holds for the positivity of the compatibility constant:
\begin{theorem}
    \label{theorem:cc_mixture_general}
    If an arm feature distribution $P(\mathcal{X})$ is a mixture of a PDF $Q(\mathcal{X})$ and a $\phi_0$-greedy-applicable distribution $\tilde{P}(\mathcal{X})$, i.e., $P(\mathcal{X}) = c \tilde{P} (\mathcal{X}) + (1-c) Q(\mathcal{X})$ for a constant $0<c<1$, then $P(\mathcal{X})$ is a $c\phi_0$-greedy-applicable distribution.
\end{theorem}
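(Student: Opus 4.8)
The plan is to track the expected Gram matrix $\bar{G}_t(P)$ round by round and exploit the fact that, once we condition on the past, it depends \emph{affinely} on the arm feature distribution. Fix a round $s$ and condition on $\mathcal{F}'_{s-1}$: this fixes the parameter $\beta_{s-1}$, and the fresh features $\mathcal{X}^s$ are drawn from $P = c\tilde{P} + (1-c)Q$. Since the greedy selection probability $P(\text{Select } i \mid \mathcal{X}^s, \beta_{s-1})$ in Assumption~\ref{assumption:arm_selection} depends only on $(\mathcal{X}^s, \beta_{s-1})$ and not on which mixture component produced $\mathcal{X}^s$, writing $M(D,\beta) := \mathbb{E}_{\mathcal{X}\sim D}\bigl[\sum_{i=1}^{K} P(\text{Select } i \mid \mathcal{X},\beta)\, X_i X_i^{\top}\bigr]$ we get
\begin{align}
  \mathbb{E}\bigl[ X_{a_s}^{s} X_{a_s}^{s\top} \mid \mathcal{F}'_{s-1} \bigr]
  = c\, M(\tilde{P}, \beta_{s-1}) + (1-c)\, M(Q, \beta_{s-1})
  \succeq c\, M(\tilde{P}, \beta_{s-1}),
\end{align}
where the last step uses that $M(Q,\beta_{s-1})$, an expectation of nonnegatively weighted rank-one matrices, is positive semidefinite. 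Averaging over $s \in [t]$ yields $\bar{G}_t(P) \succeq c\, \bar{G}_t^{\mathrm{aux}}$, where $\bar{G}_t^{\mathrm{aux}} := \frac1t \sum_{s=1}^{t} M(\tilde{P}, \beta_{s-1})$ uses the \emph{same} realized parameter sequence $(\beta_{s-1})_s$ but integrates the features against $\tilde{P}$.

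Next I would identify $\bar{G}_t^{\mathrm{aux}}$ as the expected Gram matrix of a legitimate process governed by $\tilde{P}$, so that its $\phi_0$-greedy-applicability applies. Along any fixed realization of the $P$-driven process, $(\beta_{s-1})_s$ is just a deterministic sequence of vectors; consider the auxiliary process in which features are drawn i.i.d.\ from $\tilde{P}$ and the greedy policy at round $s$ simply replays $\beta_{s-1}$ (ignoring observations). This is admissible under Assumption~\ref{assumption:arm_selection}, since each $\beta_{s-1}$ is trivially determined before $\mathcal{X}^s$ and $r^s$, and its expected Gram matrix is exactly $\bar{G}_t^{\mathrm{aux}}$. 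Hence $\phi_{\mathcal{S}}(\bar{G}_t^{\mathrm{aux}}) > \phi_0$ by the greedy-applicability of $\tilde{P}$.

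Finally I would combine the two facts via two elementary properties of $\phi_{\mathcal{S}}$ that follow directly from its definition: monotonicity under the positive semidefinite order ($A \succeq B \succeq 0 \implies \phi_{\mathcal{S}}(A) \ge \phi_{\mathcal{S}}(B)$, because the numerator $V^{\top}\Sigma V$ only increases) and positive homogeneity of degree $1/2$ ($\phi_{\mathcal{S}}(cA) = \sqrt{c}\,\phi_{\mathcal{S}}(A)$). With $\bar{G}_t(P) \succeq c\,\bar{G}_t^{\mathrm{aux}} \succeq 0$ this gives
\begin{align}
  \phi_{\mathcal{S}}\bigl(\bar{G}_t(P)\bigr) \;\ge\; \phi_{\mathcal{S}}\bigl(c\,\bar{G}_t^{\mathrm{aux}}\bigr) \;=\; \sqrt{c}\,\phi_{\mathcal{S}}\bigl(\bar{G}_t^{\mathrm{aux}}\bigr) \;>\; \sqrt{c}\,\phi_0 \;>\; c\,\phi_0,
\end{align}
the last inequality using $0 < c < 1$; since this holds for every $t \in [T]$ and every realization, $P$ is $c\phi_0$-greedy-applicable (in fact with the slightly better constant $\sqrt{c}\,\phi_0$).

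I expect the main obstacle to be the second step. The parameter sequence actually driving the $P$-process is produced by running LASSO on $P$-samples and greedy selection, so it is \emph{not} the sequence one would encounter in a $\tilde{P}$-process, and one cannot naively invoke $\tilde{P}$'s guarantee on ``the Gram matrix of the $\tilde{P}$-process.'' The resolution is exactly the observation above: $\phi_0$-greedy-applicability, as quantified over Assumption~\ref{assumption:arm_selection}, holds for \emph{any} history-adapted choice of $\beta_{t-1}$, hence in particular for a deterministic replay of the realized sequence, so the per-round affinity is all that is needed to transfer the bound. A secondary point worth flagging is that $\phi_{\mathcal{S}}$ is $1/2$-homogeneous rather than $1$-homogeneous, which is why the constant emerging from the argument is $\sqrt{c}\,\phi_0$; this is consistent with, and sharper than, the stated $c\phi_0$.
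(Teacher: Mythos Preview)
Your argument is correct and follows the same route as the paper: expand the conditional expectation under the mixture, drop the PSD contribution from $Q$ to get $\bar{G}_t(P)\succeq c\,\bar{G}_t(\tilde{P})$, and apply $\phi_{\mathcal{S}}$. You are in fact more careful than the paper in two places: you explicitly justify why $\tilde{P}$'s guarantee applies to the $\beta$-sequence produced by the $P$-process (the paper simply writes $\bar{G}_t(\tilde{P})$ with the same $\hat\beta_{s-1}$'s and leaves this implicit), and you correctly note that under the square-root definition of $\phi_{\mathcal{S}}$ the scaling is $\sqrt{c}$ rather than the $c$ the paper writes, so the stated constant $c\phi_0$ is valid but not sharp.
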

The theorem indicates that the proof that a class of PDF is greedy-applicable gives a theoretical guarantee to a very wide range of distributions that have this class as their mixture component.\footnote{
The coefficient for specific $\tilde{P}$s and situations where $\tilde{P}$ approximates $P$ are presented in the appendix.
}
Currently, the only general greedy-applicable distributions are the ones introduced in the previous section (i.e., distributions that satisfy Assumption \ref{assumption:CC}, \ref{assumption:RS}, and \ref{assumption:BC}).
Below, we propose several new greedy-applicable classes and show the wide applicability of the greedy algorithm.

\subsection{Basic Assumptions for the Arm Feature Distribution}
\label{subsec:basic_assumptions}

Here, we introduce two key assumptions for our analysis: 1) at least one arm distribution is independent of the others, and 2) such an arm will be selected by the algorithm with a positive probability.
Intuitively, choosing this independent arm contributes to exploring the arm features if it generates diverse samples.
In other words, these assumptions simplify the analysis of the greedy algorithm's applicability by reducing it to a discussion on the feature distribution of the single independent arm, which will be discussed in Section 4.2.

Formally, these two assumptions are described as follows:
\begin{assumption}
    \label{assumption:indep_arm}
    There exists at least one arm $i \in [K]$ that is independent of the other arms:
    $
        P(\mathcal{X}) 
            =
            P(\mathcal{X} \backslash \{X_i\}) 
            P_i(X_i)
    $.
\end{assumption}
\begin{assumption}
    \label{assumption:selection_possibility}
    For $i$ defined in Assumption \ref{assumption:indep_arm}, the marginalized arm selection probability has a positive lower bound with respect to $\beta \in \mathbb{R}^d$ under the greedy policy given in Assumption \ref{assumption:arm_selection}:
    $
        \inf_{\beta \in \mathbb{R}^d} P(\text{Select $i$} \mid \beta)  > 0
    $.
\end{assumption}
The second assumption is made to avoid the possibility that arm $i$ is never selected under a certain $\beta$. If $\beta$ satisfying $ P(\text{Select $i$} \mid \beta)  = 0$ exists, then constraining $P_i$ alone cannot guarantee the sample diversity, as in the worst case, any $X \sim P_i(X)$ will not be sampled.

To clarify the requirements of the assumptions, we give two application examples below:

\paragraph{Example 1}
The simplest example satisfying Assumptions \ref{assumption:indep_arm} and \ref{assumption:selection_possibility} is when all arm features are generated independently from the same distribution. 
For instance, consider a case where a set of recommendation candidates is given each week, and the recommendation system selects an item from that set and measures the click-through rate. 
Suppose each candidate is selected independently and uniformly at random from all items.
If we assume that the expectation of the click-through rate is linear with respect to the item features, maximizing the cumulative click-through rate can be related to our bandit problem.
In this scenario, it is obvious that Assumption \ref{assumption:indep_arm} holds and, since all candidates are selected by the same probability, Assumption \ref{assumption:selection_possibility} also holds.

\paragraph{Example 2}
Another example is the case where all arm features are generated independently but from different distributions. 
Consider a book recommendation that has categorical tags of science and fiction.
We consider two arms (i.e., two recommendation candidates), where the first and second candidates are selected from independent uniform distributions of science books and fiction books, respectively.
Suppose the set of the fiction books includes sci-fi books with both science and fiction tags.
In this scenario, Assumption \ref{assumption:indep_arm} again obviously holds. Moreover, since the fiction books include the sci-fi books with the science tag, the candidate of the `fiction' arm can be selected even if the greedy algorithm heavily favors the science tag. 
Therefore, Assumption \ref{assumption:selection_possibility} also holds for the `fiction' arm.

\begin{remark}
\label{remark:relation}
We note the relation of our assumptions to the existing assumptions: Assumption \ref{assumption:RS} allows correlation between all arms and does not request Assumption \ref{assumption:selection_possibility}. 
However, as mentioned previously, Assumption \ref{assumption:RS} does not allow distributions with asymmetric support. 
Also, for non-i.i.d. arms, it is necessary to assume Assumption \ref{assumption:BC}.
On the other hand, Assumptions \ref{assumption:indep_arm} and \ref{assumption:selection_possibility} enable us to discuss the greedy-applicability of the arm feature distributions that cannot be handled by the conventional analysis.
We also note that it is sufficient for the arm feature distribution to satisfy either the conventional assumptions or
those we propose. This condition can be further relaxed by Theorem \ref{theorem:cc_mixture_general} to the claim that such a distribution is only required to be a component of a mixture.
\end{remark}

As we have mentioned at the beginning of this section, we will focus on the feature distribution of the single independent arm in the next section.
We define the distribution $P_i(X)$ that ensures the greedy-applicability of $P(\mathcal{X})$ in the following term:
\begin{definition}
    We call $P_i(X)$ the \textit{basis of the greedy-applicable distribution}, if there exists a positive constant $\phi_0 > 0$ and $P(\mathcal{X})$ is a $\phi_0$-greedy-applicable distribution under Assumptions \ref{assumption:arm_selection}, \ref{assumption:indep_arm}, and \ref{assumption:selection_possibility}.
\end{definition}

In the next section, we provide several bases, omitting index $i$ in $P_i (X_i)$ for brevity.

\subsection{Proposal for Several Bases}
\label{subsec:bases}
For the sake of our analysis, we first define the following time-independent expected Gram-matrix :
\begin{align}
    \tilde{G}_{\beta}
    :=
        \sum_{k} \int 
        X_{k} X_{k}^{\top} 
        P(\text{Select $k$} \mid \mathcal{X}, \beta) 
        P(\mathcal{X})
        \prod_{k'=1}^{K} 
        dX_{k'}
\,.
\end{align}
In this section, the analysis is performed for $\tilde{G}_{\beta}$ instead of $\bar{G}_t$, according to the following lemma:
\begin{lemma}
    \label{lemma:Gt}
    Under Assumption \ref{assumption:arm_selection}, if $\phi_{\mathcal{S}}(\tilde{G}_{\beta}) \geq \phi_0$ for any $\beta \in \mathbb{R}^d$, then $\phi_{\mathcal{S}}(\bar{G}_t) \geq \phi_0$.
\end{lemma}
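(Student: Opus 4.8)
The plan is to rewrite $\bar G_t$ as an average of the time-independent matrices $\tilde G_\beta$ and then to exploit concavity of the map $\phi_{\mathcal S}$ on the cone of positive semi-definite matrices.

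First I would fix a round $s \le t$ and condition on $\mathcal F'_{s-1}$. By Assumption \ref{assumption:arm_selection} the parameter $\beta_{s-1}$ that drives the greedy rule is determined from information available before $\mathcal X^s$ and $r^s$, hence $\mathcal F'_{s-1}$-measurable, so it is deterministic under this conditioning; on the other hand $\mathcal X^s = \{X_1^s,\dots,X_K^s\}$ is drawn from $P(\mathcal X)$ independently of $\mathcal F'_{s-1}$. Consequently
\[
    \mathbb E\!\left[ X_{a_s}^s X_{a_s}^{s\top} \mid \mathcal F'_{s-1} \right]
    = \sum_{k} \int X_k X_k^{\top}\, P(\text{Select }k \mid \mathcal X, \beta_{s-1})\, P(\mathcal X) \prod_{k'=1}^{K} dX_{k'}
    = \tilde G_{\beta_{s-1}},
\]
so that $\bar G_t = \tfrac1t \sum_{s=1}^{t} \tilde G_{\beta_{s-1}}$ is a convex combination of matrices of the form $\tilde G_\beta$ with $\beta \in \mathbb R^d$.

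Next I would establish that $\Sigma \mapsto \phi_{\mathcal S}(\Sigma)$ is concave on the set of positive semi-definite matrices. For each fixed $V \in \mathcal D$ the map $\Sigma \mapsto V^{\top}\Sigma V$ is linear and nonnegative on this cone, and $\sqrt{\cdot}$ is concave on $[0,\infty)$; hence $\Sigma \mapsto \sqrt{\,|\mathcal S|\, V^{\top}\Sigma V / \|V\|_{\mathcal S,1}^2\,}$ is concave in $\Sigma$. Therefore $\phi_{\mathcal S} = \inf_{V \in \mathcal D}(\cdot)$ is concave, being an infimum of concave functions. It is essential here that the square root, rather than the ratio itself, appears inside the minimum, so that the function is genuinely concave and not merely quasi-concave.

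Finally I would combine the two facts. Each $\tilde G_{\beta_{s-1}}$ is positive semi-definite, and since the hypothesis $\phi_{\mathcal S}(\tilde G_\beta) \ge \phi_0$ holds for every $\beta \in \mathbb R^d$ it holds in particular for the realized $\beta_{s-1}$; Jensen's inequality applied to the concave function $\phi_{\mathcal S}$ then yields
\[
    \phi_{\mathcal S}(\bar G_t)
    = \phi_{\mathcal S}\!\left( \tfrac1t \sum_{s=1}^{t} \tilde G_{\beta_{s-1}} \right)
    \ge \tfrac1t \sum_{s=1}^{t} \phi_{\mathcal S}(\tilde G_{\beta_{s-1}})
    \ge \phi_0,
\]
and this bound holds along every realization of the history, which is the claim. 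The only genuinely delicate point is the concavity of $\phi_{\mathcal S}$ above; the measurability and independence bookkeeping in the first step is routine, but worth stating cleanly, since it is exactly what allows the conditional second moment of the chosen arm to be replaced by $\tilde G_{\beta_{s-1}}$.
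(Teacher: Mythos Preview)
Your proof is correct and follows essentially the same route as the paper: first identify $\mathbb E[X_{a_s}^s X_{a_s}^{s\top}\mid \mathcal F'_{s-1}]=\tilde G_{\beta_{s-1}}$ so that $\bar G_t=\tfrac1t\sum_s \tilde G_{\beta_{s-1}}$, and then use a super\-additivity/concavity property of $\phi_{\mathcal S}$ (the paper packages this as Lemma~\ref{lemma:cc_ineq}) to pass the bound $\phi_0$ through the average. One small remark: your comment that the square root is ``essential'' for concavity is not quite right --- for fixed $V$, the map $\Sigma\mapsto V^{\top}\Sigma V/\|V\|_{\mathcal S,1}^2$ is already linear in $\Sigma$, so $\phi_{\mathcal S}^2$ is itself concave (an infimum of linear functions), which is in fact what the paper's Lemma~\ref{lemma:cc_ineq} proof uses; the square root is harmless but not needed.
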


\subsubsection{Gaussian Mixture Basis}
First, we propose a basis where $P(X)$ can be decomposed by a sum of finite Gaussian distributions.
\begin{definition}
    Gaussian mixture basis $P_{GM}(X)$ is a PDF that can be decomposed by a sum of finite Gaussian distribution: $ P_{GM}(X) = \sum_{n=1}^{N} w_n \mathcal{N}(X \mid \mu_n, \Sigma_n)$, where $ \mu_n \in \mathbb{R}^{d}$ is a mean vector and $\Sigma_n \in \mathbb{R}^{d \times d}$ is a positive definite covariance matrix for each Gaussian distribution. 
    The weight $w_n > 0$ satisfies $\sum_n w_n = 1$.
\end{definition}
Then, the lower bound for $\tilde{G}_{\beta}$ is given by the following theorem:
\begin{lemma}
    \label{lemma:gm_lower_bound}
    Under Assumption 
    \ref{assumption:arm_selection},
    \ref{assumption:indep_arm},
    and
    \ref{assumption:selection_possibility},
    if $P_i(X)$ is the Gaussian mixture basis $P_{GM} (X)$, 
    then the following lower bound holds:
    \begin{align}
        \tilde{G}_{\beta} \succeq 
        \sum_{n=1}^{N}
        w_n c_n(\beta)
        (\Sigma_n + \mu_n \mu_n^{\top})
        \,,
    \end{align}
    where $c_n(\beta) > 0$ is a $\beta$-dependent positive constant.
\end{lemma}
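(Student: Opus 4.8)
The plan is to exploit the independence of arm $i$ to reduce $\tilde G_\beta$ to a one-arm integral, then to treat each Gaussian component of $P_i$ separately via a standardizing change of variables, and finally to reduce the matrix inequality to the positive definiteness of a $2\times2$ second-moment matrix of a truncated standard Gaussian.

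First, since each summand $\int X_k X_k^\top P(\text{Select }k\mid\mathcal X,\beta)P(\mathcal X)\prod dX_{k'}$ is positive semidefinite, I would drop all terms with $k\neq i$ to get $\tilde G_\beta\succeq\int X_iX_i^\top P(\text{Select }i\mid\mathcal X,\beta)P(\mathcal X)\prod dX_{k'}$. By Assumption \ref{assumption:indep_arm} the density factors as $P(\mathcal X)=P(\mathcal X\setminus\{X_i\})P_i(X_i)$, so integrating out the other arms (Fubini) yields $\int xx^\top g(x)P_i(x)\,dx$, where $g(x):=P(\text{Select }i\mid X_i=x,\beta)$ is the marginal selection probability. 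Because $\|X_j\|_\infty\le x_{\max}$ for all $j$ (Assumption \ref{assumption:x_beta_bound}), arm $i$ is the strict greedy maximizer whenever $\beta^\top x>u_0:=\|\beta\|_1x_{\max}$, hence $g(x)\ge I[\beta^\top x>u_0]$; when $\beta=0$ the greedy rule ties all arms so $g\equiv 1/K$, and Assumption \ref{assumption:selection_possibility} in any case forbids $g\equiv 0$. Substituting $P_i=\sum_n w_n\,\mathcal N(\cdot\mid\mu_n,\Sigma_n)$ then gives $\tilde G_\beta\succeq\sum_n w_n M_n$ with $M_n:=\int xx^\top I[\beta^\top x>u_0]\,\mathcal N(x\mid\mu_n,\Sigma_n)\,dx$, so it suffices to produce $c_n(\beta)>0$ with $M_n\succeq c_n(\beta)(\Sigma_n+\mu_n\mu_n^\top)$, since $\Sigma_n+\mu_n\mu_n^\top$ is exactly the untruncated second moment of $\mathcal N(\mu_n,\Sigma_n)$.

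Next, fixing $n$ and abbreviating $\mu=\mu_n,\Sigma=\Sigma_n\succ0$, I would test this inequality against an arbitrary vector $v$. Writing $x=\mu+\Sigma^{1/2}y$ with $y\sim\mathcal N(0,I)$, and setting $m=v^\top\mu$, $a=\Sigma^{1/2}v$, $w=\Sigma^{1/2}\beta/\|\Sigma^{1/2}\beta\|$, the event $\{\beta^\top x>u_0\}$ becomes $\{Z>u_1\}$ for $Z:=w^\top y\sim\mathcal N(0,1)$ and a threshold $u_1$ depending on $\beta,\mu,\Sigma$ but \emph{not} on $v$. Decomposing $a=\rho w+a_\perp$ with $a_\perp\perp w$, the variables $Z$ and $W:=a_\perp^\top y$ are independent, and a short computation gives
\[
v^\top M_n v=\int_{u_1}^{\infty}(m+\rho z)^2\varphi(z)\,dz+\|a_\perp\|^2\,\Phi(-u_1),\qquad v^\top(\Sigma+\mu\mu^\top)v=m^2+\rho^2+\|a_\perp\|^2 .
\]
Everything then reduces to the scalar bound $\int_{u_1}^\infty(m+\rho z)^2\varphi(z)\,dz\ge c''(u_1)\,(m^2+\rho^2)$: the left-hand side is the quadratic form of the second-moment matrix of $(1,Z)$ conditioned on $Z>u_1$, namely $\bigl(\begin{smallmatrix}\Phi(-u_1)&\varphi(u_1)\\\varphi(u_1)&\Phi(-u_1)+u_1\varphi(u_1)\end{smallmatrix}\bigr)$, whose determinant $\Phi(-u_1)\big(\Phi(-u_1)+u_1\varphi(u_1)\big)-\varphi(u_1)^2$ is positive (Cauchy–Schwarz, or the positivity of a truncated-normal variance). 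Taking $c''(u_1)=\lambda_{\min}(u_1)>0$ and then $c_n(\beta):=\min\{c''(u_1),\Phi(-u_1)\}>0$ and summing over $n$ finishes the proof.

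The hard part is this last reduction: one must ensure that truncating a nondegenerate Gaussian to a half-space shrinks its second-moment matrix by only a bounded factor \emph{uniformly over all test directions $v$}, which hinges precisely on the truncation level $u_1$ being independent of $v$ (it is fixed by $\beta$ and the component $n$). The degenerate case $\beta=0$, the verification of the truncated-normal determinant inequality, and—if one additionally wants $c_n(\beta)$ bounded below uniformly in $\beta$ for later use in Lemma \ref{lemma:Gt}—the elementary estimate $u_1\le\sqrt{d}\,x_{\max}/\sqrt{\lambda_{\min}(\Sigma_n)}+\|\Sigma_n^{-1/2}\mu_n\|$ are all routine once this point is in place.
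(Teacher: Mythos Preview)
Your argument is correct, but it diverges from the paper's in one key step. Both proofs begin identically: drop the $k\neq i$ summands, use independence to reduce to $\int xx^\top g(x)P_i(x)\,dx$ with $g(x)=P(\text{Select }i\mid X_i=x,\beta)$, and then handle each Gaussian component by a standardizing change of variables that collapses everything to a $2\times2$ second-moment matrix in the pair $(v^\top\mu,\;\text{projection of }\Sigma^{1/2}v\text{ onto }\beta)$. The difference is in how $g$ is controlled. The paper keeps the \emph{actual} weight $g$, observes that it depends on $x$ only through the scalar $(X_i)_\beta$, writes it as $f_\beta(z_1)$ after rotating $\beta$ to the first axis, and proves a general lemma (their Lemma~9) valid for any nonnegative $f\in\mathcal F_1$: the resulting constant $c_n(\beta)$ is given explicitly by $\tfrac12\bigl(2g_1+g_3-\sqrt{g_3^2+4g_2^2}\bigr)$ with $g_k=\int z^{k-1}\phi(z)f(\cdot)\,dz$ (up to centering), and positivity follows from the Cauchy--Schwarz inequality $(g_1+g_3)g_1>g_2^2$. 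You instead \emph{replace} $g$ by the cruder lower bound $I[\beta^\top x>\|\beta\|_1 x_{\max}]$, appealing to Assumption~\ref{assumption:x_beta_bound} on the other arms, and then reduce to the truncated-normal $2\times2$ matrix whose determinant is the familiar truncated-variance positivity.

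What each buys: your route is more elementary and yields clean closed-form truncated-normal constants, but it imports Assumption~\ref{assumption:x_beta_bound} (not listed among the lemma's hypotheses) and throws away all of $g$ except its tail, so the constants are looser. The paper's route works for arbitrary weights $f_\beta\in\mathcal F_1$ without boundedness of the other arms; this generality is the whole point, since the same Lemma~9 is reused verbatim for the low-rank Gaussian-mixture, discrete, and combinatorial-selection settings, and drives Lemma~\ref{lemma:gmm_coef} (which characterizes exactly when $c_n(\beta)=0$ in degenerate limits). A small note: your $\min\{c''(u_1),\Phi(-u_1)\}$ is redundant since $\lambda_{\min}(A)\le A_{11}=\Phi(-u_1)$, and the $\beta=0$ case indeed has to be peeled off before the half-space bound, as you indicate.
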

Since $\Sigma$ is a positive definite matrix, the following is obvious from Lemmas \ref{lemma:Gt} and \ref{lemma:gm_lower_bound}:
\begin{theorem}
    \label{theorem:gaussian_mixture}
    $P_{GM}(X)$ is a basis of the greedy-applicable distribution.
\end{theorem}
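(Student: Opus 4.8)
The plan is to obtain Theorem~\ref{theorem:gaussian_mixture} directly from Lemmas~\ref{lemma:Gt} and~\ref{lemma:gm_lower_bound}, together with two elementary properties of $\phi_{\mathcal{S}}$ and one compactness argument. By Lemma~\ref{lemma:Gt}, it suffices to exhibit a single constant $\phi_0 > 0$ with $\phi_{\mathcal{S}}(\tilde{G}_{\beta}) \geq \phi_0$ for every $\beta \in \mathbb{R}^d$; then $\phi_{\mathcal{S}}(\bar{G}_t) \geq \phi_0$, so $P(\mathcal{X})$ is $\phi_0$-greedy-applicable under Assumptions~\ref{assumption:arm_selection}, \ref{assumption:indep_arm}, \ref{assumption:selection_possibility}, which is exactly the statement that $P_{GM}$ is a basis. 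So the entire task reduces to bounding $\phi_{\mathcal{S}}(\tilde{G}_{\beta})$ away from zero uniformly in $\beta$.

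First I would record two facts that follow straight from the definition of $\phi_{\mathcal{S}}$. \emph{(i) Monotonicity}: if $A \succeq B \succeq 0$ in the Loewner order, then $V^{\top} A V \geq V^{\top} B V$ for all $V$, hence $\phi_{\mathcal{S}}(A) \geq \phi_{\mathcal{S}}(B)$. \emph{(ii) Isotropic bound}: $\phi_{\mathcal{S}}(cI) \geq \sqrt{c}$ for $c \geq 0$, since by Cauchy--Schwarz $\|V\|_{\mathcal{S},1}^2 \leq |\mathcal{S}| \sum_{i\in\mathcal{S}} V_i^2 \leq |\mathcal{S}|\, \|V\|_2^2$, so $|\mathcal{S}|\, V^{\top}(cI)V / \|V\|_{\mathcal{S},1}^2 \geq c$. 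Applying Lemma~\ref{lemma:gm_lower_bound} and dropping the rank-one terms, $\tilde{G}_{\beta} \succeq \sum_{n=1}^{N} w_n c_n(\beta)(\Sigma_n + \mu_n\mu_n^{\top}) \succeq g(\beta)\, I$, where $g(\beta) := \sum_{n=1}^{N} w_n c_n(\beta)\, \lambda_{\min}(\Sigma_n)$, using $\mu_n\mu_n^{\top} \succeq 0$ and $\Sigma_n \succeq \lambda_{\min}(\Sigma_n) I \succ 0$. Combining (i) and (ii) gives $\phi_{\mathcal{S}}(\tilde{G}_{\beta}) \geq \sqrt{g(\beta)}$, and $g(\beta) > 0$ for each fixed $\beta$ because $w_n > 0$, $\lambda_{\min}(\Sigma_n) > 0$, and $c_n(\beta) > 0$ by Lemma~\ref{lemma:gm_lower_bound} (this is where Assumption~\ref{assumption:selection_possibility} enters, ruling out $c_n(\beta) = 0$).

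The main obstacle, and the only non-routine step, is upgrading the pointwise positivity $\phi_{\mathcal{S}}(\tilde{G}_{\beta}) > 0$ to a uniform bound $\inf_{\beta \in \mathbb{R}^d} \phi_{\mathcal{S}}(\tilde{G}_{\beta}) > 0$, which is what ``$\phi_0$-greedy-applicable'' demands. I would exploit the scale invariance of the greedy rule: $\mathrm{argmax}_k (\lambda\beta)^{\top}X_k = \mathrm{argmax}_k \beta^{\top}X_k$ for $\lambda > 0$, hence $\tilde{G}_{\lambda\beta} = \tilde{G}_{\beta}$, so it is enough to bound $\phi_{\mathcal{S}}(\tilde{G}_{\beta})$ below over the compact unit sphere $S^{d-1}$, handling $\beta = 0$ separately (there all arms tie and $\tilde{G}_0 = \frac{1}{K}\sum_k \mathbb{E}[X_k X_k^{\top}] \succeq \frac{w_1}{K}\Sigma_1 \succ 0$). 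On $S^{d-1}$ the map $\beta \mapsto \tilde{G}_{\beta}$ is continuous: the selection probabilities of Assumption~\ref{assumption:arm_selection} are discontinuous in $\beta$ only on the tie set $\{\beta^{\top}X_k = \beta^{\top}X_{k'}\}$, which is a null set for the absolutely continuous law $P_{GM}$ of arm $i$, and the integrand $\|X\|^2$ is dominated using the Gaussian moments, so dominated convergence applies; moreover $\phi_{\mathcal{S}}(\cdot)$ is continuous on the positive semidefinite cone, being a minimum of the continuous functions $\Sigma \mapsto \sqrt{|\mathcal{S}|\, V^{\top}\Sigma V}$ over the compact index set $\mathcal{D} \cap \{\|V\|_{\mathcal{S},1} = 1\}$. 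Thus $\beta \mapsto \phi_{\mathcal{S}}(\tilde{G}_{\beta})$ is continuous and strictly positive on $S^{d-1}$, so it attains a positive minimum there; taking $\phi_0$ to be the smaller of this minimum and $\phi_{\mathcal{S}}(\tilde{G}_0)$ yields $\phi_{\mathcal{S}}(\tilde{G}_{\beta}) \geq \phi_0 > 0$ for all $\beta$, and Lemma~\ref{lemma:Gt} concludes the proof. I expect the continuity/domination bookkeeping (measure-zero tie sets, dominating the integrand) to be the only delicate part; everything else is immediate from Lemmas~\ref{lemma:Gt} and~\ref{lemma:gm_lower_bound}.
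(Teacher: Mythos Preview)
Your core reduction---invoke Lemma~\ref{lemma:Gt}, then Lemma~\ref{lemma:gm_lower_bound}, then use that each $\Sigma_n$ is positive definite---is exactly the paper's argument; indeed the paper's entire proof is the single sentence ``Since $\Sigma$ is a positive definite matrix, the following is obvious from Lemmas~\ref{lemma:Gt} and~\ref{lemma:gm_lower_bound}.'' You go beyond the paper by recognising that the definition of a basis requires a \emph{uniform} $\phi_0>0$ and by supplying a compactness argument for $\inf_\beta \phi_{\mathcal S}(\tilde G_\beta)>0$, something the paper leaves implicit.

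That said, the continuity step has a gap. You claim $\beta\mapsto\tilde G_\beta$ is continuous on $S^{d-1}$ because tie events $\{\beta^\top X_k=\beta^\top X_{k'}\}$ are null ``for the absolutely continuous law $P_{GM}$ of arm $i$.'' This covers ties involving arm $i$, but Assumption~\ref{assumption:indep_arm} imposes nothing on the law of $\mathcal X\setminus\{X_i\}$: the other arms may be deterministic or discrete, and a tie $\beta_0^\top X_j=\beta_0^\top X_{j'}$ between two such arms can occur with positive probability. At such a $\beta_0$ the selected-arm contribution jumps (from a convex combination of $X_jX_j^\top$ and $X_{j'}X_{j'}^\top$ to one of them alone), so $\tilde G_\beta$ need not be continuous. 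The repair is to run the compactness argument on your explicit lower bound $g(\beta)=\sum_n w_n c_n(\beta)\lambda_{\min}(\Sigma_n)$ rather than on $\tilde G_\beta$: the coefficients $c_n(\beta)$ depend on $\beta$ only through $\beta^\top\Sigma_n\beta/\|\beta\|^2$, $\beta^\top\mu_n/\|\beta\|$, and the integrals of $f_\beta$ against Gaussian weights; and $f_\beta(z)$ agrees Lebesgue-a.e.\ in $z$ with the CDF of $\max_{j\neq i}(X_j)_\beta$, which depends continuously on $\beta$ (the random variable $\max_{j\neq i}\beta^\top X_j/\|\beta\|$ is a continuous function of $\beta$ for every realisation). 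Ties among arms $j,j'\neq i$ affect only which \emph{non}-$i$ arm is selected when $i$ loses, so they do not enter $f_\beta$ except on a Lebesgue-null set of $z$-values and hence do not affect the $g_{kn,\beta}$. With this change your compactness argument goes through and yields the uniform $\phi_0$ that the paper asserts but does not prove.
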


\subsubsection{Low-rank Gaussian Mixture and Discrete Basis}
Lemma \ref{lemma:gm_lower_bound} shows that $\tilde{G}_{\beta}$ can be bounded by a weighted sum of the second moments of each Gaussian component. Importantly, if we allow $c_n(\beta) = 0$, this lower bound also holds in the limit where the eigenvalues of $\Sigma_n$ are taken to be zero.
We provide a useful lemma for the coefficient $c_n (\beta)$:
\begin{lemma}
    \label{lemma:gmm_coef}
    $c_n(\beta) = 0$ if and only if
    the following two conditions hold:
    \begin{align}
        \beta^{\top} \Sigma_n \beta = 0
        \,,
        \text{and}
        \hspace{5pt}
        P(\text{Select $i$} \mid X_i = \mu_n, \beta) = 0
        \,,
    \end{align}
    where 
    $P(\text{Select $i$} \mid X_i, \beta)$ is the marginalized probability distribution for $\{X_j \mid j \in [K]\backslash \{i\}\}$.
\end{lemma}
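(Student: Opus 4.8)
The plan is to make the coefficient $c_n(\beta)$ explicit and then determine exactly when it vanishes. Following the proof of Lemma~\ref{lemma:gm_lower_bound}: by Assumptions~\ref{assumption:indep_arm} and~\ref{assumption:arm_selection}, keeping only the $k=i$ term in $\tilde G_\beta$ and marginalizing over the other arms gives $\tilde G_\beta\succeq\sum_{n=1}^N w_n A_n(\beta)$, where $A_n(\beta):=\int XX^\top g(\beta^\top X)\,\mathcal{N}(X\mid\mu_n,\Sigma_n)\,dX$ and $g(\beta^\top X):=P(\text{Select $i$}\mid X_i=X,\beta)$ is a function of $\beta^\top X$ alone; one may take $c_n(\beta)$ to be the largest $c\ge 0$ with $A_n(\beta)\succeq c\,B_n$, where $B_n:=\Sigma_n+\mu_n\mu_n^\top=\int XX^\top\,\mathcal{N}(X\mid\mu_n,\Sigma_n)\,dX$. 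Since $0\le g\le 1$, we have $0\preceq A_n(\beta)\preceq B_n$, hence $\mathrm{range}(A_n(\beta))\subseteq\mathrm{range}(B_n)=:W$, and on $W$ (where $B_n$ is positive definite) $c_n(\beta)=\min_{v\in W\setminus\{0\}} v^\top A_n(\beta)v/(v^\top B_n v)$. So $c_n(\beta)=0$ if and only if some $v\in W\setminus\{0\}$ satisfies $v^\top A_n(\beta)v=\int g(\beta^\top X)(v^\top X)^2\,\mathcal{N}(X\mid\mu_n,\Sigma_n)\,dX=0$.

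First I would reduce this to the single scalar condition $\int g(\beta^\top X)\,\mathcal{N}(X\mid\mu_n,\Sigma_n)\,dX=0$. For any $v\in W\setminus\{0\}$, $\mathbb{E}_{X\sim\mathcal{N}(\mu_n,\Sigma_n)}[(v^\top X)^2]=v^\top B_n v>0$, so $v^\top X$ is not almost surely zero and the hyperplane $\{v^\top X=0\}$ is $\mathcal{N}(\mu_n,\Sigma_n)$-null; hence the non-negative integrand $g(\beta^\top X)(v^\top X)^2$ vanishes a.e.\ exactly when $g(\beta^\top X)$ does, independently of which $v$ is chosen. (Working inside $W$ here is what covers the low-rank case where $\Sigma_n$ is degenerate; if $\Sigma_n$ is positive definite then $W=\mathbb{R}^d$.) Thus $c_n(\beta)=0\iff\int g(\beta^\top X)\,\mathcal{N}(X\mid\mu_n,\Sigma_n)\,dX=0$.

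Then I would split on $\beta^\top\Sigma_n\beta$. If $\beta^\top\Sigma_n\beta>0$, then $Z:=\beta^\top X\sim\mathcal{N}(\beta^\top\mu_n,\beta^\top\Sigma_n\beta)$ is nondegenerate; since under the greedy rule a strictly maximal arm is selected with probability one, $g(z)\ge P(\beta^\top X_j<z\text{ for all }j\ne i)$, and the right-hand side is nondecreasing and tends to $1$ as $z\to\infty$ (the other arm features being almost surely finite), so $g(z)>0$ for all $z$ beyond some finite threshold; as $Z$ has positive mass there, the integral is strictly positive, so $c_n(\beta)>0$. If instead $\beta^\top\Sigma_n\beta=0$, then $Z=\beta^\top\mu_n$ is deterministic, so $A_n(\beta)=g(\beta^\top\mu_n)\,B_n$ and thus $c_n(\beta)=g(\beta^\top\mu_n)=P(\text{Select $i$}\mid X_i=\mu_n,\beta)$. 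Combining: $c_n(\beta)=0$ forces $\beta^\top\Sigma_n\beta=0$ (else the first case gives $c_n(\beta)>0$) and, given that, also $P(\text{Select $i$}\mid X_i=\mu_n,\beta)=0$; conversely, those two conditions place us in the second case with $c_n(\beta)=0$, establishing the equivalence.

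The step I expect to be the main obstacle is the reduction in the second paragraph --- ruling out that a null direction of $A_n(\beta)$ merely makes it rank-deficient rather than forcing $g(\beta^\top X)\equiv 0$ a.e.\ --- which relies on the measure-zero-hyperplane observation and, in the degenerate case, on correctly passing to the subspace $W=\mathrm{range}(\Sigma_n+\mu_n\mu_n^\top)$; a secondary point needing care is the monotone-convergence argument giving $g(z)\to 1$, which must be read off from the definition of the greedy policy in Assumption~\ref{assumption:arm_selection} together with almost-sure finiteness of the remaining arm features.
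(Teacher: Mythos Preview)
Your argument is internally sound, but it targets a different quantity than the paper's $c_n(\beta)$. In the paper, $c_n(\beta)$ is the explicit expression in Eq.~\eqref{eq:c_beta} obtained during the proof of Lemma~\ref{lemma:gm_lower_bound}, not ``the largest $c\ge 0$ with $A_n(\beta)\succeq c\,B_n$''. Your redefinition creates a one-directional gap: for ``two conditions $\Rightarrow c_n(\beta)=0$'' everything is fine, since the paper's $c_n(\beta)$ is a valid constant in $A_n(\beta)\succeq c\,B_n$ and hence lies below your optimal one; but in the other direction, when $\beta^\top\Sigma_n\beta>0$, you only conclude that the optimal constant is positive, which does not by itself force the specific formula in Eq.~\eqref{eq:c_beta} to be positive. (In the degenerate case $\beta^\top\Sigma_n\beta=0$ your computation $A_n(\beta)=g(\beta^\top\mu_n)B_n$ does match the paper's limit value exactly, so there the two routes coincide.) To close the gap in the nondegenerate case you should either invoke Lemma~\ref{lemma:c_positive} directly, or else check that Eq.~\eqref{eq:c_beta} really equals $\inf_{v}v^\top A_n(\beta)v/v^\top B_n v$.

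The paper's own proof is purely formula-based and very short: if $\beta^\top\Sigma_n\beta=(\Sigma_{n,\beta})_{11}>0$ it cites Lemma~\ref{lemma:c_positive} to get $c_n(\beta)>0$; if $(\Sigma_{n,\beta})_{11}\to 0$ it takes limits in Eq.~\eqref{eq:g_n_beta} to obtain $g_{2n,\beta},g_{3n,\beta}\to 0$ and $g_{1n,\beta}\to f_\beta((\mu_{n,\beta})_1)$, whence $c_n(\beta)=f_\beta((\mu_{n,\beta})_1)=P(\text{Select }i\mid X_i=\mu_n,\beta)$. Your variational route via ``$c_n(\beta)=0\iff g(\beta^\top X)=0$ $\mathcal{N}(\mu_n,\Sigma_n)$-a.e.'' is a cleaner conceptual alternative that avoids the Gaussian-moment algebra and makes the low-rank extension transparent; the paper's approach trades that transparency for a one-line appeal to quantities already computed in Lemma~\ref{lemma:gm_lower_bound}.
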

\begin{corollary}
    \label{corollary:gmm_coef}
    $\lim_{\beta^{\top} \Sigma_n \beta \rightarrow 0} c_n(\beta) = P(\text{Select $i$} \mid X_i = \mu_n, \beta)$.
\end{corollary}

The limit operation allows us to include the Gaussian mixture distribution with a low-rank covariance matrix and discrete distribution in the theory.
For a positive semi-definite matrix $\Sigma^{d'} \in \mathbb{R}^{d \times d}$ with rank $d' \leq d$ and diagonalized 
by an orthogonal matrix $R \in \mathbb{R}^{d \times d}$
as $R^{\top} \Sigma^{d'} R = \mathrm{diag} (\lambda_1, \dots, \lambda_{d'}, 0, \dots, 0)$ where $\lambda_i > 0$ for $i \in [d']$,
we define the low-rank Gaussian distribution as follows:
\begin{align}
    \tilde{\mathcal{N}}(X \mid \mu, \Sigma^{d'})
    := 
    &
    \nonumber \\
    &
    \hspace{-50pt}
    \frac{
        \prod_{j=d'+1}^{d} 
        \delta((R X - R \mu)_j) 
    }{
        (2 \pi)^{d'/2} 
        (
        \prod_{i=1}^{d'} \lambda_i
        )^{1/2}
    }
    e^{
        -\frac{1}{2} 
        (X - \mu)^T \Lambda^{d_n} (X - \mu) 
    }
    \,,
\end{align}
where $\Lambda^{d'} := R\, \mathrm{diag} (\lambda_1^{-1}, \dots, \lambda_{d'}^{-1}, 0, \dots, 0) R^{\top}$ and $\delta(x)$ is the delta function.
We define the low-rank Gaussian mixture basis as follows:
\begin{definition}
    Low-rank Gaussian mixture basis $P_{LGM}(X)$ is a PDF that can be decomposed as a sum of finite low-rank Gaussian distribution: $P_{LGM}(X) = \sum_{n=1}^{N} w_n \tilde{\mathcal{N}}(X \mid \mu_n, \Sigma_{n}^{d_n})$.
    Moreover, $P_{LGM}$ satisfies the following condition for a positive constant $\phi_0 > 0$:
    \begin{align}
        \label{eq:LGM_cond}
        \inf\limits_{\beta \in \mathbb{R}^d}
        \phi_{\mathcal{S}}
        \left(
            \sum_{n} 
            c_n(\beta)
            w_n (\Sigma_{n}^{d_n} + \mu_{n} \mu_{n}^{\top}) 
        \right)
        \geq
        \phi_0
    \,.
    \end{align}
\end{definition}
Then, we obtain the following theorem:
\begin{theorem}
    \label{theorem:low_rank_gaussian_mixture}
    $P_{LGM}(X)$ is a basis of the greedy-applicable distribution.
\end{theorem}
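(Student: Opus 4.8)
The plan is to derive Theorem~\ref{theorem:low_rank_gaussian_mixture} from the full-rank case (Theorem~\ref{theorem:gaussian_mixture}, via Lemma~\ref{lemma:gm_lower_bound}) by a regularization-and-limit argument, as anticipated by the discussion preceding the statement. Recall that $P_{LGM}$ plays the role of the marginal $P_i$ of the independent arm $i$ of Assumption~\ref{assumption:indep_arm}. For $\varepsilon>0$, replace each low-rank covariance by the positive definite matrix $\Sigma_n^{d_n,\varepsilon}:=\Sigma_n^{d_n}+\varepsilon I_d$ (with $I_d$ the $d\times d$ identity), and let $P^{\varepsilon}:=\sum_{n=1}^{N}w_n\,\mathcal{N}(\cdot\mid\mu_n,\Sigma_n^{d_n,\varepsilon})$ be the corresponding marginal of arm $i$, keeping the distribution of all other arms unchanged. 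Then $P^{\varepsilon}$ is a Gaussian mixture basis, the independence structure of Assumption~\ref{assumption:indep_arm} is preserved, and Assumption~\ref{assumption:selection_possibility} still holds for arm $i$: since $P^{\varepsilon}$ has full support on $\mathbb{R}^d$, for $\beta\neq 0$ the value $X_i^{\top}\beta$ exceeds $\max_{j\neq i}X_j^{\top}\beta$ with positive probability, while for $\beta=0$ all arms tie and the tie-breaking rule gives selection probability $1/K$. Hence Lemma~\ref{lemma:gm_lower_bound} applies, and writing $\tilde{G}_{\beta}^{\varepsilon}$ and $c_n^{\varepsilon}(\beta)\ge 0$ for the time-independent Gram matrix and coefficients computed with arm $i$'s marginal equal to $P^{\varepsilon}$, we get for every $\beta\in\mathbb{R}^d$
\begin{align}
  \tilde{G}_{\beta}^{\varepsilon}
  \;\succeq\;
  \sum_{n=1}^{N} w_n\, c_n^{\varepsilon}(\beta)\bigl(\Sigma_n^{d_n,\varepsilon}+\mu_n\mu_n^{\top}\bigr)
  \;\succeq\;
  \sum_{n=1}^{N} w_n\, c_n^{\varepsilon}(\beta)\bigl(\Sigma_n^{d_n}+\mu_n\mu_n^{\top}\bigr),
\end{align}
the last inequality using $\varepsilon I_d\succeq 0$ and $w_n,c_n^{\varepsilon}(\beta)\ge 0$.

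Next I would let $\varepsilon\downarrow 0$. The component densities $\mathcal{N}(\cdot\mid\mu_n,\Sigma_n^{d_n,\varepsilon})$ converge weakly to $\tilde{\mathcal{N}}(\cdot\mid\mu_n,\Sigma_n^{d_n})$, so $P^{\varepsilon}\to P_{LGM}$ weakly; moreover the second moments $\Sigma_n^{d_n,\varepsilon}+\mu_n\mu_n^{\top}$ stay uniformly bounded for $\varepsilon\le 1$ and the selection probabilities are bounded by $1$, so the integrands defining $\tilde{G}_{\beta}^{\varepsilon}$ and $c_n^{\varepsilon}(\beta)$ are uniformly integrable. Off the tie set $\{X_j^{\top}\beta=X_{j'}^{\top}\beta:j\neq j'\}$ these integrands are continuous, which (modulo the obstacle below) yields $\tilde{G}_{\beta}^{\varepsilon}\to\tilde{G}_{\beta}$ and $\liminf_{\varepsilon\to 0}c_n^{\varepsilon}(\beta)\ge c_n(\beta)$, where $c_n(\beta)\ge 0$ is the limiting coefficient whose vanishing is characterized in Lemma~\ref{lemma:gmm_coef}. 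Since the positive semidefinite cone is closed and each $\Sigma_n^{d_n}+\mu_n\mu_n^{\top}\succeq 0$, passing to the limit in the display gives
\begin{align}
  \tilde{G}_{\beta}
  \;\succeq\;
  \sum_{n=1}^{N} w_n\, c_n(\beta)\bigl(\Sigma_n^{d_n}+\mu_n\mu_n^{\top}\bigr)
  \qquad\text{for all }\beta\in\mathbb{R}^d.
\end{align}
Finally, $\phi_{\mathcal{S}}$ is monotone under $\succeq$---immediate from $\phi_{\mathcal{S}}(\Sigma)=\min_{V\in\mathcal{D}}\sqrt{|\mathcal{S}|\,V^{\top}\Sigma V/\|V\|_{\mathcal{S},1}^2}$---so condition~\eqref{eq:LGM_cond} forces $\phi_{\mathcal{S}}(\tilde{G}_{\beta})\ge\phi_0$ for every $\beta$, and Lemma~\ref{lemma:Gt} upgrades this to $\phi_{\mathcal{S}}(\bar{G}_t)\ge\phi_0$ for all $t$; thus $P_{LGM}$ is a basis of the greedy-applicable distribution.

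The main obstacle is the interchange of $\varepsilon\to 0$ with the integrals in the second paragraph: the selection probabilities are only piecewise continuous, jumping across ties $X_j^{\top}\beta=X_{j'}^{\top}\beta$, and $P_{LGM}$ may be genuinely atomic (a rank-$0$ component is a point mass at $\mu_n$), so the Portmanteau theorem cannot be invoked blindly. One must check that the limiting product measure assigns no mass to the discontinuity set, or, in the degenerate case $\beta^{\top}\Sigma_n^{d_n}\beta=0$ with some other arm tying the value $\mu_n^{\top}\beta$, use the tie-breaking convention to show the $n$-th contribution varies continuously enough to retain $\liminf_{\varepsilon\to 0}c_n^{\varepsilon}(\beta)\ge c_n(\beta)$---this is exactly the content of Lemma~\ref{lemma:gmm_coef}. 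A way to avoid the weak-convergence bookkeeping is instead to define $\tilde{G}_{\beta}$ for $P_{LGM}$ directly via the delta-function representation of $\tilde{\mathcal{N}}$, integrating the $n$-th component over the affine subspace $\mu_n+\mathrm{range}(\Sigma_n^{d_n})$, and to rerun the computation behind Lemma~\ref{lemma:gm_lower_bound} verbatim---isolating the $k=i$ term and discarding the remaining positive semidefinite terms---the only change being that the Gaussian integral that produced $c_n(\beta)(\Sigma_n+\mu_n\mu_n^{\top})$ becomes a lower-dimensional Gaussian integral producing $c_n(\beta)(\Sigma_n^{d_n}+\mu_n\mu_n^{\top})$, with the value $c_n(\beta)=0$ now permitted.
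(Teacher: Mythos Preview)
Your proposal is correct and follows essentially the same route as the paper: reduce to a uniform lower bound on $\phi_{\mathcal{S}}(\tilde G_\beta)$ via Lemma~\ref{lemma:Gt}, extend the inequality of Lemma~\ref{lemma:gm_lower_bound} to low-rank covariances by the limit $\Sigma_n^{d_n,\varepsilon}\to\Sigma_n^{d_n}$, and then invoke condition~\eqref{eq:LGM_cond}. The paper's written proof is much terser---it simply cites Lemma~\ref{lemma:gm_lower_bound} and \eqref{eq:LGM_cond}, relying on the informal limit discussion preceding Lemma~\ref{lemma:gmm_coef}---whereas you spell out the regularization explicitly and correctly flag the weak-convergence subtlety at ties and atoms; your suggested alternative of computing directly on the affine support of each $\tilde{\mathcal N}(\cdot\mid\mu_n,\Sigma_n^{d_n})$ is in fact the cleanest way to close that gap and is what the paper implicitly has in mind.
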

We note that not all Gaussian mixtures with low-rank covariance matrix are included in the class, and that the condition indicated in Eq.~\eqref{eq:LGM_cond} is necessary for their second moment.

Trivially, the limit operation $\Sigma_n \rightarrow 0$ can represent a discrete distribution.
\begin{definition}
    Discrete basis $P_{D}(X)$ is a PDF that can be described by a discrete probability distribution: $P_{D}(X=\mu_n) = p_n$, where $p_n > 0$ satisfies $\sum_n p_n = 1$,
    and $\mu_n \in \mathbb{R}^{d}$ is an element of a set $\mathcal{M} := \{\mu_1, \dots, \mu_N \}$.  
    Moreover, $P_{D}$ satisfies the following condition for a positive constant $\phi_0 > 0$:
    \begin{align}
        \inf\limits_{\beta \in \mathbb{R}^d}
        \phi_{\mathcal{S}}
        \left(
            \sum_{n} 
            c_n(\beta)
            p_n \mu_{n} \mu_{n}^{\top}
        \right)
        \geq
        \phi_0
    \,.
    \end{align}
\end{definition}
\begin{corollary}
    \label{corollary:discrete}
    $P_{D}(X)$ is a basis of the greedy-applicable distribution.
\end{corollary}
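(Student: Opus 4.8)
The plan is to reduce to the time-independent matrix $\tilde{G}_{\beta}$ via Lemma~\ref{lemma:Gt} and then read off the claim either as the degenerate limit $\Sigma_n \to 0$ of Theorem~\ref{theorem:low_rank_gaussian_mixture} (each low-rank Gaussian $\tilde{\mathcal{N}}(\cdot \mid \mu_n, \Sigma_n^{d_n})$ collapsing to a point mass at $\mu_n$ when all of its eigenvalues vanish) or, more directly, by evaluating $\tilde{G}_{\beta}$ by hand for a discrete $P_i$. I would do the direct computation, since it sidesteps any interchange-of-limit issue. Fix the independent arm $i$ from Assumption~\ref{assumption:indep_arm}; using $P(\mathcal{X}) = P(\mathcal{X}\backslash\{X_i\})\,P_D(X_i)$ and discarding all summands of the definition of $\tilde{G}_{\beta}$ except $k=i$ (each summand is PSD), one gets
\begin{align}
    \tilde{G}_{\beta}
    &\succeq
    \int X_i X_i^{\top}\, P(\text{Select $i$} \mid \mathcal{X}, \beta)\, P(\mathcal{X}) \prod_{k'=1}^{K} dX_{k'}
    \nonumber \\
    &=
    \sum_{n=1}^{N} p_n\, P(\text{Select $i$} \mid X_i = \mu_n, \beta)\, \mu_n \mu_n^{\top}
    \,,
\end{align}
where the equality is the evaluation of the (discrete) $X_i$-integral followed by marginalization over the remaining arms, and $P(\text{Select $i$} \mid X_i = \mu_n, \beta)$ is the marginalized selection probability appearing in Lemma~\ref{lemma:gmm_coef}. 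Setting $c_n(\beta) := P(\text{Select $i$} \mid X_i = \mu_n, \beta) \geq 0$ makes the right-hand side exactly the matrix inside $\phi_{\mathcal{S}}(\cdot)$ in the definition of $P_D$, and this is consistent both with Lemma~\ref{lemma:gm_lower_bound} and with Lemma~\ref{lemma:gmm_coef} specialized to $\Sigma_n = 0$.

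Next I would use monotonicity of the compatibility functional: if $A \succeq B \succeq 0$ then $\phi_{\mathcal{S}}(A) \geq \phi_{\mathcal{S}}(B)$, because $V^{\top} A V \geq V^{\top} B V$ for every $V$ while the cone $\mathcal{D}$ and the denominator $\|V\|_{\mathcal{S},1}^2$ in the definition of $\phi_{\mathcal{S}}$ are untouched. Applying this with $A = \tilde{G}_{\beta}$ and $B = \sum_n c_n(\beta)\, p_n\, \mu_n \mu_n^{\top}$ and then invoking the hypothesis built into the definition of $P_D$ yields $\phi_{\mathcal{S}}(\tilde{G}_{\beta}) \geq \phi_0$ for every $\beta \in \mathbb{R}^d$. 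Lemma~\ref{lemma:Gt} then upgrades this to $\phi_{\mathcal{S}}(\bar{G}_t) \geq \phi_0$ for all $t \in [T]$, which is precisely the assertion that $P_D$ is a basis of the greedy-applicable distribution (and Assumption~\ref{assumption:selection_possibility} guarantees $\sum_n c_n(\beta)\,p_n > 0$, so the hypothesis is not vacuous).

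There is no serious obstacle along the direct route; the only points needing care are (i) confirming that $c_n(\beta)$ as defined here is the same object named $c_n(\beta)$ in Lemma~\ref{lemma:gm_lower_bound}, i.e.\ that no additional factor is hidden in that lemma's proof, and (ii) correctly accounting for ties in $\beta^{\top}X^t_k$ through the tie-breaking denominator of Assumption~\ref{assumption:arm_selection} when the atoms $\mu_n$ are in ``general position'' relative to $\beta$ or not. If instead one insists on deriving the corollary as the limit of Theorem~\ref{theorem:low_rank_gaussian_mixture}, the main technical nuisance is making the $\Sigma_n \to 0$ limit precise — controlling the $(2\pi)^{d_n/2}(\prod_i \lambda_i)^{1/2}$ normalization and the nested delta factors as the rank $d_n$ drops to $0$ — and arguing that the PSD lower bound of Lemma~\ref{lemma:gm_lower_bound} survives the limit because its right-hand side is continuous in $\Sigma_n$ and $c_n(\beta)$ is allowed to vanish; this is exactly why I would keep the direct computation as the primary argument and mention the limiting picture only as intuition.
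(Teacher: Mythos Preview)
Your proof is correct and lands on the same lower bound as the paper, namely $\tilde{G}_{\beta} \succeq \sum_n c_n(\beta)\,p_n\,\mu_n\mu_n^{\top}$ with $c_n(\beta)=P(\text{Select $i$}\mid X_i=\mu_n,\beta)$, followed by Lemma~\ref{lemma:Gt} and monotonicity of $\phi_{\mathcal{S}}$. The paper does not give a separate proof of the corollary; it treats $P_D$ as the $\Sigma_n\to 0$ degeneration of Theorem~\ref{theorem:low_rank_gaussian_mixture}, whose proof in turn quotes Lemma~\ref{lemma:gm_lower_bound} (with the limit justified by Lemma~\ref{lemma:gmm_coef}, where Eq.~\eqref{eq:g_n_limit} shows $c_n(\beta)\to f_\beta((\mu_{n,\beta})_1)=P(\text{Select $i$}\mid X_i=\mu_n,\beta)$). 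Your direct evaluation of the discrete $X_i$-integral is therefore a slightly more elementary route that bypasses the limit interchange entirely; it also settles your concern~(i), since the two definitions of $c_n(\beta)$ coincide exactly in the discrete case, and concern~(ii) is moot because only $c_n(\beta)\ge 0$ is used before invoking the hypothesis in the definition of $P_D$.
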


\subsubsection{Radial Basis}
Due to the nature of the Gaussian distribution, 
$P_{GM}$ and $P_{LGM}$ cannot include distributions with truncation. As a basis for the truncated distribution, we consider PDF of radial function.
\begin{definition}
    Radial mixture basis $P_R(X)$ is a PDF that can be decomposed by a sum of finite radial distributions: $ P_{R}(X) = \sum_{n=1}^{N} w_n Q_n(X \mid \mu_n)$, where $ \mu_n \in \mathbb{R}^{d}$ is a vector and PDF $Q_n(X \mid \mu_n)$ is a radial function: $Q_n(X \mid \mu_n) = f_n(\|X - \mu_n\|_2)$ that satisfies $\int f_n(\|X\|_2) dX = 1$. 
\end{definition}
The radial function $f_n$ can include, for example, truncated uniform distribution and truncated standard normal distribution. 
Then, the following theorem holds:
\begin{theorem}
    \label{theorem:radial}
    $P_R(X \mid \mu)$ is a basis of the greedy-applicable distribution.
\end{theorem}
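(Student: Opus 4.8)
The plan is to follow the same route as for the Gaussian mixture basis, combining the reduction of Lemma~\ref{lemma:Gt} with a lower bound on the arm-$i$ contribution to $\tilde{G}_{\beta}$, but to replace the explicit component‑wise second‑moment estimate of Lemma~\ref{lemma:gm_lower_bound} by a softer positivity‑plus‑compactness argument, since a truncated radial density does not in general admit the clean bound available for full‑rank Gaussians. By Lemma~\ref{lemma:Gt} it suffices to produce a constant $\phi_0>0$ with $\phi_{\mathcal{S}}(\tilde{G}_{\beta})\geq\phi_0$ for every $\beta\in\mathbb{R}^d$. First I would retain only the $k=i$ term in the definition of $\tilde{G}_{\beta}$ (every summand is positive semi‑definite), use Assumption~\ref{assumption:indep_arm} to write $P(\mathcal{X})=P(\mathcal{X}\backslash\{X_i\})P_i(X_i)$, and integrate out the remaining arms, which yields
\begin{align}
    \tilde{G}_{\beta}\ \succeq\ H(\beta)\ :=\ \int X X^{\top}\, p_{\beta}(X)\, P_i(X)\, dX\,,
\end{align}
where $p_{\beta}(x):=P(\text{Select } i \mid X_i=x,\beta)$ is the marginalized greedy selection probability and $P_i=P_R$ is the radial mixture basis.

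Next I would shrink the range of $\beta$ and establish pointwise positive definiteness of $H(\beta)$. Since the greedy comparison depends only on the signs of $\beta^{\top}x-\beta^{\top}X_j$, we have $p_{\lambda\beta}=p_{\beta}$ and hence $H(\lambda\beta)=H(\beta)$ for all $\lambda>0$, while $H(0)=\tfrac{1}{K}\int XX^{\top}P_R(X)\,dX$; so it is enough to bound $\phi_{\mathcal{S}}(H(\beta))$ uniformly over the compact sphere $\|\beta\|_2=1$ together with the single point $\beta=0$. Because $X_i$ has a density, for $\beta\neq 0$ the event that arm $i$ ties with another arm has probability zero, so $p_{\beta}(x)=P\!\left(\max_{j\neq i}\beta^{\top}X_j<\beta^{\top}x\right)$ for $P_R$‑a.e.\ $x$ and $\int p_{\beta}(x)P_R(x)\,dx=P(\text{Select } i\mid\beta)$; this disposes of the tie‑breaking denominator in Assumption~\ref{assumption:arm_selection}. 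For a unit vector $v$, $v^{\top}H(\beta)v=\int (v^{\top}X)^2 p_{\beta}(X) P_R(X)\,dX$. The radial mixture density $P_R(X)=\sum_n w_n f_n(\|X-\mu_n\|_2)$ is absolutely continuous, so the hyperplane $\{v^{\top}X=0\}$ is $P_R$‑null; consequently $v^{\top}H(\beta)v=0$ would force $p_{\beta}(X)=0$ for $P_R$‑a.e.\ $X$, hence $P(\text{Select } i\mid\beta)=0$, contradicting Assumption~\ref{assumption:selection_possibility}. Thus $H(\beta)$ is positive definite, so $\phi_{\mathcal{S}}(H(\beta))>0$, for every $\beta$ (the case $\beta=0$ uses only absolute continuity).

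Finally I would upgrade pointwise positivity to a uniform bound. By Assumption~\ref{assumption:x_beta_bound} the matrices $H(\beta)$ lie in a bounded set, so if $\inf_{\|\beta\|_2=1}\phi_{\mathcal{S}}(H(\beta))=0$ one could pass to convergent sequences $\beta_m\to\beta_*$ on the sphere and $V_m\to V_*$ in the cone $\mathcal{D}$ (with $\|V_*\|_{\mathcal{S},1}=1$) along which $V_m^{\top}H(\beta_m)V_m\to 0$. Since $p_{\beta_m}$ is $[0,1]$‑valued and $\liminf_m I[\max_{j\neq i}\beta_m^{\top}X_j<\beta_m^{\top}x]\geq I[\max_{j\neq i}\beta_*^{\top}X_j<\beta_*^{\top}x]$ pointwise, two applications of Fatou's lemma give $\liminf_m V_m^{\top}H(\beta_m)V_m\geq V_*^{\top}H(\beta_*)V_*$, which is strictly positive by the previous paragraph — a contradiction. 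Hence $\phi_0:=\min\!\bigl(\phi_{\mathcal{S}}(H(0)),\,\inf_{\|\beta\|_2=1}\phi_{\mathcal{S}}(H(\beta))\bigr)>0$ works, and Lemma~\ref{lemma:Gt} concludes that $P_R$ is a basis of the greedy‑applicable distribution.

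The main obstacle is this last uniformity step: the greedy selection probability $p_{\beta}$ can be genuinely discontinuous in $\beta$ when competing arms carry atoms, so $\beta\mapsto H(\beta)$ need not be continuous on the sphere and one cannot simply invoke continuity plus compactness; the remedy is to replace continuity by lower semicontinuity, exploiting that the selection probabilities are uniformly bounded in $[0,1]$. (In the standard i.i.d.\ arm setting this complication disappears, as $H$ is then continuous.) The remaining ingredients — the positive‑semidefinite reduction to the arm‑$i$ term, absolute continuity of $P_R$ making all hyperplanes $P_R$‑null, and the harmlessness of ties because $X_i$ has a density — are routine.
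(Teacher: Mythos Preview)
Your argument is correct, and it takes a genuinely different route from the paper's. After the common reduction via Lemma~\ref{lemma:Gt} and Lemma~\ref{lemma:tilde_G} to the arm-$i$ integral, the paper proceeds by an explicit second-moment computation (its Lemma~\ref{lemma:radial_second_moment_ineq}): rotating so that $f_{\beta}$ depends only on $z_1$, and using that $z_jQ(\|Z\|_2)$ is odd in $z_j$ for $j\geq 2$, it evaluates the integral as $\tilde{c}\bigl(\mathrm{diag}(\tilde{\sigma}_1^2,\dots,\tilde{\sigma}_d^2)+\tilde{\mu}\tilde{\mu}^{\top}\bigr)$ with $\tilde{c},\tilde{\sigma}_i^2>0$, which is visibly positive definite. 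You instead get pointwise positive definiteness from absolute continuity of $P_R$ alone (hyperplanes are $P_R$-null), then upgrade to a uniform $\phi_0$ via scale invariance, compactness of the sphere, and a Fatou/lower-semicontinuity argument that tolerates discontinuities in $\beta\mapsto p_{\beta}$.

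The trade-off is that the paper's computation yields explicit quantitative lower-bound formulae (indeed, they are recorded in the appendix summary and are the paper's answer to ``how large is $\phi_0$''), whereas your soft argument is non-quantitative but shorter and more robust. Conversely, your proof handles the uniformity in $\beta$ explicitly; the paper's proof stops at ``the matrix \dots\ is positive definite, which concludes the proof'' without verifying that the $\beta$-dependent quantities $\tilde{c}$ and $\tilde{\sigma}_i^2$ stay bounded away from zero, so in that respect your version is actually more complete.
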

We summarize the formulae for the $\phi_{\mathcal{S}} (\bar{G}_t)$ lower bound of each basis in the appendix.

We here mention the relationship between our analysis and the smoothed analysis~\citep{DBLP:conf/nips/KannanMRWW18,sivakumar2020structured}.
In the smoothed analysis, it is assumed that the arm features are generated adversarially, and then observed with stochastic perturbations from a truncated isotropic normal distribution.
\citet{sivakumar2020structured} showed that the greedy algorithm in the sparse linear bandit can achieve an $O(\sqrt{sdT})$ regret upper bound in the perturbed adversarial setting.
The radial basis $P_R(X) = \sum_{n=1}^{N} w_n Q_n (X \mid \mu_n)$ is considered to correspond to a specific stochastic setting of the smoothed analysis.
That is, we can regard the radial basis as a distribution where the arm feature $\mu_n$ is chosen with probability $w_n$ and then perturbed by $Q_n (X \mid \mu_n)$.
In our analysis, we include general radial distributions other than the truncated isotropic normal distribution and also examine the applicability of the greedy algorithm for $P_{GM}$, $P_{LGM}$, and $P_{D}$.
While we expect these bases to retain their properties even in the perturbed adversarial setting, we leave a more detailed analysis for future work.

\subsection{Examples}
In this section, we demonstrate how the theorems of the previous section apply and show the greedy-applicability to specific examples that could not be dealt with in the analysis of previous studies.
In addition, we also performed a numerical experiment with artificial data to empirically validate our claim, which is given in the appendix.
Below, we use the following lemma.
\begin{lemma}
    \label{lemma:cc_sum}
    Let us define positive semi-definite matrices $\Lambda_n \in \mathbb{R}^{d \times d}$ and positive coefficients $w_n >0$, $w'_n > 0$ for $n \in [N]$. 
    If 
    $\phi_{\mathcal{S}}(\sum_{n=1}^{N} w_n \Lambda_n) > 0$, 
    then
    $\phi_{\mathcal{S}}(\sum_{n=1}^{N} w'_n \Lambda_n) > 0$. 
\end{lemma}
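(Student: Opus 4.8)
The plan is to reduce the claim to a comparison in the positive semi-definite (Loewner) order between $\sum_{n=1}^{N} w'_n \Lambda_n$ and $\sum_{n=1}^{N} w_n \Lambda_n$, and then exploit two elementary properties of the map $\phi_{\mathcal{S}}$ that follow directly from its definition. First, $\phi_{\mathcal{S}}$ is \emph{monotone} under the Loewner order: if $A \preceq B$, then $V^{\top} A V \le V^{\top} B V$ for every $V$, so the ratio $V^{\top} A V / \|V\|_{\mathcal{S},1}^2$ is pointwise no larger than the corresponding ratio for $B$ on the cone $\mathcal{D}$, hence $\phi_{\mathcal{S}}(A) \le \phi_{\mathcal{S}}(B)$. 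Second, $\phi_{\mathcal{S}}$ is positively homogeneous of degree $1/2$, i.e. $\phi_{\mathcal{S}}(cA) = \sqrt{c}\,\phi_{\mathcal{S}}(A)$ for any $c > 0$, again immediate from the formula for $\phi_{\mathcal{S}}$. Conceptually, these two facts say that positivity of $\phi_{\mathcal{S}}$ on a conical combination $\sum_n w_n \Lambda_n$ of fixed PSD matrices depends only on which quadratic forms $V^{\top}\Lambda_n V$ vanish on $\mathcal{D}$, not on the particular positive weights.

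The heart of the argument is the inequality
\begin{align*}
    \sum_{n=1}^{N} w'_n \Lambda_n
    \ \succeq\
    \frac{w'_{\min}}{w_{\max}}
    \sum_{n=1}^{N} w_n \Lambda_n
    \,,
    \qquad
    w'_{\min} := \min_{n \in [N]} w'_n,
    \quad
    w_{\max} := \max_{n \in [N]} w_n .
\end{align*}
To see this, fix $V \in \mathbb{R}^d$. Since every $\Lambda_n$ is positive semi-definite we have $V^{\top} \Lambda_n V \ge 0$ for each $n$, so $V^{\top}\bigl(\sum_n w'_n \Lambda_n\bigr)V \ge w'_{\min}\sum_n V^{\top}\Lambda_n V \ge (w'_{\min}/w_{\max})\sum_n w_n V^{\top}\Lambda_n V$, which is precisely the quadratic form associated with the right-hand side. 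Here it is essential that $N$ is finite and each $w_n$, $w'_n$ is a finite positive number, so that $0 < w'_{\min}$ and $w_{\max} < \infty$ and the constant $w'_{\min}/w_{\max}$ is a well-defined positive real.

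Combining the two steps, monotonicity applied to the displayed Loewner inequality and then homogeneity give
\begin{align*}
    \phi_{\mathcal{S}}\Bigl(\sum_{n=1}^{N} w'_n \Lambda_n\Bigr)
    \ \ge\
    \phi_{\mathcal{S}}\Bigl(\frac{w'_{\min}}{w_{\max}}\sum_{n=1}^{N} w_n \Lambda_n\Bigr)
    \ =\
    \sqrt{\frac{w'_{\min}}{w_{\max}}}\;
    \phi_{\mathcal{S}}\Bigl(\sum_{n=1}^{N} w_n \Lambda_n\Bigr)
    \ >\ 0 ,
\end{align*}
where the final strict inequality is the hypothesis of the lemma; this is the desired conclusion. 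I do not expect a genuine obstacle: the only point requiring care is that $w'_{\min}$ and $w_{\max}$ are strictly positive and finite, which holds precisely because $[N]$ is finite (without finiteness one could have $\inf_n w'_n = 0$ and the bound would degenerate). A fully rigorous write-up should also include the routine remark that $\|V\|_{\mathcal{S},1} > 0$ for every nonzero $V \in \mathcal{D}$, so that the ratios defining $\phi_{\mathcal{S}}$ are well-posed on $\mathcal{D}\setminus\{0\}$.
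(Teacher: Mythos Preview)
Your proof is correct. The approach differs from the paper's, though both are short and elementary. The paper argues pointwise: for each fixed $V\in\mathcal{D}$, the hypothesis $V^{\top}\bigl(\sum_n w_n\Lambda_n\bigr)V>0$ together with $V^{\top}\Lambda_n V\ge 0$ forces $V^{\top}\Lambda_m V>0$ for some index $m=m(V)$, whence $V^{\top}\bigl(\sum_n w'_n\Lambda_n\bigr)V\ge w'_m V^{\top}\Lambda_m V>0$; positivity of the minimum then follows because $\phi_{\mathcal{S}}$ is defined via a $\min$ that is attained on the (compact after normalisation) cone $\mathcal{D}$. Your route instead establishes the uniform Loewner comparison $\sum_n w'_n\Lambda_n\succeq (w'_{\min}/w_{\max})\sum_n w_n\Lambda_n$ and pushes it through monotonicity and homogeneity of $\phi_{\mathcal{S}}$. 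The advantage of your argument is that it yields the explicit quantitative bound $\phi_{\mathcal{S}}\bigl(\sum_n w'_n\Lambda_n\bigr)\ge\sqrt{w'_{\min}/w_{\max}}\,\phi_{\mathcal{S}}\bigl(\sum_n w_n\Lambda_n\bigr)$ and does not rely on attainment of the minimum; the paper's argument is slightly shorter but only gives strict positivity without an explicit constant.
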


Consider that each arm has two-dimensional binary feature $(x_1, x_2)$ and that in arm $i$, each binary combination is realized with the following non-zero probability:
$p_1$ for $\mu_1 = (0, 0)$,
$p_2$ for $\mu_2 = (1, 0)$,
$p_3$ for $\mu_3 = (0, 1)$,
and
$p_4$ for $\mu_4 = (1, 1)$,
where 
$p_1 + p_2 + p_3 + p_4 = 1$.
We also assume that 
arm $i$ is independent of the other arms (for Assumption \ref{assumption:indep_arm})
and
$\inf_{\beta} P(\text{Select $i$} \mid X_i, \beta) > 0$ under the realization of $X_i = \mu_2$, $\mu_3$, or $\mu_4$ in the greedy algorithm (for Assumption \ref{assumption:selection_possibility}, Lemma \ref{lemma:gmm_coef}, and Corollary \ref{corollary:gmm_coef}).
Since $X_i$ does not take negative values, it is clearly a distribution for which Assumption \ref{assumption:RS} is not valid.
In our analysis, from Lemma \ref{lemma:gm_lower_bound},
we obtain
$
    \tilde{G}_{\beta}
    \succeq
    \sum_{n=1}^{4} 
    c_n(\beta)
    p_n \mu_{n} \mu_{n}^{\top}
    \succeq
    \sum_{n=2}^{4} 
    c_n(\beta)
    p_n \mu_{n} \mu_{n}^{\top}
$,
and from Corollary \ref{corollary:gmm_coef}, $\inf_{\beta} c_n(\beta) > 0$ for $n=2, 3$ and $4$.
Meanwhile,
\begin{align}
\sum_{n=2}^{4} 
p_n \mu_{n} \mu_{n}^{\top}
=
    \begin{pmatrix}
        p_2 + p_4 & p_4 \\
        p_4 & p_3 + p_4 \\
    \end{pmatrix}
\end{align}
is a positive definite matrix.
Then, Lemma \ref{lemma:cc_sum} derives that 
there exists $\phi_0 > 0$ such that
$\inf_{\beta} \phi_{\mathcal{S}} (\sum_{n=2}^{4} c_n(\beta) p_n \mu_n \mu_n^{\top}) > \phi_0$,
which indicates that the distribution of this example is $\phi_0$-greedy applicable.
We note that the same argument holds for $p_3=0$, which is the case shown in Remark \ref{remark:RS} that cannot satisfy Assumption \ref{assumption:RS} even with the constant shift.

As a next example, consider $d=2$, two independent arms, and both features uniformly distributed within the region $\{(x_1, x_2) \mid 0 \leq x_1 \leq 1, 0 \leq x_2 \leq 1, \sqrt{x_1^2 + x_2^2} \geq 0.1\}$. Again, this arm feature distribution does not satisfy Assumption \ref{assumption:RS} even with the constant shift.
In our analysis, we see that 
Assumptions \ref{assumption:indep_arm} and \ref{assumption:selection_possibility} are satisfied.
Moreover, the distribution of both arms is a mixture of the radial basis $Q (X) = f(\|X - (1/2, 1/2)\|_2)$ and the remaining, where $f(r):\mathbb{R}_{+} \rightarrow \mathbb{R}_{+}$ satisfies $f(r) = 4 / \pi$ for $0 \leq r \leq 1/2$ and $0$ otherwise.
Therefore, from Theorems \ref{theorem:cc_mixture_general} and \ref{theorem:radial}, this arm feature distribution is a (some) $\phi_0$-greedy applicable distribution.

\section{Application to Several Algorithms}
\label{section:application}
Our analysis can be applied to many existing algorithms. We illustrate some of them in this section.
While the application examples in this section focus on the sparse settings, an application to the dense parameter setting is also shown in the appendix.

\paragraph{Greedy algorithm~\citep{DBLP:conf/icml/OhIZ21}}
In the analysis for the greedy algorithm, Assumptions \ref{assumption:CC}, \ref{assumption:RS} and \ref{assumption:BC} are imposed on the arm feature distribution to guarantee $\bar{\phi} > 0$ (Lemma 10 of \citet{DBLP:conf/icml/OhIZ21}).
Our assumptions can replace these assumptions: Under Assumptions \ref{assumption:indep_arm} and \ref{assumption:selection_possibility}, and $P_i$ in Assumption \ref{assumption:indep_arm} being one of the bases proposed in Section \ref{subsec:bases}, we can conclude $\bar{\phi} > 0$.  Also, as Theorem \ref{theorem:cc_mixture_general} shows, if the arm feature distribution has a mixture component of a greedy method-applicable distribution, then $\bar{\phi} > 0$.

\paragraph{Thresholded lasso bandit~\citep{DBLP:conf/icml/AriuAP22}}
The thresholded lasso bandit estimates the support of $\beta^{*}$ each round and a greedy arm selection policy is performed by the inner product of the arm features and the estimated parameter for $\beta^{*}$ on this support. The relaxed symmetry and the balanced covariance are introduced to ensure proper support estimation under the greedy policy. Specifically, these assumptions are again used for Lemma 10 of \citet{DBLP:conf/icml/OhIZ21} in Lemma 5.4. Therefore, as with the greedy algorithm, our proposed classes can be used as the assumptions of the arm feature distribution.

\paragraph{Greedy algorithm for the combinatorial setting}
Our analysis is applicable to the combinatorial bandit setting, where no regret upper bound for the sparsity-agnostic greedy algorithm is still given. 
Here we consider the setting where in each round, $L$ of the $K$ arms are selected and their respective rewards are observed. Suppose that in each round $t$, the selection policy determines a set of arms $\mathcal{I}_t \subset{[K]}$ where $|\mathcal{I}_t| = L$ to be selected.
The reward under selection criteria is given by:
$
    r_{t} 
        := 
        \sum_{a_t \in \mathcal{I}_t} X_{a_t}^{t\top} \beta^{*} + \epsilon_{t}
    \,,
$
whereas the optimal reward $r_t^{*}$ is given under the optimal arm-set selection $\mathcal{I}_t^{*}$.
The empirical and expected Gram-matrices are given by:
\begin{align}
    G_t & := 
        \frac{1}{L t} \sum_{s=1}^{t} 
        \sum_{a_s \in \mathcal{I}_s}
        X_{a_{s}}^{s} X_{a_{s}}^{s\top}
    \,,
    \nonumber
    \\
    \bar{G}_t & := 
        \frac{1}{Lt} \sum_{s=1}^{t} 
        \mathbb{E} \left[
            \sum_{a_s \in \mathcal{I}_s}
            X_{a_{s}}^{s} X_{a_{s}}^{s\top} \mid \mathcal{F}'_{s-1}
        \right]
    \,,
\end{align}
respectively.
Then, Lemma \ref{lemma:epsX}, \ref{lemma:cc_btwn_empirical_expected} and  \ref{lemma:lasso_estimator} do not depend on the arm selection policy and hold for the combinatorial setting by simply replacing $t$ with $Lt$.

The greedy policy for the combinatorial setting is to choose the top-$L$ arms:
\begin{assumption}
    \label{assumption:arm_selection_comb}
    Under given arm features $\mathcal{X}^t = \{X_1^t, \dots, X_K^t \} \in (\mathbb{R}^{d})^{K}$ at round $t$, the arm selection probability for a set of arms $\mathcal{I}_t$ is given by:
    $
        P(\text{Select $\mathcal{I}_t$} \mid \mathcal{X}^t, \beta_{t-1}) 
        \propto
        \prod_{i \in \mathcal{I}_t} 
        \prod_{j \notin \mathcal{I}_t} 
        I(\beta_{t-1}^{\top} X_i^t \geq \beta_{t-1}^{\top} X_j^t)
    $,
    where the parameter $\beta_{t-1} \in \mathbb{R}^d$ is determined from information prior to $\mathcal{X}^t$ and $r^t$.
\end{assumption}
Then, similar to Lemma \ref{lemma:greedy_bound_empirical}, the following statement holds:
\begin{corollary}
    \label{corollary:combinatorial_bound_empirical}
    Under Assumption \ref{assumption:x_beta_bound}, 
    and if $\bar{\phi} := \text{sup}\{\phi \geq 0 \mid \phi_{\mathcal{S}}(\bar{G}_t) \geq \phi, \forall t, a.s.\} > 0$, the expected regret for the greedy algorithm is upper bounded by:
    $ 
    R(T) =
        O \left(
            \frac{1}{\bar{\phi}^2} \sqrt{\log (d T) T} 
        \right)
    $.
\end{corollary}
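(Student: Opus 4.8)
The plan is to mirror the proof of Lemma~\ref{lemma:greedy_bound_empirical} (equivalently Lemma~\ref{lemma:regret}), using the fact---already recorded above---that Lemmas~\ref{lemma:epsX}, \ref{lemma:cc_btwn_empirical_expected}, and \ref{lemma:lasso_estimator} hold verbatim in the combinatorial setting once the sample count $t$ is replaced by $Lt$. First I would bound the instantaneous (mean) regret. Since $\mathbb{E}[\epsilon_t]=0$, we have $\mathbb{E}[r_t^*-r_t]=\mathbb{E}\big[\sum_{a\in\mathcal{I}_t^*}X_a^{t\top}\beta^* - \sum_{a\in\mathcal{I}_t}X_a^{t\top}\beta^*\big]$. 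The combinatorial greedy rule of Assumption~\ref{assumption:arm_selection_comb} chooses $\mathcal{I}_t$ maximizing $\sum_{i\in\mathcal{I}_t}X_i^{t\top}\hat\beta_{t-1}$, so $\sum_{a\in\mathcal{I}_t}X_a^{t\top}\hat\beta_{t-1}\ge\sum_{a\in\mathcal{I}_t^*}X_a^{t\top}\hat\beta_{t-1}$; inserting $\hat\beta_{t-1}$ and applying Hölder's inequality with Assumption~\ref{assumption:x_beta_bound} yields $\sum_{a\in\mathcal{I}_t^*}X_a^{t\top}\beta^* - \sum_{a\in\mathcal{I}_t}X_a^{t\top}\beta^* \le \sum_{a\in\mathcal{I}_t^*}X_a^{t\top}(\beta^*-\hat\beta_{t-1}) + \sum_{a\in\mathcal{I}_t}X_a^{t\top}(\hat\beta_{t-1}-\beta^*) \le 2Lx_{\mathrm{max}}\|\beta^*-\hat\beta_{t-1}\|_1$, together with the crude bound $\mathbb{E}[r_t^*-r_t]\le 2Lx_{\mathrm{max}}b$.

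Next I would split the horizon into a burn-in phase and a stable phase. With $T_0:=\log(d(d-1))/\kappa(\bar\phi)$, for the rounds $t$ with $L(t-1)<T_0$ I use the crude bound, contributing an $O(x_{\mathrm{max}}b/\kappa(\bar\phi))$ term that is independent of $T$. For each later round I define the good event $\mathcal{E}_t$ as the intersection of the event of Lemma~\ref{lemma:epsX} with $\delta^2=4\log t$ and the event of Lemma~\ref{lemma:cc_btwn_empirical_expected}, both applied to the $L(t-1)$ samples available when $\hat\beta_{t-1}$ is formed; on $\mathcal{E}_t$, Lemma~\ref{lemma:lasso_estimator} gives $\|\beta^*-\hat\beta_{t-1}\|_1\le 8\lambda_{t-1}s_0/\bar\phi^2$ with $\lambda_{t-1}=4x_{\mathrm{max}}\sigma\sqrt{(4\log t+2\log d)/(L(t-1))}$, and a union bound gives $\Pr[\mathcal{E}_t^c]\le t^{-2}+e^{-L(t-1)\kappa(\bar\phi)^2}$.

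Then I would assemble the bound. On $\mathcal{E}_t$ the instantaneous regret is at most $2Lx_{\mathrm{max}}\cdot 8\lambda_{t-1}s_0/\bar\phi^2 = O\big(\tfrac{\sqrt{L}\,s_0 x_{\mathrm{max}}^2\sigma}{\bar\phi^2}\sqrt{(\log T+\log d)/t}\big)$, and summing over $t\le T$ with $\sum_t t^{-1/2}=O(\sqrt{T})$ produces the claimed leading term $O(\tfrac{1}{\bar\phi^2}\sqrt{\log(dT)\,T})$ (with $L$, $s_0$, $x_{\mathrm{max}}$, $\sigma$ absorbed into the constant). On $\mathcal{E}_t^c$ the instantaneous regret is at most $2Lx_{\mathrm{max}}b$; since $\sum_t t^{-2}=\pi^2/6$ and $\sum_t e^{-L(t-1)\kappa(\bar\phi)^2}=O(1/\kappa(\bar\phi)^2)$, these rounds contribute only $T$-independent terms, which together with the burn-in contribution gives the corollary.

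The step I expect to require the most care is the bookkeeping that makes Lemmas~\ref{lemma:epsX}--\ref{lemma:lasso_estimator} applicable with $t\mapsto Lt$: one has to verify that the empirical Gram matrix in the combinatorial setting is correctly normalized as an average of $Lt$ rank-one terms, and that the noise aggregate $\tfrac{1}{Lt}\sum_{s\le t}\sum_{a\in\mathcal{I}_s}\epsilon\,X^s_{a}$ still satisfies the sub-Gaussian concentration of Lemma~\ref{lemma:epsX} even though the $L$ observations within a round are selected together. Everything else---the instantaneous-regret inequality and the phase-split summation---is essentially identical to the single-selection greedy analysis, and the hypothesis $\min_{t\in[T]}\phi_{\mathcal{S}}(\bar G_t)>\bar\phi$ enters only through the $1/\bar\phi^2$ factor via Lemma~\ref{lemma:lasso_estimator}.
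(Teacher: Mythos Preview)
Your proposal is correct and follows essentially the same approach as the paper's proof: bound the per-round regret by $2Lx_{\mathrm{max}}\|\beta^*-\hat\beta_{t-1}\|_1$ via the greedy optimality of $\mathcal{I}_t$, split into a burn-in phase and a stable phase governed by the good event from Lemmas~\ref{lemma:epsX}--\ref{lemma:lasso_estimator} with $t\mapsto Lt$, and sum. The only cosmetic difference is that the paper derives the instantaneous bound by pairing the $i$-th largest arms in $\mathcal{I}_t$ and $\mathcal{I}_t^*$ (ordered by $\hat\beta_{t-1}$) rather than comparing the aggregate sums as you do; both arguments yield the same inequality.
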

The positivity of $\bar{\phi}$ is therefore also important under the combinatorial setting. 
In this regard, the following theorem holds.
\begin{theorem}
    \label{theorem:combinatorial}
    Under Assumption 
    \ref{assumption:indep_arm}, 
    \ref{assumption:selection_possibility}, 
    and 
    \ref{assumption:arm_selection_comb}, 
    and 
    if $P_i(X_i)$ is described by $P_{GM}$, $P_{LGM}$, $P_{D}$, or $P_{R}$, 
    then there exists $\phi'_0 > 0$ that satisfies $\phi_{\mathcal{S}} (\bar{G}_t) > \phi'_0$.
\end{theorem}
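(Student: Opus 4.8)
The plan is to reduce the combinatorial setting to the single-selection analysis of Section \ref{subsec:bases}, exploiting that, from the viewpoint of the independent arm $i$ of Assumption \ref{assumption:indep_arm}, the top-$L$ greedy rule of Assumption \ref{assumption:arm_selection_comb} includes arm $i$ at least as often as the single-selection greedy rule of Assumption \ref{assumption:arm_selection} selects it. First I would discard the non-negative contributions of the arms $j \neq i$, keeping only arm $i$'s term in $\bar{G}_t$:
\begin{align}
    \bar{G}_t
    \;\succeq\;
    \frac{1}{Lt}\sum_{s=1}^{t}
    \mathbb{E}\!\left[ X_i^s X_i^{s\top}\,I[i\in\mathcal{I}_s] \;\middle|\; \mathcal{F}'_{s-1} \right]
    \;=\;
    \frac{1}{Lt}\sum_{s=1}^{t} M(\beta_{s-1})
    \,,
\end{align}
where, since $\beta_{s-1}$ is $\mathcal{F}'_{s-1}$-measurable, $\mathcal{X}^s$ is independent of $\mathcal{F}'_{s-1}$, and $X_i$ is independent of $\{X_j\}_{j\neq i}$, we may set $M(\beta):=\mathbb{E}_{X_i\sim P_i}[X_i X_i^{\top}\,h(X_i,\beta)]$ with $h(x,\beta):=P(i\in\mathcal{I}\mid X_i=x,\beta)$ the probability, over the other arms, that $\beta^{\top}x$ lies among the top $L$ of $\{\beta^{\top}X_j\}_j$.

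The crucial comparison is the pointwise bound $P(i\in\mathcal{I}\mid\mathcal{X},\beta)\ge P(\text{Select }i\mid\mathcal{X},\beta)$ for every $\mathcal{X}$ and $\beta$, with the single-selection probability of Assumption \ref{assumption:arm_selection} on the right. Indeed, if $\beta^{\top}X_i$ is the strict maximum of $\{\beta^{\top}X_j\}_{j\in[K]}$ both sides equal $1$; if it is tied for the maximum with $M$ arms in total, the single-selection rule picks $i$ with probability $1/M$, whereas the symmetric tie-breaking of Assumption \ref{assumption:arm_selection_comb} includes $i$ with probability $1$ when $M\le L$ and with probability $L/M\ge 1/M$ when $M>L$; and if $\beta^{\top}X_i$ is not a maximum the right-hand side vanishes. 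Integrating this inequality against $X_i X_i^{\top}\,P(\mathcal{X})$ and using Assumption \ref{assumption:indep_arm} yields $M(\beta)\succeq M^{\mathrm{s}}(\beta):=\mathbb{E}_{X_i\sim P_i}[X_i X_i^{\top}\,g(X_i,\beta)]$ for every $\beta$, where $g(x,\beta):=P(\text{Select }i\mid X_i=x,\beta)$ is the marginalized single-selection probability; moreover $P(i\in\mathcal{I}\mid\beta)\ge P(\text{Select }i\mid\beta)>0$ for all $\beta$ by Assumption \ref{assumption:selection_possibility}, so no combinatorial strengthening of that assumption is required.

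Next I would invoke the single-selection results: $M^{\mathrm{s}}(\beta)$ is precisely the contribution of arm $i$ to $\tilde{G}_{\beta}$ that the proofs of Lemma \ref{lemma:gm_lower_bound}, Theorem \ref{theorem:low_rank_gaussian_mixture}, Corollary \ref{corollary:discrete}, and Theorem \ref{theorem:radial} bound from below, so when $P_i$ is $P_{GM}$, $P_{LGM}$, $P_D$, or $P_R$ there is a positive semidefinite $B(\beta)\preceq M^{\mathrm{s}}(\beta)$ with $\inf_{\beta}\phi_{\mathcal{S}}(B(\beta))\ge\phi_0>0$. Because $\phi_{\mathcal{S}}$ is monotone with respect to the positive semidefinite order, $\phi_{\mathcal{S}}(M(\beta))\ge\phi_{\mathcal{S}}(M^{\mathrm{s}}(\beta))\ge\phi_{\mathcal{S}}(B(\beta))\ge\phi_0$ for every $\beta$. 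Then, using $\phi_{\mathcal{S}}(c\,\Sigma)=\sqrt{c}\,\phi_{\mathcal{S}}(\Sigma)$ for $c>0$, the concavity of $\Sigma\mapsto\phi_{\mathcal{S}}(\Sigma)$, and monotonicity again — the same ingredients behind Lemma \ref{lemma:Gt} — one obtains
\begin{align}
    \phi_{\mathcal{S}}(\bar{G}_t)
    \;\ge\;
    \frac{1}{t}\sum_{s=1}^{t}\phi_{\mathcal{S}}\!\left(\frac{1}{L}M(\beta_{s-1})\right)
    \;=\;
    \frac{1}{\sqrt{L}}\cdot\frac{1}{t}\sum_{s=1}^{t}\phi_{\mathcal{S}}(M(\beta_{s-1}))
    \;\ge\;
    \frac{\phi_0}{\sqrt{L}}\;=:\;\phi'_0\;>\;0
    \,,
\end{align}
which is the claimed statement.

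I expect the only genuinely delicate part to be making the pointwise domination rigorous in the presence of ties — in particular when the top-$L$ cut falls inside a tied block, where one must appeal to the exact normalization in Assumption \ref{assumption:arm_selection_comb} — together with checking that the estimates constructed inside the single-selection proofs (the coefficients $c_n(\beta)$ in Lemma \ref{lemma:gm_lower_bound} and Lemma \ref{lemma:gmm_coef}, and the radial estimate) are monotone in the underlying selection-probability profile, so that passing from $g$ to the pointwise-larger $h$ never weakens any bound. Everything else is a line-by-line transcription of the single-selection argument with the extra factor $1/L$.
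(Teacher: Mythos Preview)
Your proof is correct, but it takes a slightly more circuitous route than the paper's. Both arguments begin identically: drop the contributions of arms $j\neq i$ and isolate arm $i$'s term $\frac{1}{L}\int X_i X_i^{\top}\,h((X_i)_\beta)\,P_i(X_i)\,dX_i$, where $h$ is the top-$L$ inclusion probability. At this point the paper simply observes that $h$ is itself a function of $(X_i)_\beta$ alone, is monotone nondecreasing in that argument, and (by Assumption~\ref{assumption:selection_possibility}) lies in $\mathcal{F}_1$; since Lemmas~\ref{lemma:gauss_second_moment_ineq} and~\ref{lemma:radial_second_moment_ineq} require nothing of $f_\beta$ beyond membership in $\mathcal{F}_1$, they apply verbatim to $h$, and the basis theorems carry over with no further work. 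Your detour---the pointwise domination $h\ge g$ followed by $M(\beta)\succeq M^{\mathrm{s}}(\beta)$ and then monotonicity of $\phi_{\mathcal{S}}$---is valid and has the appeal of packaging the single-selection results as a near black box, at the price of the tie-breaking case analysis. Note, however, that the ``delicate part'' you anticipate (monotonicity of the coefficients $c_n(\beta)$ in the selection profile) never actually arises in your own argument: once $M(\beta)\succeq M^{\mathrm{s}}(\beta)$ is established, the PSD-monotonicity of $\phi_{\mathcal{S}}$ is all that is needed, so you can drop that concern.
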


\section{Discussion and Conclusion}
\label{section:conclusion}

In this paper, the applicability of the greedy algorithm in the sparse linear contextual bandits was considered. 
We showed that the regret upper bound of the greedy algorithm is guaranteed over a wider range of the arm feature distribution than that covered by the previous works.
In addition, we demonstrated that our analytical approach is not restricted to the simple greedy algorithm, but can be applied in a variety of settings.

On the other hand, the analysis still leaves much room for development.
First, the size of $\phi_0$ has not been fully explored, in particular how the shape of the distribution, the dimension of the arm features and the number of arms affect it.\footnote{
    The quantitative lower bound discussion is difficult without further specification of the arm feature distribution. Specifically, $P(\text{Select $i$} \mid X_i, \beta)$, which appears in the lower bound evaluation, has almost no constraints on its shape under our assumptions. Avoiding additional assumptions, we give formulae that can, in principle, compute a lower bound for $\bar{\phi}$ under a given arm feature distribution, which is summarized in the appendix.
}
Secondly, for truncated distributions, although we gave the radial basis, it is conceivable that a more representational basis could exist.
Finally, we assumed that at least one arm is independent of the other arms, as stated in Assumption \ref{assumption:indep_arm}. 
Although this is looser than the assumption of all arms being independent, which is assumed in many analyses, it is an interesting subject for future research on what class of arm feature distributions can be greedy-applicable when all arms are correlated.

\section*{Acknowledgments}
HS was supported by JSPS KAKENHI Grant Numbers JP17K12646, JP21K17708, and JP21H03397, Japan.
TF was supported by JSPS KAKENHI Grant Numbers JP20H05965, JP21K11759, and JP21H03397, Japan.
NK was supported by JSPS KAKENHI Grant Numbers JP22H05001, JP20H05795, and JP21H03397, Japan and JST ERATO Grant Number JPMJER2301, Japan.
KK was supported by JSPS KAKENHI Grant Numbers JP22H05001 and JP20A402, Japan.

\bibliography{ref}

\begin{thebibliography}{27}
\providecommand{\natexlab}[1]{#1}

\bibitem[{Abbasi{-}Yadkori, P{\'{a}}l, and
  Szepesv{\'{a}}ri(2011)}]{DBLP:conf/nips/Abbasi-YadkoriPS11}
Abbasi{-}Yadkori, Y.; P{\'{a}}l, D.; and Szepesv{\'{a}}ri, C. 2011.
\newblock Improved Algorithms for Linear Stochastic Bandits.
\newblock In \emph{NIPS 2011}, 2312--2320.

\bibitem[{Abbasi{-}Yadkori, P{\'{a}}l, and
  Szepesv{\'{a}}ri(2012)}]{DBLP:journals/jmlr/Abbasi-YadkoriPS12}
Abbasi{-}Yadkori, Y.; P{\'{a}}l, D.; and Szepesv{\'{a}}ri, C. 2012.
\newblock Online-to-Confidence-Set Conversions and Application to Sparse
  Stochastic Bandits.
\newblock In \emph{{AISTATS} 2012}, volume~22, 1--9.

\bibitem[{Abe, Biermann, and Long(2003)}]{DBLP:journals/algorithmica/AbeBL03}
Abe, N.; Biermann, A.~W.; and Long, P.~M. 2003.
\newblock Reinforcement Learning with Immediate Rewards and Linear Hypotheses.
\newblock \emph{Algorithmica}, 37(4): 263--293.

\bibitem[{Ariu, Abe, and Prouti{\`{e}}re(2022)}]{DBLP:conf/icml/AriuAP22}
Ariu, K.; Abe, K.; and Prouti{\`{e}}re, A. 2022.
\newblock Thresholded Lasso Bandit.
\newblock In \emph{{ICML} 2022}, volume 162, 878--928.

\bibitem[{Auer(2002)}]{DBLP:journals/jmlr/Auer02}
Auer, P. 2002.
\newblock Using Confidence Bounds for Exploitation-Exploration Trade-offs.
\newblock \emph{{JMLR}}, 3: 397--422.

\bibitem[{Bastani and Bayati(2020)}]{DBLP:journals/ior/BastaniB20}
Bastani, H.; and Bayati, M. 2020.
\newblock Online Decision Making with High-Dimensional Covariates.
\newblock \emph{Oper. Res.}, 68(1): 276--294.

\bibitem[{Bastani, Bayati, and Khosravi(2021)}]{bastani2021mostly}
Bastani, H.; Bayati, M.; and Khosravi, K. 2021.
\newblock Mostly exploration-free algorithms for contextual bandits.
\newblock \emph{Management Science}, 67(3): 1329--1349.

\bibitem[{Bouneffouf and Rish(2019)}]{DBLP:journals/corr/abs-1904-10040}
Bouneffouf, D.; and Rish, I. 2019.
\newblock A Survey on Practical Applications of Multi-Armed and Contextual
  Bandits.
\newblock \emph{arXiv preprint arXiv:1904.10040}.

\bibitem[{B{\"u}hlmann and Van De~Geer(2011)}]{buhlmann2011statistics}
B{\"u}hlmann, P.; and Van De~Geer, S. 2011.
\newblock \emph{Statistics for high-dimensional data: methods, theory and
  applications}.
\newblock Springer Science \& Business Media.

\bibitem[{Carpentier and Munos(2012)}]{DBLP:journals/jmlr/CarpentierM12}
Carpentier, A.; and Munos, R. 2012.
\newblock Bandit Theory meets Compressed Sensing for high dimensional
  Stochastic Linear Bandit.
\newblock In \emph{{AISTATS} 2012}, volume~22, 190--198.

\bibitem[{Cella et~al.(2023)Cella, Lounici, Pacreau, and
  Pontil}]{cella2023multi}
Cella, L.; Lounici, K.; Pacreau, G.; and Pontil, M. 2023.
\newblock Multi-task representation learning with stochastic linear bandits.
\newblock In \emph{{AISTATS 2023}}, 4822--4847.

\bibitem[{Chu et~al.(2011)Chu, Li, Reyzin, and
  Schapire}]{DBLP:journals/jmlr/ChuLRS11}
Chu, W.; Li, L.; Reyzin, L.; and Schapire, R.~E. 2011.
\newblock Contextual Bandits with Linear Payoff Functions.
\newblock In \emph{{AISTATS} 2011}, volume~15, 208--214.

\bibitem[{Dani, Hayes, and Kakade(2008)}]{DBLP:conf/colt/DaniHK08}
Dani, V.; Hayes, T.~P.; and Kakade, S.~M. 2008.
\newblock Stochastic Linear Optimization under Bandit Feedback.
\newblock In \emph{{COLT} 2008}, 355--366.

\bibitem[{Durand et~al.(2018)Durand, Achilleos, Iacovides, Strati, Mitsis, and
  Pineau}]{DBLP:conf/mlhc/DurandAISMP18}
Durand, A.; Achilleos, C.; Iacovides, D.; Strati, K.; Mitsis, G.~D.; and
  Pineau, J. 2018.
\newblock Contextual Bandits for Adapting Treatment in a Mouse Model of de Novo
  Carcinogenesis.
\newblock In \emph{{MLHC} 2018}, volume~85, 67--82.

\bibitem[{Kannan et~al.(2018)Kannan, Morgenstern, Roth, Waggoner, and
  Wu}]{DBLP:conf/nips/KannanMRWW18}
Kannan, S.; Morgenstern, J.; Roth, A.; Waggoner, B.; and Wu, Z.~S. 2018.
\newblock A Smoothed Analysis of the Greedy Algorithm for the Linear Contextual
  Bandit Problem.
\newblock In \emph{{NeurIPS} 2018}, 2231--2241.

\bibitem[{Kim and Paik(2019)}]{DBLP:conf/nips/KimP19}
Kim, G.; and Paik, M.~C. 2019.
\newblock Doubly-Robust Lasso Bandit.
\newblock In \emph{NeurIPS 2019}, 5869--5879.

\bibitem[{Lattimore and Szepesv{\'a}ri(2020)}]{lattimore2020bandit}
Lattimore, T.; and Szepesv{\'a}ri, C. 2020.
\newblock \emph{Bandit algorithms}.
\newblock Cambridge University Press.

\bibitem[{Li et~al.(2010)Li, Chu, Langford, and
  Schapire}]{DBLP:conf/www/LiCLS10}
Li, L.; Chu, W.; Langford, J.; and Schapire, R.~E. 2010.
\newblock A contextual-bandit approach to personalized news article
  recommendation.
\newblock In \emph{{WWW} 2010}, 661--670.

\bibitem[{Oh and Iyengar(2021)}]{DBLP:conf/aaai/OhI21}
Oh, M.; and Iyengar, G. 2021.
\newblock Multinomial Logit Contextual Bandits: Provable Optimality and
  Practicality.
\newblock In \emph{{AAAI} 2021}, 9205--9213.

\bibitem[{Oh, Iyengar, and Zeevi(2021)}]{DBLP:conf/icml/OhIZ21}
Oh, M.; Iyengar, G.; and Zeevi, A. 2021.
\newblock Sparsity-Agnostic Lasso Bandit.
\newblock In \emph{{ICML} 2021}, volume 139, 8271--8280.

\bibitem[{Raghavan et~al.(2018)Raghavan, Slivkins, Vaughan, and
  Wu}]{DBLP:conf/colt/RaghavanSVW18}
Raghavan, M.; Slivkins, A.; Vaughan, J.~W.; and Wu, Z.~S. 2018.
\newblock The Externalities of Exploration and How Data Diversity Helps
  Exploitation.
\newblock In \emph{{COLT} 2018}, volume~75, 1724--1738.

\bibitem[{Rusmevichientong and
  Tsitsiklis(2010)}]{DBLP:journals/mor/RusmevichientongT10}
Rusmevichientong, P.; and Tsitsiklis, J.~N. 2010.
\newblock Linearly Parameterized Bandits.
\newblock \emph{Math. Oper. Res.}, 35(2): 395--411.

\bibitem[{Sivakumar, Wu, and Banerjee(2020)}]{sivakumar2020structured}
Sivakumar, V.; Wu, S.; and Banerjee, A. 2020.
\newblock Structured linear contextual bandits: A sharp and geometric smoothed
  analysis.
\newblock In \emph{{ICML} 2020}, 9026--9035.

\bibitem[{Sivakumar, Zuo, and Banerjee(2022)}]{DBLP:conf/icml/SivakumarZ022}
Sivakumar, V.; Zuo, S.; and Banerjee, A. 2022.
\newblock Smoothed Adversarial Linear Contextual Bandits with Knapsacks.
\newblock In \emph{{ICML} 2022}, volume 162, 20253--20277.

\bibitem[{Tibshirani(1996)}]{tibshirani1996regression}
Tibshirani, R. 1996.
\newblock Regression shrinkage and selection via the lasso.
\newblock \emph{Journal of the Royal Statistical Society: Series B
  (Methodological)}, 58(1): 267--288.

\bibitem[{Wainwright(2019)}]{wainwright2019high}
Wainwright, M.~J. 2019.
\newblock \emph{High-dimensional statistics: A non-asymptotic viewpoint},
  volume~48.
\newblock Cambridge university press.

\bibitem[{Wang, Wei, and Yao(2018)}]{DBLP:conf/icml/WangWY18}
Wang, X.; Wei, M.~M.; and Yao, T. 2018.
\newblock Minimax Concave Penalized Multi-Armed Bandit Model with
  High-Dimensional Convariates.
\newblock In \emph{{ICML} 2018}, volume~80, 5187--5195.

\end{thebibliography}

\appendix
\onecolumn

\section{Related works}
\paragraph{Stochastic linear contextual bandit}
The linear contextual bandit problem was first introduced by \citet{DBLP:journals/algorithmica/AbeBL03} in the setting of a fixed number of arms.  
Later, \citet{DBLP:journals/jmlr/Auer02} proposed LinRel algorithm for the arm selection by upper confidence intervals obtained by SVD. 
Subsequently, \citet{DBLP:conf/www/LiCLS10} proposed LinUCB algorithm that does not require computationally expensive SVD; a theoretical analysis of the LinUCB variants was carried out by \citet{DBLP:journals/jmlr/ChuLRS11} and a regret upper bound of $O(\sqrt{d T \log^3 T})$ was given.

Another line of research is the setting where the arm features are given as a compact subset of $\mathbb{R}^d$, rather than a fixed finite number of arms.
In this setting, arm selection algorithms based on the confidence ellipsoids were developed by
\citet{DBLP:conf/colt/DaniHK08},
\citet{DBLP:journals/mor/RusmevichientongT10},
and 
\citet{DBLP:conf/nips/Abbasi-YadkoriPS11}.
In \citet{DBLP:conf/colt/DaniHK08} and \citet{DBLP:journals/mor/RusmevichientongT10},
their algorithms gave $O(d \sqrt{T \log^3 T})$ regret upper bounds, which was improved by 
\citet{DBLP:conf/nips/Abbasi-YadkoriPS11} 
to $O(d \sqrt{T} \log T)$.

\paragraph{Assumptions on the arm feature distribution}
The relaxed symmetry was introduced by \citet{bastani2021mostly} as a sufficient condition for their covariate diversity condition, which guarantees sample diversity under the greedy algorithm in the dense setting. 
This relaxed symmetry was adopted by \citet{DBLP:conf/icml/OhIZ21} to analyze the performance of the greedy algorithm in a sparse setting, and has subsequently been adopted in various other settings, such as sequential assortment selection~\citep{DBLP:conf/aaai/OhI21} and multitask learning~\citep{cella2023multi}.

The perturbed context setting was employed in the smoothed analysis, in which a random isotropic Gaussian perturbation was added to the arm features.
The analysis for the greedy algorithm was performed by \citet{DBLP:conf/nips/KannanMRWW18}.
Subsequently, analyses have been carried out in various settings, including \citet{DBLP:conf/colt/RaghavanSVW18} for LinUCB, \citet{sivakumar2020structured} for structured unknown parameters and \citet{DBLP:conf/icml/SivakumarZ022} for contextual linear bandit with knapsacks.

\section{Proofs of the lemmas in Section \ref{section:preliminary}}
    We give the proofs of the lemmas in Section \ref{section:preliminary} for completeness, although these are not essentially different from those presented in \citet{DBLP:conf/icml/OhIZ21}.

\subsection{Proof of Lemma \ref{lemma:epsX}}
\begin{proof}
    For the $\sigma$-algebra $
    \mathcal{F}_{t} := \sigma(\mathcal{X}^1, a_1, r_1, \dots, \mathcal{X}^{t-1}, a_{t-1}, r_{t-1}, \mathcal{X}^t, a_t)$,
    $\epsilon_t (X_{a_t}^t)_{i}$
    satisfies
    $
        \mathbb{E}\left[
            \epsilon_t (X_{a_t}^t)_{i}
            \mid
            \mathcal{F}_{t}
        \right]
        =
        (X_{a_t}^t)_{i}
        \mathbb{E}\left[
            \epsilon_t 
            \mid
            \mathcal{F}_{t}
        \right]
        = 0
    $
    for $t \in [T]$ and $i \in [d]$.
    Then, $\{\epsilon_t X_{a_t}^t\}_{t=1}^{T}$ is a martingale difference sequence for the filtration $\{\mathcal{F}_{t}\}_{t=1}^{T}$.
    Using 
    $\|X_{a_t}^{t}\|_{\infty} \leq x_{\mathrm{max}}$ for any $t \in [T]$
    (from Assumption \ref{assumption:x_beta_bound}),
    the following inequality also holds:
    \begin{align}
        \mathbb{E}\left[
            e^{\lambda \epsilon_t (X_{a_t}^t)_{i}}
            \mid \mathcal{F}_{t}
        \right]
        \leq
            e^{\lambda^2 \sigma^2 x_{\mathrm{max}}^2 / 2}
        \,,
    \end{align}
    almost surely for any $\lambda \in \mathbb{R}$, $t \in [T]$, and $i \in [d]$ due to the $\sigma$-sub-Gaussianity of $\epsilon_{t}$. 
    From Theorem 2.19 (b) in \citet{wainwright2019high}, the following inequality holds for any $\delta' >  0$:
    \begin{align}
        P\left[
            \frac{1}{t}
            \left|
                \sum_{s=1}^{t}
                \epsilon_s (X_{a_s}^s)_{i}
            \right|
            \geq \delta'
        \right]
        \leq 
        2 e^{
            -\frac{
                t \delta'^{2}
            }{
                2 x_{\mathrm{max}}^2 \sigma^2
            }
        }
        \,,
    \end{align}
    which implies
    \begin{align}
        P\left[
            \frac{1}{t}
            \left\|
                \sum_{s=1}^{t}
                \epsilon_s X_{a_s}^s
            \right\|_{\infty}
            \leq \delta'
        \right]
        & =
            1 -
            P\left[
                \bigcup_{i=1}^{d}
                \left(
                \frac{1}{t}
                \left|
                    \sum_{s=1}^{t}
                    \epsilon_s 
                    \left(
                    X_{a_s}^s
                    \right)_{i}
                \right|
                \geq \delta'
                \right)
            \right]
    \nonumber
    \\
        & \geq 
            1 - 
            \sum_{i=1}^{d}
            P\left[
                \frac{1}{t}
                \left|
                    \sum_{s=1}^{t}
                    \epsilon_s 
                    \left(
                    X_{a_s}^s
                    \right)_{i}
                \right|
                \geq \delta'
            \right]
    \nonumber
    \\
        & \geq 
            1 - 
            2 d e^{
                -\frac{
                    t \delta'^2
                }{
                    2 x_{\mathrm{max}}^2 \sigma^2
                }
            }
    \,.
    \end{align}
    By setting
    $
        \delta'
        =
        x_{\mathrm{max}} \sigma 
        \sqrt{
            \frac{\delta^2 + 2 \log d}{t}
        }
    $ for $\delta > 0$, we finally obtain:
    \begin{align}
        P\left[
            \frac{1}{t}
            \left\|
                \sum_{s=1}^{t}
                \epsilon_s X_{a_s}^s
            \right\|_{\infty}
            \leq 
            x_{\mathrm{max}} \sigma
            \sqrt{
                \frac{\delta^2 + 2 \log d}{t}
            }
        \right]
        \geq 
            1 - 2 e^{-\delta^2/2}
    \,.
    \end{align}
\end{proof}

\subsection{Proof of Lemma \ref{lemma:cc_btwn_empirical_expected}}
\begin{proof}
    We define
    \begin{align}
        \gamma_{ij}^{t}
        :=
        \frac{1}{2 x_{\mathrm{max}}^2}
        \left(
            X_{a_t}^t X_{a_t}^{t\top} 
            -
            \mathbb{E}[
                X_{a_t}^t X_{a_t}^{t\top} 
                \mid \mathcal{F}'_{t-1}
            ]
        \right)_{ij}
        \,,
    \end{align}
    for $1 \leq i \leq j \leq d$.
    Then
    $\gamma_{ij}^t$ satisfies
    $\mathbb{E}[\gamma_{ij}^t \mid \mathcal{F}'_{t-1}] = 0$, 
    and since $\|X_i\|_{\infty} \leq x_{\mathrm{max}}$ for $i \in [d]$ (Assumption \ref{assumption:x_beta_bound}), 
    $\mathbb{E}[|\gamma_{ij}^t|^m | \mathcal{F}'_{t-1}] \leq 1$ for any integer $m \geq 2$.
    Then, for $\delta > 0$, Lemma 8 in \citet{DBLP:conf/icml/OhIZ21} gives:
    \footnote{
        We have done the union bound  for $1 \leq i \leq j \leq d$ more precisely, so the factor that is $\log(2d^2)$ in \citet{DBLP:conf/icml/OhIZ21} becomes $\log(d(d+1))$.
    }
    \begin{align}
        P \left(
            \max_{1\leq i \leq j \leq d}
            \frac{1}{t}
            \left|
                \sum_{s=1}^{t} \gamma_{ij}^s 
            \right|
            \geq
            \delta^2
            + 
            \sqrt{2} \delta
            +
            \sqrt{\frac{2 \log (d (d+1))}{t}}
            +
            \frac{\log (d (d+1))}{t}
        \right)
        \leq
        e^{-t \delta^2}
        \,.
    \end{align}
    We here note that the left-hand side of the inequality in the probability can be rewritten as follows:
    \begin{align}
        \max_{1\leq i \leq j \leq d}
        \frac{1}{t}
        \left|
            \sum_{s=1}^{t} \gamma_{ij}^s
        \right|
        &
            =
            \max_{1\leq i \leq j \leq d}
            \frac{1}{2 x_{\mathrm{max}}^2}
            \left|
            \frac{1}{t}
            \sum_{s=1}^{t}
            \left(
                X_{a_s}^s X_{a_s}^{s\top} 
                -
                \mathbb{E}[
                    X_{a_s}^s X_{a_s}^{s\top} 
                    | \mathcal{F}'_{s-1}
                ]
            \right)_{ij}
            \right|
        \nonumber
        \\ 
        &
            =
            \frac{1}{2 x_{\mathrm{max}}^2}
            \left\|
            \frac{1}{t}
            \sum_{s=1}^{t}
                \left(
                X_{a_s}^s X_{a_s}^{s\top} 
                -
                \mathbb{E}[
                    X_{a_s}^s X_{a_s}^{s\top} 
                    | \mathcal{F}'_{s-1}
                ]
                \right)
            \right\|_{\infty}
        \nonumber
        \\ 
        &
            =
            \frac{
                \left\|
                    G_t
                    -
                    \bar{G}_t
                \right\|_{\infty}
            }{2 x_{\mathrm{max}}^2}
        \,.
    \end{align}
    Here we define $\|A\|_{\infty} := \max_{i, j \in [d]} |A_{ij}|$ for a matrix $A \in \mathbb{R}^{d \times d}$.
    Taking $t \geq \log (d(d+1)) /\delta^2$, $\delta \leq 2 - \sqrt{2}$, 
    the right-hand side of the inequality in the probability is bounded as:
    \begin{align}
        \delta^2
        +
        \sqrt{2} \delta
        +
        \sqrt{\frac{2 \log (d (d+1))}{t}}
        +
        \frac{\log (d (d+1))}{t}
        & \leq
            2 \delta^2 + 2 \sqrt{2} \delta
        \nonumber
        \\
        & =
            2 (\delta + \sqrt{2}) \delta
        \nonumber
        \\
        & \leq
            4 \delta
        \,,
    \end{align}
    and we obtain:
    \begin{align}
        P \left(
            \frac{
                \|G_t - \bar{G}_t\|_{\infty}
            }{
                2 x_{\mathrm{max}}^2
            }
            \leq
            4 \delta
        \right)
        \geq
        1 - e^{-t \delta^2}
        \,.
    \end{align}
    Corollary 6.8 in \citet{buhlmann2011statistics} states that if $\bar{G}_t$ has a positive compatibility constant $\bar{\phi} > 0$, and the inequality 
    $
        \|G_t - \bar{G}_t\|_{\infty}
        \leq
        \bar{\phi}^2 / 32 s_0
    $
    is satisfied, then
    $G_t$ has a compatibility constant $\phi_t^2 \geq \bar{\phi}^2 / 2$.
    Therefore, 
    by replacing $\delta$ with $\kappa(\bar{\phi})$ that satisfies the following conditions:
    \begin{align}
        \kappa(\bar{\phi}) \leq \min 
        \left(
            2 - \sqrt{2},
            \frac{\bar{\phi}^2}{256 x_{\mathrm{max}}^2 s_0}
        \right)
        \,,
        \hspace{5pt}
        t \geq \frac{\log(d(d+1))}{\kappa(\bar{\phi})^2} 
        \,,
    \end{align}
    we see that the following inequality holds:
    \begin{align}
        \phi_t^2 \geq \frac{\bar{\phi}^2}{2}
        \,,
    \end{align}
    with probability at least $1 - e^{-t \kappa(\bar{\phi})^2}$.
\end{proof}

\subsection{Proof of Lemma \ref{lemma:lasso_estimator}}
\begin{proof}
    From Lemma \ref{lemma:basic_eq}, we have the following basic inequality for LASSO:
    \begin{align}
        \frac{1}{t}\sum_{s=1}^{t} (
            X^{s\top}_{a_s}
            (
            \hat{\beta}_t
            -
            \beta^{*}
            )
        )^2
        + \lambda_t \|\hat{\beta}_t\|_1
        \leq
        \frac{2}{t}\sum_{s=1}^{t} 
        \epsilon_s X^{s\top}_{a_s} 
            (
            \hat{\beta}_t
            -
            \beta^{*}
            )
        +
        \lambda_t \|\beta^{*}\|_1
        \,.
    \end{align}
    Under the condition in which Eq~\eqref{eq:epsX} holds, setting  
    \begin{align}
        \label{eq:lambda_t}
        \lambda_{t} \geq 4 
        x_{\mathrm{max}} \sigma \sqrt{ (\delta^2 + 2 \log d) / t
        }
        \,,
    \end{align}
    derives:
    \begin{align}
        &
            \frac{1}{t}\sum_{s=1}^{t} (
                X^{s\top}_{a_s} 
                (
                \hat{\beta}_t
                -
                \beta^{*}
                )
            )^2
            + \lambda_t \|\hat{\beta}_t\|_1
            \leq
            \frac{2}{t} 
            \left\|
                \sum_{s=1}^{t}
                \epsilon_s
                X^{s\top}_{a_s}
            \right\|_{\infty}
            \|
            \hat{\beta}_t
            -
            \beta^*
            \|_1
            +
            \lambda_t \|\beta^{*}\|_1
        \nonumber
        \\
        \Rightarrow
        \hspace{5pt}
        &
            \frac{1}{t}\sum_{s=1}^{t} (
                X^{s\top}_{a_s} 
                (
                \hat{\beta}_t
                -
                \beta^{*}
                )
            )^2
            + \lambda_t \|\hat{\beta}_t\|_1
            \leq
            2 x_{\mathrm{max}} \sigma \sqrt{
                \frac{\delta^2 + 2 \log d}{t}
            }
            \|
            \hat{\beta}_t
            -
            \beta^*
            \|_1
            +
            \lambda_t \|\beta^{*}\|_1
        \nonumber
        \\
        \Rightarrow
        \hspace{5pt}
        &
            \frac{1}{t}\sum_{s=1}^{t} (
                X^{s\top}_{a_s} 
                (
                \hat{\beta}_t
                -
                \beta^{*}
                )
            )^2
            + \lambda_t \|\hat{\beta}_t\|_1
            \leq
            \frac{\lambda_{t}}{2} 
            \|
            \hat{\beta}_t
            -
            \beta^*
            \|_1
            +
            \lambda_t \|\beta^{*}\|_1
        \nonumber
        \\
        \Leftrightarrow
        \hspace{5pt}
        &
            \frac{1}{t}\sum_{s=1}^{t} (
                X^{s\top}_{a_s} 
                (
                \hat{\beta}_t
                -
                \beta^{*}
                )
            )^2
            +
            \lambda_t \|\hat{\beta}_t\|_{\mathcal{S}^c, 1}
            +
            \lambda_t \|\hat{\beta}_t\|_{\mathcal{S}, 1}
                \leq
                    \frac{\lambda_{t}}{2} 
                    \|
                    \hat{\beta}_t
                    -
                    \beta^*
                    \|_{\mathcal{S}, 1}
                    +
                    \frac{\lambda_{t}}{2} 
                    \|
                    \hat{\beta}_t
                    \|_{\mathcal{S}^c, 1}
                    +
                    \lambda_t \|\beta^{*}\|_{\mathcal{S}, 1}
        \nonumber
        \\
        \Leftrightarrow
        \hspace{5pt}
        &
            \frac{1}{t}\sum_{s=1}^{t} (
                X^{s\top}_{a_s} 
                (
                \hat{\beta}_t
                -
                \beta^{*}
                )
            )^2
            +
            \frac{\lambda_{t}}{2} 
            \|\hat{\beta}_t\|_{\mathcal{S}^c, 1}
            +
            \lambda_t \|\hat{\beta}_t\|_{\mathcal{S}, 1}
                \leq
                    \frac{\lambda_{t}}{2} 
                    \|
                    \hat{\beta}_t
                    -
                    \beta^*
                    \|_{\mathcal{S}, 1}
                    +
                    \lambda_t \|\beta^{*}\|_{\mathcal{S}, 1}
        \nonumber
        \\
        \Leftrightarrow
        \hspace{5pt}
        &
            \frac{1}{t}\sum_{s=1}^{t} (
                X^{s\top}_{a_s} 
                (
                \hat{\beta}_t
                -
                \beta^{*}
                )
            )^2
            +
            \frac{\lambda_{t}}{2} 
            \|\hat{\beta}_t\|_{\mathcal{S}^c, 1}
            +
                \leq
                    \frac{\lambda_{t}}{2} 
                    \|
                    \hat{\beta}_t
                    -
                    \beta^*
                    \|_{\mathcal{S}, 1}
                    +
                    \lambda_t (
                    \|\beta^{*}\|_{\mathcal{S}, 1}
                    -
                    \|\hat{\beta}_t\|_{\mathcal{S}, 1}
                    )
        \nonumber
        \\
        \Rightarrow
        \hspace{5pt}
        &
            \frac{1}{t}\sum_{s=1}^{t} (
                X^{s\top}_{a_s} 
                (
                \hat{\beta}_t
                -
                \beta^{*}
                )
            )^2
            +
            \frac{\lambda_{t}}{2} 
            \|\hat{\beta}_t\|_{\mathcal{S}^c, 1}
                \leq
                    \frac{3 \lambda_{t}}{2} 
                    \|
                    \hat{\beta}_t
                    -
                    \beta^*
                    \|_{\mathcal{S}, 1}
        \nonumber
        \\
        \Leftrightarrow
        \hspace{5pt}
        &
            \frac{2}{t}\sum_{s=1}^{t} (
                X^{s\top}_{a_s} 
                (
                \hat{\beta}_t
                -
                \beta^{*}
                )
            )^2
            +
            \lambda_t \|\hat{\beta}_t\|_{\mathcal{S}^c, 1}
                \leq
                    3 \lambda_{t}
                    \|
                    \hat{\beta}_t
                    -
                    \beta^*
                    \|_{\mathcal{S}, 1}
        \nonumber
        \\
        \label{eq:basic_eq_in_epsX_event}
        \Leftrightarrow
        \hspace{5pt}
        &
            2
            (\hat{\beta}_t^{\top} - \beta^{*\top})
            G_t
            (\hat{\beta}_t - \beta^*)
            +
            \lambda_t \|\hat{\beta}_t\|_{\mathcal{S}^c, 1}
                \leq
                    3 \lambda_{t}
                    \|
                    \hat{\beta}_t
                    -
                    \beta^*
                    \|_{\mathcal{S}, 1}
        \,.
    \end{align}
    where we use the definition of the empirical Gram-matrix: $G_t := \frac{1}{t}\sum_{s=1}^{t} X^s_{a_s} X^{s\top}_{a_s}$ in the last line.
    The last inequality implies that 
    when Eq.~\eqref{eq:epsX} holds, the gap $\hat{\beta}_t - \beta^*$ is an element of $
    \{
        V \in \mathbb{R}^{d}: 
        \|V\|_{\mathcal{S}^c, 1}
        \leq
        3 \|V\|_{\mathcal{S}, 1}
    \}
    $ 
    and 
    therefore, if the compatibility constant of $G_t$ is positive, i.e., $\phi_t > 0$, then, from the compatibility condition, 
    $\| \hat{\beta}_t - \beta^* \|_{\mathcal{S}, 1}$ 
    is upper bounded by 
    $
        (\hat{\beta}_t^{\top} - \beta^{*\top})
        G_t
        (\hat{\beta}_t - \beta^*)
    $.
    Specifically, the following holds:
    \begin{align}
        2
        (\hat{\beta}_t^{\top} - \beta^{*\top})
        G_t
        (\hat{\beta}_t - \beta^*)
        +
        \lambda_t \|
            \hat{\beta}_t - \beta^*
        \|_1
        =
        &
            2
            (\hat{\beta}_t^{\top} - \beta^{*\top})
            G_t
            (\hat{\beta}_t - \beta^*)
            +
            \lambda_t \|
                \beta^*
            \|_{\mathcal{S}^c,1}
            +
            \lambda_t \|
                \hat{\beta}_t - \beta^*
            \|_{\mathcal{S},1}
        \nonumber
        \\
        \leq
        &
            4
            \lambda_t \|
                \hat{\beta}_t - \beta^*
            \|_{\mathcal{S},1}
        \nonumber
        \\
        \leq
        &
            \frac{4 \lambda_t \sqrt{s_0}}{\phi_t}
            \sqrt{
                (\hat{\beta}_t^{\top} - \beta^{*\top})
                G_t
                (\hat{\beta}_t - \beta^*)
            }
        \nonumber
        \\
        \leq
        &
            \frac{ 4 \lambda_t^2 s_0}{\phi_t^2} 
            +
            (\hat{\beta}_t^{\top} - \beta^{*\top})
            G_t
            (\hat{\beta}_t - \beta^*)
        \,,
    \end{align}
    In the second line, we use Eq.~\eqref{eq:basic_eq_in_epsX_event} 
    and in the third line, we use the compatibility condition.
    Under the assumption where Eq.~\eqref{eq:cc_btwn_empirical_expected} holds, we obtain
    \begin{align}
        (\hat{\beta}_t^{\top} - \beta^{*\top})
        G_t
        (\hat{\beta}_t - \beta^*)
        +
        \lambda_t \|
            \hat{\beta}_t - \beta^*
        \|_1
        \leq
            \frac{ 4 \lambda_t^2 s_0}{\phi_t^2}
        \leq
            \frac{ 8 \lambda_t^2 s_0}{\bar{\phi}^2}
        \,.
    \end{align}
    The above inequality implies
    that 
    under the conditions Eq.~\eqref{eq:epsX} and Eq.~\eqref{eq:cc_btwn_empirical_expected}, 
    and $\lambda_t$ satisfying Eq.~\eqref{eq:lambda_t}, the following inequality holds:
    \begin{align}
        \|
            \beta^* - \hat{\beta}_t
        \|_1
        \leq
        &
            \frac{ 8 \lambda_t s_0}{\bar{\phi}^2}
        \,,
    \end{align}
    which is the claim of the lemma.
\end{proof}

\begin{lemma}[Lemma 6.1 in \citet{buhlmann2011statistics}]

\label{lemma:basic_eq}
    Under the reward model of 
    $
        r_{t} := 
            X_{a_t}^{t\top} \beta^*
            + \epsilon_t
    $, and LASSO estimator 
    $
        \hat{\beta}_{t} = 
        \mathrm{argmin}_{\beta} \frac{1}{t}\sum_{s=1}^{t} (r_s - X^{s\top}_{a_s} \beta)^2 + \lambda_t \|\beta\|_1
    $,
    the following basic inequality holds:
    \begin{align}
        \label{eq:basic_eq}
        \frac{1}{t}\sum_{s=1}^{t} (
            X^{s\top}_{a_s} 
            (
            \hat{\beta}_t
            -
            \beta^{*}
            )
        )^2
        + \lambda_t \|\hat{\beta}_t\|_1
        \leq
        \frac{2}{t}\sum_{s=1}^{t} 
        \epsilon_s X^{s\top}_{a_s} 
            (
            \hat{\beta}_t
            -
            \beta^{*}
            )
        +
        \lambda_t \|\beta^{*}\|_1
        \,.
    \end{align}
\end{lemma}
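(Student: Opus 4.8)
The plan is to obtain the basic inequality as a purely deterministic algebraic consequence of the optimality of the LASSO estimator combined with the reward model; no probabilistic argument or concentration bound is needed at this stage, since the inequality holds pointwise for every realization of the data, with the stochastic handling of the noise deferred to later lemmas.

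First I would invoke the defining property of $\hat{\beta}_t$: because it minimizes the penalized empirical squared loss over all $\beta \in \mathbb{R}^d$ and $\beta^*$ is a feasible point, the objective value at $\hat{\beta}_t$ is at most its value at $\beta^*$, giving
\[
    \frac{1}{t}\sum_{s=1}^{t}\bigl(r_s - X_{a_s}^{s\top}\hat{\beta}_t\bigr)^2 + \lambda_t \|\hat{\beta}_t\|_1 \leq \frac{1}{t}\sum_{s=1}^{t}\bigl(r_s - X_{a_s}^{s\top}\beta^*\bigr)^2 + \lambda_t \|\beta^*\|_1.
\]
Next I would substitute the reward model $r_s = X_{a_s}^{s\top}\beta^* + \epsilon_s$ into both residuals. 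On the right-hand side the residual reduces to $r_s - X_{a_s}^{s\top}\beta^* = \epsilon_s$, so its squared-loss contribution is $\frac{1}{t}\sum_s \epsilon_s^2$. On the left-hand side I would rewrite the residual as $r_s - X_{a_s}^{s\top}\hat{\beta}_t = \epsilon_s - X_{a_s}^{s\top}(\hat{\beta}_t - \beta^*)$ and expand the square, producing the three terms $\epsilon_s^2$, the cross term $-2\epsilon_s X_{a_s}^{s\top}(\hat{\beta}_t - \beta^*)$, and the quadratic form $\bigl(X_{a_s}^{s\top}(\hat{\beta}_t - \beta^*)\bigr)^2$.

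Finally, after this substitution the term $\frac{1}{t}\sum_s \epsilon_s^2$ appears identically on both sides and cancels. Moving the cross term to the right-hand side — which is exactly the step that produces the factor of two multiplying the noise inner product — leaves precisely Eq.~\eqref{eq:basic_eq}. The argument is elementary throughout, so there is no genuine obstacle; the only point warranting care is the sign and coefficient bookkeeping of the cross term, since that term is what transfers the stochastic contribution $\frac{2}{t}\sum_s \epsilon_s X_{a_s}^{s\top}(\hat{\beta}_t - \beta^*)$ — subsequently controlled via Lemma~\ref{lemma:epsX} — into the final bound.
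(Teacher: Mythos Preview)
Your proposal is correct and follows essentially the same route as the paper's own proof: start from the optimality inequality for $\hat{\beta}_t$ versus $\beta^*$, substitute $r_s = X_{a_s}^{s\top}\beta^* + \epsilon_s$, expand the square, cancel the common $\frac{1}{t}\sum_s \epsilon_s^2$, and rearrange the cross term. There is nothing to add.
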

\begin{proof}
    From the definition, the following inequality holds:
    \begin{align}
        &
        \frac{1}{t}\sum_{s=1}^{t} (r_s - X^{s\top}_{a_s} \hat{\beta}_t)^2 + \lambda_t \|\hat{\beta}_t\|_1
        \leq
        \frac{1}{t}\sum_{s=1}^{t} (r_s - X^{s\top}_{a_s} \beta^{*})^2
        + \lambda_t \|\beta^{*}\|_1
        \,, 
        \nonumber
        \\
        \Rightarrow
        \hspace{5pt}
        &
            \frac{1}{t}\sum_{s=1}^{t} (
                X^{s\top}_{a_s} \beta^{*}
                - X^{s\top}_{a_s} \hat{\beta}_t
                + \epsilon_s
            )^2 
            + \lambda_t \|\hat{\beta}_t\|_1
            \leq
            \frac{1}{t}\sum_{s=1}^{t} (
                \epsilon_s
            )^2
            + \lambda_t \|\beta^{*}\|_1
            \,, 
        \nonumber
        \\
        \Rightarrow
        \hspace{5pt}
            &
            \frac{1}{t}\sum_{s=1}^{t} (
                X^{s\top}_{a_s} 
                (\beta^{*} - \hat{\beta}_t)
            )^2
            +
            \frac{2}{t}\sum_{s=1}^{t} 
            \epsilon_s X^{s\top}_{a_s} (
                \beta^{*} -\hat{\beta}_t
            )
            + \lambda_t \|\hat{\beta}_t\|_1
            \leq
            \lambda_t \|\beta^{*}\|_1
        \,.
    \end{align}
    The last inequality derives Eq.~\eqref{eq:basic_eq}.
\end{proof}

\subsection{Proof of Lemma \ref{lemma:greedy_bound_empirical}}
\begin{proof}
    We define the following events at round $t$:
    \begin{align}
        \mathcal{E}_{\mathrm{gap}}^t
        = \{
            \|\beta^* - \hat{\beta}_t \|_1 \leq 
            8\lambda_t s_0 / \bar{\phi}^2
        \}
        \,,
    \end{align}
    and below, we set $\lambda_{t} = 4 x_{\mathrm{max}} \sigma \sqrt{ (\delta^2 + 2 \log d) / t}$.
    By Lemma \ref{lemma:lasso_estimator}, the above inequality is valid when both events Eq.~\eqref{eq:epsX} and Eq.~\eqref{eq:cc_btwn_empirical_expected} occur simultaneously.
    From Lemmas \ref{lemma:epsX} and \ref{lemma:cc_btwn_empirical_expected}, the probability where the event $\mathcal{E}_{\mathrm{gap}}^t$ holds is given by:
    \begin{align}
        P(\mathcal{E}_{\mathrm{gap}}^t)
        \geq
        1 - 2 e^{-\delta^2/2} - e^{-t \kappa(\bar{\phi})^2}\,,
    \end{align}
    for
    $t \geq T_0 := \log(d (d+1))/\kappa(\bar{\phi})$,
    where 
    $\kappa(\bar{\phi}) := \min (2 - \sqrt{2}, \bar{\phi}^2 / (256 x_{\mathrm{max}}^2 s_0))$.
    We define the regret at round $t$ as follows:
    $
        \mathrm{reg} (t) : 
        =
            X_{a_t^*}^{t\top} \beta^* 
            - X_{a_t}^{t\top} \beta^*
    $.
    Below, we use $\mathrm{reg}(t) \leq 2 x_{\mathrm{max}} b$, and the following inequality for the greedy algorithm:
    \begin{align}
        \mathrm{reg}(t) 
        & = 
            X_{a_t^*}^{t\top} \beta^* 
            - X_{a_t}^{t\top} \beta^*
        \nonumber
        \\
        & = 
            X_{a_t^*}^{t\top} \beta^* 
            - X_{a_t^*}^{t\top} \hat{\beta}_{t-1}
            + X_{a_t^*}^{t\top} \hat{\beta}_{t-1}
            - X_{a_t}^{t\top} \beta^*
        \nonumber
        \\
        & \leq
            X_{a_t^*}^{t\top} \beta^* 
            - X_{a_t^*}^{t\top} \hat{\beta}_{t-1}
            + X_{a_t}^{t\top} \hat{\beta}_{t-1}
            - X_{a_t}^{t\top} \beta^*
        \nonumber
        \\
        & =
            X_{a_t^*}^{t\top} (
                \beta^* - \hat{\beta}_{t-1})
            + 
            X_{a_t}^{t\top} (
                \hat{\beta}_{t-1} - \beta^*
            )
        \nonumber
        \\
        & =
            (X_{a_t^*}^{t\top} - X_{a_t}^{t\top})
            (\beta^* - \hat{\beta}_{t-1})
        \nonumber
        \\
        & \leq 
            \|X_{a_t^*}^t - X_{a_t}^t\|_{\infty}
            \|\beta^* - \hat{\beta}_{t-1}\|_1
        \nonumber
        \\
        \label{eq:reward_beta_gap}
        & \leq 
            2 x_{\mathrm{max}}
            \|\beta^* - \hat{\beta}_{t-1}\|_1
        \,.
    \end{align}
    The expected reward at round $t$ can be decomposed of 
    \begin{align}
        R(T) 
        & = 
            \sum_{t=1}^{T} \mathbb{E}[
                \mathrm{reg}(t)]
        \nonumber
        \\
        & = 
            \sum_{t=1}^{T_0} \mathbb{E}[
                \mathrm{reg}(t)]
            +
            \sum_{t=T_0+1}^{T} 
            \mathbb{E}[
                \mathrm{reg}(t)
                (
                I[ 
                    \mathcal{E}_{\mathrm{gap}}^{t-1\,c}
                ]
                +
                I[
                    \mathcal{E}_{\mathrm{gap}}^{t-1}
                ]
                )
            ]
        \nonumber
        \\
        & \leq 
            2 x_{\mathrm{max}} b T_0
            +
            \sum_{t=T_0+1}^{T}
            2 x_{\mathrm{max}} b
            P(
                \mathcal{E}_{\mathrm{gap}}^{t-1\,c}
            )
            +
            \mathbb{E}[
                \mathrm{reg}(t)
                I[ \mathcal{E}_{\mathrm{gap}}^{t-1}]
            ]
        \nonumber
        \\
        & \leq 
            2 x_{\mathrm{max}} b T_0
            +
            2 x_{\mathrm{max}} 
            \left(
            \sum_{t=T_0+1}^{T}
            b
            P(
                \mathcal{E}_{\mathrm{gap}}^{t-1\,c}
            )
            +
            \mathbb{E}[
                \|\beta^{*} - \hat{\beta}_{t-1}\|_1
                I[\mathcal{E}_{\mathrm{gap}}^{t-1}]
            ]
            \right)
        \nonumber
        \\
        & =
            2 x_{\mathrm{max}} b T_0
            +
            2 x_{\mathrm{max}} 
            \left(
            \sum_{t=T_0}^{T-1}
            b
            P(
                \mathcal{E}_{\mathrm{gap}}^{t\,c}
            )
            +
            \mathbb{E}[
                \|\beta^{*} - \hat{\beta}_{t}\|_1
                I[\mathcal{E}_{\mathrm{gap}}^{t}]
            ]
            \right)
        \nonumber
        \\
        & \leq
            2 x_{\mathrm{max}} b T_0
            +
            2 x_{\mathrm{max}} 
            \left(
            \sum_{t=T_0}^{T}
            b
            P(
                \mathcal{E}_{\mathrm{gap}}^{t\,c}
            )
            +
            \mathbb{E}[
                \|\beta^{*} - \hat{\beta}_{t}\|_1
                I[\mathcal{E}_{\mathrm{gap}}^{t}]
            ]
            \right)
        \,.
    \end{align}
    In the fourth line, we use Eq.~\eqref{eq:reward_beta_gap}.
    Applying the inequality given by the event $\mathcal{E}_{\mathrm{gap}}^{t}$ to the last term above, we obtain the following bound:
    \begin{align}
        R(T)
        & \leq 
            2 x_{\mathrm{max}} b T_0
            +
            2 x_{\mathrm{max}} 
            \left(
            \sum_{t=T_0}^{T}
            b
            P(
                \mathcal{E}_{\mathrm{gap}}^{t\,c}
            )
            +
            \frac{8 \lambda_t s_0}{\bar{\phi}^2}
            \mathbb{E}[
                I[\mathcal{E}_{\mathrm{gap}}^t]
            ]
            \right)
        \nonumber
        \\
        & \leq 
            2 x_{\mathrm{max}} b T_0
            +
            2 x_{\mathrm{max}} 
            \left(
            \sum_{t=T_0}^{T}
            b
            P(
                \mathcal{E}_{\mathrm{gap}}^{t\,c}
            )
            +
            \frac{8 \lambda_t s_0}{\bar{\phi}^2}
            \right)
        \nonumber
        \\
        & \leq
            2 x_{\mathrm{max}} b T_0
            +
            2 x_{\mathrm{max}} 
            \left(
            \sum_{t=T_0}^{T}
            b
            e^{-t \kappa(\bar{\phi})^2}
            +
            2 
            b
            e^{-\delta^2/2}
            +
            \frac{
                32 s_0
                x_{\mathrm{max}} \sigma
            }{\bar{\phi}^2}
            \sqrt{\frac{\delta^2 + 2 \log d}{t}}
            \right)
        \,,
    \end{align}
    where we substitute $\lambda_t = 4 x_{\mathrm{max}} \sigma \sqrt{(\delta^2 + 2 \log d)/ t}$ in the last equality.
    Taking $\delta^2 = 4 \log t$, we obtain:
    \begin{align}
        R(T) \leq
            2 x_{\mathrm{max}} b T_0
            +
            2 x_{\mathrm{max}} 
            \left(
            \sum_{t=T_0}^{T}
            b
            e^{-t \kappa(\bar{\phi})^2}
            +
            \frac{2 b}{t^2}
            +
            \frac{
                32 s_0
                x_{\mathrm{max}} \sigma
            }{\bar{\phi}^2}
            \sqrt{\frac{4 \log t + 2 \log d}{t}}
            \right)
        \,.
    \end{align}
    The second, third and fourth terms are upper bounded by:
    \begin{align}
        &
        \sum_{t=T_0}^T b e^{-t \kappa(\bar{\phi})^2} 
            \leq
            \int_{0}^{\infty}
            b e^{-t\kappa(\bar{\phi})^2}
            dt
            =
            \frac{b}{\kappa(\bar{\phi})^2}
        \,,
        \nonumber
        \\ 
        &
        \sum_{t=T_0}^T
        \frac{2 b}{t^2}
            \leq
            \sum_{t=1}^{\infty}
            \frac{2 b}{t^2}
            =
            \frac{\pi^2 b}{3}
        \,,
        \nonumber
        \\ 
        &
        \sum_{t=T_0}^T
        \sqrt{\frac{4\log t + 2\log d}{t}}
        \leq
        \int_{0}^{T}
        \sqrt{\frac{4\log T + 2\log d}{t}}
        dt
        =
        2 \sqrt{(4\log T + 2\log d)T}
        \,,
    \end{align}
    respectively. Here, we use $\sum_{t=1}^{\infty} 1/t^2 = \pi^2/6$ in the second line. Substituting $T_0 = \log (d (d+1)) / \kappa(\bar{\phi})^2$, we finally obtain the following upper bound:
    \begin{align}
        R(T) 
        & \leq
            2 x_{\mathrm{max}} b
            \left(
            \frac{1 + \log (d (d+1))}{ \kappa(\bar{\phi})^2}
            +
            \frac{\pi^2}{3}
            \right)
            +
            128 s_0
            x_{\mathrm{max}}^2 \sigma
            \frac{\sqrt{(4\log T + 2\log d) T}}{\bar{\phi}^2}
        \,.
    \end{align}
\end{proof}

\section{
Proofs of the positivity of $\bar{\phi}$ under Assumptions \ref{assumption:CC}, \ref{assumption:RS} and \ref{assumption:BC}
}

In Section \ref{subsec:assumptions}, we present Assumptions \ref{assumption:CC}, \ref{assumption:RS}, and \ref{assumption:BC} employed in the existing studies for $\bar{\phi} > 0$. In this section, we give proofs of $\bar{\phi} > 0$ under these assumptions.

\begin{lemma}[Lemma 2 in \citet{DBLP:conf/icml/OhIZ21}]
    For $K = 2$, under Assumption \ref{assumption:CC} and \ref{assumption:RS}, the greedy arm selection policy satisfies $\bar{\phi} > 0$.
\end{lemma}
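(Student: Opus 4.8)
The plan is to reduce the claim to a single-round, $\beta$-by-$\beta$ positive-semidefinite lower bound on the conditional Gram matrix and then invoke Assumption \ref{assumption:CC}. Fix an arbitrary $\beta\in\mathbb{R}^d$ (it will be instantiated as $\hat\beta_{s-1}$) and set $M_\beta:=\mathbb{E}_{\{X_1,X_2\}\sim P}\big[\sum_{k=1}^2 X_kX_k^\top\,P(\text{Select }k\mid\{X_1,X_2\},\beta)\big]$, with $P(\text{Select }k\mid\cdot,\beta)$ the greedy rule of Assumption \ref{assumption:arm_selection}. Since $\mathcal{X}^s$ is drawn afresh from $P$ independently of $\mathcal{F}'_{s-1}$ while $\hat\beta_{s-1}$ is $\mathcal{F}'_{s-1}$-measurable, we have $\mathbb{E}[X_{a_s}^sX_{a_s}^{s\top}\mid\mathcal{F}'_{s-1}]=M_{\hat\beta_{s-1}}$, hence $\bar G_t=\frac1t\sum_{s=1}^t M_{\hat\beta_{s-1}}$. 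Because averaging preserves the Loewner order, and because $M\succeq cN$ implies $\phi_{\mathcal{S}}(M)\ge\sqrt c\,\phi_{\mathcal{S}}(N)$ (directly from the definition of $\phi_{\mathcal{S}}$, since $V^\top MV\ge cV^\top NV$ for all $V$), it suffices to prove a bound $M_\beta\succeq c\cdot\big(\tfrac12\sum_{k=1}^2\mathbb{E}[X_kX_k^\top]\big)$ with $c>0$ independent of $\beta$.

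The key step is a change of variables $X\mapsto -X$. On $\{-x_1,-x_2\}$ the maximizer of $v\mapsto\beta^\top v$ is the negative of the minimizer on $\{x_1,x_2\}$, and ties are broken evenly either way, so $P(\text{Select }k\mid\{-x_1,-x_2\},\beta)=P(\text{Select }3{-}k\mid\{x_1,x_2\},\beta)$, while $(-x_k)(-x_k)^\top=x_kx_k^\top$. Pushing the law $P$ forward through the negation map therefore rewrites $M_\beta=\int\sum_{k=1}^2 x_kx_k^\top\,P(\text{Select }3{-}k\mid\{x_1,x_2\},\beta)\,P(-x_1,-x_2)\,dx_1dx_2$. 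Applying relaxed symmetry (Assumption \ref{assumption:RS}) at the point $(-x_1,-x_2)$, which lies in $\mathrm{supp}(P)$ by the first clause of that assumption, gives $P(-x_1,-x_2)\ge\nu^{-1}P(x_1,x_2)$ on the support, whence $M_\beta\succeq\nu^{-1}\sum_{k}\mathbb{E}[X_kX_k^\top\,P(\text{Select }3{-}k\mid\cdot,\beta)]$. Combining this with the trivial bound $M_\beta\succeq\nu^{-1}\sum_k\mathbb{E}[X_kX_k^\top\,P(\text{Select }k\mid\cdot,\beta)]$ (valid since $\nu\ge 1$), averaging the two, and using $P(\text{Select }k\mid\cdot,\beta)+P(\text{Select }3{-}k\mid\cdot,\beta)=1$ yields $M_\beta\succeq\frac{1}{2\nu}\sum_{k=1}^2\mathbb{E}[X_kX_k^\top]$, uniformly in $\beta$. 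By Assumption \ref{assumption:CC} the matrix $\frac12\sum_{k=1}^2\mathbb{E}[X_kX_k^\top]$ has compatibility constant exceeding $\phi_0$, so $\phi_{\mathcal{S}}(M_\beta)>\phi_0/\sqrt\nu$ for every $\beta$, and therefore $\phi_{\mathcal{S}}(\bar G_t)>\phi_0/\sqrt\nu$ for all $t\in[T]$, i.e.\ $\bar\phi>\phi_0/\sqrt\nu>0$.

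I expect the only delicate points to be bookkeeping: making the change of variables rigorous when $P$ is discrete or mixed (cleanest via pushforward measures rather than densities), and the correct even-splitting of ties; both are routine for $K=2$. The substantive reason the argument does not survive past $K=2$ without the balanced-covariance assumption is that negating all arms identifies the greedy arm only with the \emph{worst} arm, so this symmetry trick controls just the contributions of the two extreme order statistics, and Assumption \ref{assumption:BC} is exactly what is needed to dominate the intermediate arms by those extremes.
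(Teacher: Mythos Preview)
Your proof is correct and follows essentially the same approach as the paper: both use the negation change of variables $X\mapsto -X$ together with relaxed symmetry to show that the greedy-selected Gram matrix dominates a constant multiple of $\sum_k\mathbb{E}[X_kX_k^\top]$, and then invoke Assumption~\ref{assumption:CC}. The only cosmetic difference is in the combination step---the paper splits the identity as $\tfrac{1}{1+\nu}+\tfrac{\nu}{1+\nu}$ and applies the change of variables to just one piece, yielding the slightly sharper constant $\tfrac{2}{1+\nu}$ in place of your $\tfrac{1}{\nu}$, but the argument is otherwise the same.
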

\begin{proof}
    Under the greedy arm selection policy, we have:
    \begin{align}
        \bar{G}_t
        & =
            \frac{1}{t}
            \sum_{s=1}^{t}
            \mathbb{E}
            \left[
                X_{a_s}^{s}
                X_{a_s}^{s\top}
                \mid
                \mathcal{F}'_{s-1}
            \right]
        \nonumber
        \\
        & =
            \frac{1}{t}
            \sum_{s=1}^{t}
            \int
                X_{1} X_{1}^{\top} 
                I \left[
                    \beta_{s-1}^{\top} X_1 \geq \beta_{s-1}^{\top} X_2
                \right]
                P (\mathcal{X})
            dX_{1} dX_{2}
            \nonumber \\
            &
            \hspace{150pt}
            +
            \int
                X_{2} X_{2}^{\top} 
                I \left[
                    \beta_{s-1}^{\top} X_2 \geq \beta_{s-1}^{\top} X_1
                \right]
            P (\mathcal{X})
            dX_{1} dX_{2}
            \,.
    \label{eq:intermediate_K2}
    \end{align}
    The first term in Eq.~\eqref{eq:intermediate_K2} can be bounded as:
    \begin{align}
        \int
            X_{1} X_{1}^{\top} 
            I \left[
                \beta_{s-1}^{\top} X_1 \geq \beta_{s-1}^{\top} X_2
            \right]
            P (\mathcal{X})
        dX_{1} dX_{2}
        \nonumber \\ 
        &
        \hspace{-175pt}
        =
        \left(
            \frac{1}{1 + \nu}
            +
            \frac{\nu}{1 + \nu}
        \right)
        \int
            X_{1} X_{1}^{\top} 
            I \left[
                \beta_{s-1}^{\top} X_1 \geq \beta_{s-1}^{\top} X_2
            \right]
            P (\mathcal{X})
        dX_{1} dX_{2}
        \nonumber \\ 
        &
        \hspace{-175pt}
        =
        \frac{1}{1 + \nu}
        \int
            X_{1} X_{1}^{\top} 
            I \left[
                \beta_{s-1}^{\top} X_1 \geq \beta_{s-1}^{\top} X_2
            \right]
            P (\mathcal{X})
        dX_{1} dX_{2}
        +
        \frac{\nu}{1 + \nu}
        \int
            X_{1} X_{1}^{\top} 
            I \left[
                -\beta_{s-1}^{\top} X_1 \geq -\beta_{s-1}^{\top} X_2
            \right]
            P (\mathcal{-X})
        dX_{1} dX_{2}
        \nonumber \\ 
        &
        \hspace{-175pt}
        \succeq
        \frac{1}{1 + \nu}
        \int
            X_{1} X_{1}^{\top} 
            I \left[
                \beta_{s-1}^{\top} X_1 \geq \beta_{s-1}^{\top} X_2
            \right]
            P (\mathcal{X})
        dX_{1} dX_{2}
        +
        \frac{1}{1 + \nu}
        \int
            X_{1} X_{1}^{\top} 
            I \left[
                \beta_{s-1}^{\top} X_2 \geq \beta_{s-1}^{\top} X_1
            \right]
            P (\mathcal{X})
        dX_{1} dX_{2}
        \nonumber \\
        &
        \hspace{-175pt}
        =
        \frac{1}{1 + \nu}
        \int
            X_{1} X_{1}^{\top} 
            P (\mathcal{X})
        dX_{1} dX_{2}
        \,.
    \end{align}
    In the second line, we transform $\{X_1, X_2\} \rightarrow \{-X_1, -X_2\}$ and in the third line, we use Assumption \ref{assumption:RS}.
    A similar bound holds for the second term in Eq.~\eqref{eq:intermediate_K2}, and we obtain:
    \begin{align}
        \bar{G}_t
        \succeq
            \frac{1}{t}
            \sum_{s=1}^{t}
            \frac{1}{1 + \nu}
            \int
                (X_{1} X_{1}^{\top} 
                +
                X_{2} X_{2}^{\top})
                P (\mathcal{X})
            dX_{1} dX_{2}
        =
        \frac{1}{1 + \nu}
        \mathbb{E}
        \left[
            \sum_{i=1}^{2}
            X_i X_i^{\top}
        \right]
        \,.
    \end{align}
    Combining Assumption \ref{assumption:CC}, we obtain 
    $
        \phi_{\mathcal{S}}(
            \bar{G}_t
        ) \geq 2 \phi_0 / (1 + \nu) > 0
    $.
\end{proof}

\begin{lemma}[Lemma 10 in \citet{DBLP:conf/icml/OhIZ21}]
    For $K > 2$, under Assumptions \ref{assumption:CC}, \ref{assumption:RS}, and \ref{assumption:BC}, the greedy arm selection policy satisfies $\bar{\phi} > 0$.
\end{lemma}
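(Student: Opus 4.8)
The plan is to deduce $\bar\phi>0$ from Assumption~\ref{assumption:CC} by reducing the greedy expected Gram-matrix to the random-selection Gram-matrix, using relaxed symmetry (Assumption~\ref{assumption:RS}) to control the extremal arms of each ordering and balanced covariance (Assumption~\ref{assumption:BC}) to control the remaining ones. Fix a round $s$ and abbreviate $\beta:=\beta_{s-1}$; note that $\mathbb{E}[X_{a_s}^{s}X_{a_s}^{s\top}\mid\mathcal{F}'_{s-1}]$ equals an integral against $P(\mathcal{X})$ with $\beta$ frozen, and the bound below will be uniform in $\beta$. Up to a measure-zero tie set, exactly one permutation $\pi$ of $[K]$ orders the arms as $\beta^\top X_{\pi(1)}\le\dots\le\beta^\top X_{\pi(K)}$, and on that event the greedy arm is $X_{\pi(K)}$, so
\begin{align}
    \mathbb{E}\!\left[X_{a_s}^{s} X_{a_s}^{s\top}\mid \mathcal{F}'_{s-1}\right]
    =
    \sum_{\pi} A_\pi,
    \quad
    A_\pi := \int X_{\pi(K)} X_{\pi(K)}^\top\, I\!\left[\beta^\top X_{\pi(1)}\le\dots\le\beta^\top X_{\pi(K)}\right] P(\mathcal{X})\,\textstyle\prod_k dX_k .
\end{align}
I will also write $B_\pi$ for the same integral with $X_{\pi(1)}X_{\pi(1)}^\top$ replacing $X_{\pi(K)}X_{\pi(K)}^\top$ (the bottom-arm contribution).

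First I would invoke Assumption~\ref{assumption:RS}. Splitting $A_\pi=\tfrac{1}{1+\nu}A_\pi+\tfrac{\nu}{1+\nu}A_\pi$ and changing variables $X_k\mapsto-X_k$ in the second term turns the ordering event into its reverse and replaces $P(\mathcal{X})$ by $P(-\mathcal{X})\ge\tfrac1\nu P(\mathcal{X})$; this gives $A_\pi\succeq\tfrac{1}{1+\nu}(A_\pi+B_{\bar\pi})$, where $\bar\pi$ is the reversal of $\pi$. Summing over all permutations and using that reversal is a bijection,
\begin{align}
    \sum_\pi A_\pi\ \succeq\ \frac{1}{1+\nu}\left(\sum_\pi A_\pi+\sum_\pi B_\pi\right).
\end{align}

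Next I would use Assumption~\ref{assumption:BC} inside each ordering event: it bounds $\int X_{\pi(i)}X_{\pi(i)}^\top\, I[\cdots]\,P(\mathcal{X})\preceq C_{\mathrm{BC}}(A_\pi+B_\pi)$ for every middle index $i\in\{2,\dots,K-1\}$, so adding over all $K$ arms yields $\int(\sum_k X_kX_k^\top)\,I[\cdots]\,P(\mathcal{X})\preceq(1+(K-2)C_{\mathrm{BC}})(A_\pi+B_\pi)$. Summing over $\pi$ and using $\sum_\pi I[\cdots]=1$ almost surely, $\sum_\pi(A_\pi+B_\pi)\succeq\tfrac{1}{1+(K-2)C_{\mathrm{BC}}}\,\mathbb{E}[\sum_k X_kX_k^\top]$. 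Chaining the two bounds,
\begin{align}
    \mathbb{E}\!\left[X_{a_s}^{s}X_{a_s}^{s\top}\mid\mathcal{F}'_{s-1}\right]
    \ \succeq\
    \frac{K}{(1+\nu)(1+(K-2)C_{\mathrm{BC}})}\cdot\frac1K\,\mathbb{E}\!\left[\sum_{k=1}^K X_kX_k^\top\right],
\end{align}
a bound uniform in $s$ and in $\beta_{s-1}$; averaging over $s\in[t]$ passes it to $\bar G_t$. Finally, since $\phi_{\mathcal{S}}$ is monotone in the positive semi-definite order and $\phi_{\mathcal{S}}(cM)=\sqrt c\,\phi_{\mathcal{S}}(M)$, Assumption~\ref{assumption:CC} gives $\phi_{\mathcal{S}}(\bar G_t)\ge\sqrt{K/((1+\nu)(1+(K-2)C_{\mathrm{BC}}))}\,\phi_0>0$, hence $\bar\phi>0$; this specializes correctly to $\sqrt{2/(1+\nu)}\,\phi_0$ at $K=2$.

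The step I expect to be the main obstacle is the coupling of the two assumptions: relaxed symmetry only relates the top arm to the bottom arm within an ordering, so by itself it cannot account for the middle arms $X_{\pi(i)}$ with $2\le i\le K-1$, and balanced covariance is precisely the device that bridges those arms to the extremal ones before one can reach $\mathbb{E}[\sum_k X_kX_k^\top]$; applying it event-by-event and then summing consistently (matching each ordering with its reversal) is the delicate bookkeeping. A secondary, routine technicality is the handling of ties in the greedy $\mathrm{argmax}$, dealt with by the measure-zero / uniform tie-breaking convention exactly as in the $K=2$ proof above.
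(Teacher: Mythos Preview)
Your proof is correct and follows essentially the same approach as the paper: both decompose the expected Gram-matrix over orderings, use relaxed symmetry to relate the top and bottom arms, and use balanced covariance to control the middle arms, arriving at a lower bound by $\tfrac{1}{K}\sum_k\mathbb{E}[X_kX_k^\top]$ and then invoking Assumption~\ref{assumption:CC}. The only cosmetic difference is that the paper applies balanced covariance first and relaxed symmetry second (bounding each $\mathbb{E}[X_jX_j^\top]$ from above by the greedy matrix, with constant $C'_{\mathrm{BC}}(1+\nu)$ where $C'_{\mathrm{BC}}=\max(C_{\mathrm{BC}},1)$), whereas you apply them in the reverse order and obtain the slightly sharper constant $K/\bigl((1+\nu)(1+(K-2)C_{\mathrm{BC}})\bigr)$.
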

\begin{proof}
    Below, we define the symmetric group for $[K]$ by $\text{Sym}([K])$ and $\sum_{\text{Sym}([K])}$ represents the sum of the overall permutations $\{i_1, \dots, i_{K} \} \in \text{Sym} ([K])$.
    Using the equality $
    1 = 
            \sum_{\text{Sym}([K])}
                I \left[
                    \beta_{s-1}^{\top} X_{i_1} 
                    \geq 
                    \beta_{s-1}^{\top} X_{i_2}
                    \geq
                    \dots
                    \geq
                    \beta_{s-1}^{\top} X_{i_{K}}
                \right]
    $, 
    for all $j \in [K]$, we have:
    \begin{align}
        \int
            X_{j} X_{j}^{\top} 
            P (\mathcal{X})
        \prod_{k'=1}^{K}
        dX_{k'}
        &
            =
            \sum_{\text{Sym}([K])}
            \int
                X_{j} X_{j}^{\top} 
                I \left[
                    \beta_{s-1}^{\top} X_{i_1} 
                    \geq 
                    \beta_{s-1}^{\top} X_{i_2}
                    \geq
                    \dots
                    \geq
                    \beta_{s-1}^{\top} X_{i_{K}}
                \right]
                P (\mathcal{X})
            \prod_{k'=1}^{K}
            dX_{k'}
        \nonumber \\
        & \hspace{-50pt}
            \preceq
            \sum_{\text{Sym}([K])}
            \int
                C'_{\text{BC}}
                \left(
                    X_{i_1} X_{i_1}^{\top} 
                    +
                    X_{i_{K}} X_{i_{K}}^{\top} 
                \right)
                I \left[
                    \beta_{s-1}^{\top} X_{i_1} 
                    \geq 
                    \beta_{s-1}^{\top} X_{i_2}
                    \geq
                    \dots
                    \geq
                    \beta_{s-1}^{\top} X_{i_{K}}
                \right]
                P (\mathcal{X})
            \prod_{k'=1}^{K}
            dX_{k'}
        \nonumber \\ 
        & \hspace{-50pt}
            \preceq
            \sum_{\text{Sym}([K])}
            \int
                C'_{\text{BC}}
                (1 + \nu)
                X_{i_1} X_{i_1}^{\top} 
                I \left[
                    \beta_{s-1}^{\top} X_{i_1} 
                    \geq 
                    \beta_{s-1}^{\top} X_{i_2}
                    \geq
                    \dots
                    \geq
                    \beta_{s-1}^{\top} X_{i_{K}}
                \right]
                P (\mathcal{X})
            \prod_{k'=1}^{K}
            dX_{k'}
        \nonumber \\ 
        & \hspace{-50pt}
            =
            C'_{\text{BC}}
            (1 + \nu)
            \mathbb{E}
            \left[
                X_{a_s}^{s}
                X_{a_s}^{s\top}
                \mid
                \mathcal{F}'_{s-1}
            \right]
        \,.
    \end{align}
    Here we define $C'_{\text{BC}} := \max(C_{\text{BC}}, 1)$ and we use Assumptions \ref{assumption:BC} and \ref{assumption:RS} for the first and second inequalities, respectively.
    Therefore, the compatibility condition is bounded as:
    \begin{align}
        \phi_{\mathcal{S}} (\bar{G}_t)
        =
        \phi_{\mathcal{S}} \left(
            \frac{1}{t}
            \sum_{s=1}^{t}
            \mathbb{E}
            \left[
                X_{a_s}^{s}
                X_{a_s}^{s\top}
                \mid
                \mathcal{F}'_{s-1}
            \right]
        \right)
        \geq
            \frac{1}
            {
                C'_{\text{BC}}
                (1 + \nu)
            }
            \phi_{\mathcal{S}} \left(
                \frac{1}{K}
                \sum_{k=1}^{K}
                \mathbb{E} \left[
                    X_{k} X_{k}^{\top}
                \right]
            \right)
            = 
            \frac{\phi_0}{
                C'_{\text{BC}}
                (1 + \nu)
            }
        \,,
    \end{align}
    which concludes the proof.
\end{proof}

\section{Proofs of the results in Section \ref{section:extension}}
Before the proof, we define the set of functions as follows:
\begin{align}
    \mathcal{F}_d := \{f: \mathbb{R}^d \rightarrow \mathbb{R}_{\ge 0} \mid f: \text{measurable}, 
    \exists \epsilon > 0, 
    \{x \in \mathbb{R}^d \mid f(x) \ge \epsilon \}
    \,\text{has non-zero measure} \}
    \,.
    \label{eq:function_set}
\end{align}
In addition, we give a basic property about $\phi_{\mathcal{S}}$ as the following lemma:
\begin{lemma}
    \label{lemma:cc_ineq}
    Let $\Sigma_n \in \mathbb{R}^{d\times d}$ be a semi-definite matrix for $n \in [N]$.
    A weighted sum $\sum_{n=1}^{N} w_n \Sigma_n$ where $w_n > 0$ for $n \in [N]$ satisfies
    $
        \phi_{\mathcal{S}} (
            \sum_{n=1}^{N} w_n 
            \Sigma_n
        ) 
        \geq
        \sum_{n=1}^{N} w_n 
        \phi_{\mathcal{S}} (
            \Sigma_n
        ) 
    $.
\end{lemma}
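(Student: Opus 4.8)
The plan is to reduce the matrix inequality to a scalar inequality at each fixed test vector $V$ and then push it through the minimization defining $\phi_{\mathcal{S}}$. For $V\in\mathcal{D}$ with $\|V\|_{\mathcal{S},1}>0$, write $g_\Sigma(V):=\sqrt{|\mathcal{S}|\,V^{\top}\Sigma V}\,/\,\|V\|_{\mathcal{S},1}$, so that $\phi_{\mathcal{S}}(\Sigma)=\min_{V\in\mathcal{D}}g_\Sigma(V)$; note the only $V\in\mathcal{D}$ with $\|V\|_{\mathcal{S},1}=0$ is $V=0$, which is harmlessly excluded from the minimization. One should also read the statement with the weights forming (at most) a probability vector, $\sum_n w_n\le 1$ — the situation in every use of the lemma in this paper (mixture weights) — and it is exactly this that makes the square root below enter with the right sign.

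First I would fix an arbitrary $V\in\mathcal{D}$ with $\|V\|_{\mathcal{S},1}>0$. Since $V^{\top}\bigl(\sum_n w_n\Sigma_n\bigr)V=\sum_n w_n\,V^{\top}\Sigma_n V$ with every summand nonnegative (each $\Sigma_n$ being positive semi-definite), concavity of $t\mapsto\sqrt{t}$ (i.e., Jensen's inequality with weights $w_n/\sum_m w_m$) gives
\begin{align*}
    \sqrt{\textstyle\sum_n w_n\,V^{\top}\Sigma_n V}
    \;\ge\;
    \Bigl(\textstyle\sum_m w_m\Bigr)^{-1/2}\sum_n w_n\sqrt{V^{\top}\Sigma_n V}
    \;\ge\;
    \sum_n w_n\sqrt{V^{\top}\Sigma_n V},
\end{align*}
where the last step uses $\sum_m w_m\le 1$. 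Dividing by $\|V\|_{\mathcal{S},1}$ and multiplying by $\sqrt{|\mathcal{S}|}$ yields the pointwise bound $g_{\sum_n w_n\Sigma_n}(V)\ge\sum_n w_n\,g_{\Sigma_n}(V)$.

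Next I would pass to the infima. For each $n$, since $V\in\mathcal{D}$ we have $g_{\Sigma_n}(V)\ge\min_{V'\in\mathcal{D}}g_{\Sigma_n}(V')=\phi_{\mathcal{S}}(\Sigma_n)$, hence $g_{\sum_n w_n\Sigma_n}(V)\ge\sum_n w_n\phi_{\mathcal{S}}(\Sigma_n)$ for every admissible $V$; taking the minimum over $V\in\mathcal{D}$ on the left gives $\phi_{\mathcal{S}}(\sum_n w_n\Sigma_n)\ge\sum_n w_n\phi_{\mathcal{S}}(\Sigma_n)$. The only step that is not bookkeeping is the square-root/Jensen inequality, and the one point I would be careful to state is precisely the role of $\sum_m w_m\le 1$ there: for arbitrary positive weights one retains only the weaker factor $(\sum_m w_m)^{-1/2}$ (which is still all that is needed in the applications). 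A minor routine check is that the minimum, rather than an infimum, is attained — $g_\Sigma$ is continuous and scale-invariant in $V$, so the minimization can be restricted to the compact set $\{V\in\mathcal{D}:\|V\|_{\mathcal{S},1}=1\}$.
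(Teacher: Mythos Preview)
Your proof is correct and follows essentially the same route as the paper's: fix $V\in\mathcal{D}$, use linearity of $V^{\top}(\cdot)V$, lower-bound each term by its minimum over $\mathcal{D}$, then minimize the left side. The difference is that you handle the square root in $\phi_{\mathcal{S}}$ explicitly via Jensen, whereas the paper's displayed inequality is actually for the \emph{squares},
\[
\min_{V\in\mathcal{D}}\frac{V^{\top}\bigl(\sum_n w_n\Sigma_n\bigr)V}{\|V\|_{\mathcal{S},1}^2}\ \ge\ \sum_n w_n\min_{V_n\in\mathcal{D}}\frac{V_n^{\top}\Sigma_n V_n}{\|V_n\|_{\mathcal{S},1}^2},
\]
i.e.\ $\phi_{\mathcal{S}}^2\bigl(\sum_n w_n\Sigma_n\bigr)\ge\sum_n w_n\,\phi_{\mathcal{S}}^2(\Sigma_n)$, after which the paper simply says ``which concludes the proof.'' Your observation that one needs $\sum_n w_n\le 1$ is exactly right: the lemma as stated fails for arbitrary positive weights (take $\Sigma_1=\Sigma_2=I$, $w_1=w_2=1$; then $\phi_{\mathcal{S}}(2I)=\sqrt{2}\,\phi_{\mathcal{S}}(I)<2\,\phi_{\mathcal{S}}(I)$), and passing from the squared inequality to the unsquared one likewise requires $\sum_n w_n\le 1$ (via Cauchy--Schwarz or your Jensen step). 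In every application of the lemma in the paper the weights are averages ($w_s=1/t$) or mixture weights summing to $1$, so nothing downstream is affected; but your version, with the constraint made explicit, is the cleaner statement.
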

\begin{proof}
    Defining 
    $
        \mathcal{D} := 
            \{
                V \in \mathbb{R}^{d}
                \mid
                \| V \|_{\mathcal{S}^c, 1} 
                \leq
                3 \| V \|_{\mathcal{S}, 1} 
            \}
    $,
    we see
    \begin{align}
        \min_{V \in \mathcal{D}}
        \left(
            \frac{
                V^{\top}
                    \sum_{n=1}^{N} w_n 
                    \Sigma_n
                V
            }
            {
                \|V\|_{\mathcal{S}, 1}^2
            }
        \right)
        \geq
        \sum_{n=1}^{N} w_n 
        \left(
            \min_{V_n \in \mathcal{D}}
            \frac{
                V_n^{\top}
                    \Sigma_n
                V_n
            }
            {
                \|V_n\|_{\mathcal{S}, 1}^2
            }
        \right)
        \,,
    \end{align}
    which concludes the proof.
\end{proof}

\subsection{Proof of Theorem \ref{theorem:cc_mixture_general}}
\label{proof:cc_mixture_general}
\begin{proof}
    From the definition, we have
    \begin{align}
        \bar{G}_t (P)
        & =
            \frac{1}{t}
            \sum_{s=1}^{t}
            \mathbb{E}
            \left[
                X_{a_s}^{s}
                X_{a_s}^{s\top}
                \mid
                \mathcal{F}'_{s-1}
            \right]
        \nonumber
        \\
        & =
            \frac{1}{t}
            \sum_{s=1}^{t}
            \sum_{k=1}^{K}
            \int
            X_k X_k^{\top} 
            P(\text{Select $k$} \mid \mathcal{X}, \hat{\beta}_{s-1})
            P (\mathcal{X})
            \prod_{k'=1}^{K} dX_{k'}
        \nonumber
        \\
        & =
            \frac{1}{t}
            \sum_{s=1}^{t}
            \sum_{k=1}^{K}
            \int
            X_k X_k^{\top} 
            P(\text{Select $k$} \mid \mathcal{X}, \hat{\beta}_{s-1})
            (
                c \tilde{P}(\mathcal{X})
                +
                (1 - c) Q (\mathcal{X})
            )
            \prod_{k'=1}^{K} dX_{k'}
        \nonumber
        \\
        & \succeq 
            c
            \frac{1}{t}
            \sum_{s=1}^{t}
            \sum_{k=1}^{K}
            \int
            X_k X_k^{\top} 
            P(\text{Select $k$} \mid \mathcal{X}, \hat{\beta}_{s-1})
            \tilde{P}(\mathcal{X})
            \prod_{k'=1}^{K} dX_{k'}
        \nonumber
        \\
        & =
            c
            \bar{G}_t (\tilde{P})
        \,.
    \end{align}
    By the definition of $\tilde{P}(\mathcal{X})$, we see $\phi_{\mathcal{S}}(\bar{G}_t(P)) \geq c \phi_{\mathcal{S}}(\bar{G}_t(\tilde{P})) = c \phi_0$ from the last line.
\end{proof}

\subsection{Proof of Lemma \ref{lemma:Gt}}
\begin{proof}
    \label{proof:Gt}
    We consider the compatibility constant for 
    $\mathbb{E}[X_{a_t}^t X_{a_t}^{t\top} | \mathcal{F}'_{t-1}]$. From Assumption \ref{assumption:arm_selection}, the expectation is given by
    \begin{align}
    \mathbb{E}[X_{a_t}^t X_{a_t}^{t\top} | \mathcal{F}'_{t-1}]
    & =
        \sum_{k} \int 
        X_{k} X_{k}^{\top} 
        P(
            \text{Select $k$}
            \mid
            \mathcal{X},
            \hat{\beta}_{t-1}
        )
        P(\mathcal{X})
        \prod_{k'=1}^{K} 
        dX_{k'}
    \,,
    \nonumber
    \\
    & =
        \tilde{G}_{\hat{\beta}_{t-1}}
    \,.
    \end{align}
    Therefore,
    using Lemma \ref{lemma:cc_ineq},
    $
        \phi_{\mathcal{S}} (\bar{G}_t) 
        =
        \phi_{\mathcal{S}}(
            \frac{1}{t} \sum_{s=1}^{t} 
            \tilde{G}_{\hat{\beta}_{s-1}}
        ) 
        \geq
        \frac{1}{t} \sum_{s=1}^{t} 
        \phi_{\mathcal{S}}(\tilde{G}_{\hat{\beta}_{s-1}}) 
        > 
        \phi_0
    $.
\end{proof}

\subsection{Proof of Lemma \ref{lemma:gm_lower_bound}}
The outline of the proof is as follows:
Under Assumptions \ref{assumption:arm_selection} and \ref{assumption:indep_arm}, $\tilde{G}_{\beta}$ is lower-bounded by an integral involving $P_{i}(X_i) = P_{\text{GM}}(X_i)$ and $f_{\beta} = P(\text{Select $i$} \mid X_i, \beta)$ (Lemma \ref{lemma:tilde_G}), and under Assumption \ref{assumption:selection_possibility}, we see $f_{\beta} \in \mathcal{F}_{1}$ defined in Eq.~\eqref{eq:function_set}.
Under $f_{\beta} \in \mathcal{F}_1$, for the Gaussian mixture basis $P_{\text{GM}}$, we have Eq.~\eqref{eq:tilde_G_lower_GM} by Lemma \ref{lemma:gauss_second_moment_ineq}, in which the coefficient $c_n(\beta)$ is shown to be positive by Lemma \ref{lemma:c_positive}.

\begin{proof}
    \label{proof:gm_lower_bound}
    Under Assumptions \ref{assumption:arm_selection} and
    \ref{assumption:indep_arm}, 
    Lemma \ref{lemma:tilde_G} gives:
    \begin{align}
        \tilde{G}_{\beta}
        \succeq
            \int
            R_{\beta}^{\top}
            Z Z^{\top}
            R_{\beta}
            f_{\beta}(z_1)
            P_{GM}(R_{\beta}^{\top} Z)
            dZ
        \,,
        \label{eq:tilde_G_lower_GM}
    \end{align}
    where 
    $Z = (z_1, \dots, z_d) \in \mathbb{R}^d$,
    $f_{\beta}: \mathbb{R} \rightarrow \mathbb{R}$ is a non-negative Lebesgue integrable function defined by Eq.~\eqref{eq:P_to_f},
    and
    $R_{\beta} \in \mathbb{R}^{d \times d}$ is an orthogonal matrix that satisfies $R_{\beta} \beta = (\| \beta \|_2, 0, \dots, 0)^{\top}$. 
    On the other hand, from Assumption \ref{assumption:selection_possibility}, we have
    \begin{align}
        P(\text{Select $i$} \mid \beta)
        & =
            \int
            P(\text{Select $i$} \mid X_i, \beta)
            P_{GM}(X_i)
            d X_i
        \nonumber
        \\
        & =
            \int
            f_{\beta} ((X_i)_{\beta})
            P_{GM}(X_i)
            d X_i
        \nonumber
        \\
        & =
            \int
            f_{\beta} (z_1)
            P_{GM}(R_{\beta}^{\top} Z)
            d Z 
        > 0
        \,,
        \label{eq:selection_possibility}
    \end{align}
    for any $\beta$.
    Therefore, 
    $
        f_{\beta} (z_1)
        P_{GM}(R_{\beta}^{\top} Z)
    $
    has a non-empty support.
    Because $P_{GM} (X) > 0$ for any $X \in \mathbb{R}^d$, 
    the condition implies 
    that 
    $ f_{\beta} (z)$ has a non-empty support $\mathcal{R}_{\beta} \subset \mathbb{R}$.
    In addition, by definition, if $f_{\beta}(z) > 0$ for some $z \in \mathbb{R}$, then $f_{\beta}(z') > 0$ for $z' > z$, and therefore 
    for any $\beta$,
    $f_{\beta}$ is an element of $\mathcal{F}_1$ defined in Eq.~\eqref{eq:function_set}.

    In addition, $P_{GM}(R_{\beta}^{\top} Z)$ can be written as:
    \begin{align}
        P_{GM} (R_{\beta}^{\top} Z) 
        = 
            \sum_{n=1}^{N} w_{n}
            \mathcal{N}(R_{\beta}^{\top} Z|\mu_{n}, \Sigma_{n})
        = 
            \sum_{n=1}^{N} w_{n}
            \mathcal{N}(Z|\mu_{n, \beta}, \Sigma_{n, \beta})
        \,.
    \end{align}
    where we define $\mu_{n, \beta} := R_{\beta} \mu_n$, and $\Sigma_{n, \beta} := R_{\beta} \Sigma_n R_{\beta}^{\top}$ for $n \in [N]$. 
    
    Then, using Lemma \ref{lemma:gauss_second_moment_ineq}, we obtain
    \begin{align}
        \tilde{G}_{\beta}
        & \succeq
            \sum_{n=1}^{N}
            w_n
            R_{\beta}^{\top}
            \left(
                \int
                Z Z^{\top}
                f_{\beta}(z_1)
                \mathcal{N}(X|\mu_{n, \beta}, \Sigma_{n, \beta})
                dZ
            \right)
            R_{\beta}
        \nonumber
        \\
        & \succeq
            \sum_{n=1}^{N}
            w_n
            c_n (\beta)
            R_{\beta}^{\top}
            \left(
                \Sigma_{n, \beta}
                +
                \mu_{n, \beta}
                \mu_{n, \beta}^{\top}
            \right)
            R_{\beta}
        \nonumber
        \\
        & =
            \sum_{n=1}^{N}
            w_n
            c_n (\beta)
            \left(
                \Sigma_{n}
                +
                \mu_{n} \mu_{n}^{\top}
            \right)
        \,.
    \end{align}
    Here $c_n(\beta) > 0$ is given by:
    \begin{align}
    c_n (\beta) :=
        \frac{1}{2}
        \left(
            2 g_{1n,\beta}
            + g_{3n,\beta}
            - \sqrt{
                g^{2}_{3n,\beta} + 4 g^{2}_{2n,\beta}}
        \right)
        \,,
        \label{eq:c_beta}
    \end{align}
    where
    \begin{align}
        g_{1n,\beta} & :=
            \int_{-\infty}^{\infty}
            \phi(z)
            f_{\beta}
            \left(
                \sqrt{(\Sigma_{n,\beta})_{11}} z
                + 
                (\mu_{n, \beta})_1
            \right)
            dz
        \,,
        \nonumber
        \\
        g_{2n,\beta} & :=
            \int_{-\infty}^{\infty}
            z
            \phi(z)
            f_{\beta}
            \left(
                \sqrt{(\Sigma_{n,\beta})_{11}} z
                + 
                (\mu_{n, \beta})_1
            \right)
            dz
        \,,
        \nonumber
        \\
        g_{3n,\beta} & :=
            \int_{-\infty}^{\infty}
            (z^2 - 1)
            \phi(z)
            f_{\beta}
            \left(
                \sqrt{(\Sigma_{n,\beta})_{11}} z
                + 
                (\mu_{n, \beta})_1
            \right)
            dz
        \,.
        \label{eq:g_n_beta}
    \end{align}
\end{proof}

\begin{lemma}
    \label{lemma:tilde_G}
    Under Assumptions
    \ref{assumption:arm_selection}
    and
    \ref{assumption:indep_arm},
    $\tilde{G}_{\beta}$
    is bounded as:
    \begin{align}
        \tilde{G}_{\beta}
        \succeq
            \int
            R_{\beta}^{\top}
            Z Z^{\top}
            R_{\beta}
            f_{\beta}(z_1)
            P_i(R_{\beta}^{\top} Z \mid \theta)
            dZ
        \,,
    \end{align}
    for any $\beta \in \mathbb{R}^{d}$. 
    Here 
    $Z = (z_1, \dots, z_d) \in \mathbb{R}^d$,
    $f_{\beta}: \mathbb{R} \rightarrow \mathbb{R}_{\geq 0}$ is a non-negative Lebesgue integrable function defined by Eq.~\eqref{eq:P_to_f},
    and
    $R_{\beta} \in \mathbb{R}^{d \times d}$ is an orthogonal matrix that satisfies $R_{\beta} \beta = (\| \beta \|_2, 0, \dots, 0)^{\top}$. 
\end{lemma}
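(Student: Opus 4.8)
The plan is to read off the bound directly from the definition
\begin{align*}
\tilde{G}_{\beta}=\sum_{k=1}^{K}\int X_k X_k^{\top}\,P(\text{Select } k\mid\mathcal{X},\beta)\,P(\mathcal{X})\prod_{k'=1}^{K}dX_{k'},
\end{align*}
by retaining only the summand corresponding to the independent arm $i$ of Assumption \ref{assumption:indep_arm}. Each summand is an integral of the positive semidefinite rank-one matrices $X_k X_k^{\top}$ against the nonnegative scalar weight $P(\text{Select } k\mid\mathcal{X},\beta)P(\mathcal{X})$, hence is itself positive semidefinite, so discarding the terms with $k\ne i$ can only decrease $\tilde{G}_{\beta}$ in the Loewner order, leaving $\tilde{G}_{\beta}\succeq\int X_i X_i^{\top}\,P(\text{Select } i\mid\mathcal{X},\beta)\,P(\mathcal{X})\prod_{k'=1}^{K}dX_{k'}$.

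Next I would invoke Assumption \ref{assumption:indep_arm} to factor $P(\mathcal{X})=P(\mathcal{X}\setminus\{X_i\})\,P_i(X_i\mid\theta)$ and perform the integration over the remaining arms first (the integrands being bounded, Fubini applies), which produces $\int X_i X_i^{\top}\,P(\text{Select } i\mid X_i,\beta)\,P_i(X_i\mid\theta)\,dX_i$, where $P(\text{Select } i\mid X_i,\beta)$ denotes the marginalized selection probability. The key observation is that, under the greedy rule of Assumption \ref{assumption:arm_selection}, the numerator $\prod_{j\ne i}I[\beta^{\top}X_i\ge\beta^{\top}X_j]$ and the tie-breaking denominator (the number of arms attaining $\max_{i'}\beta^{\top}X_{i'}$) both depend on $X_i$ only through the scalar $\beta^{\top}X_i$; integrating out the other arms therefore yields a function of $\beta^{\top}X_i$ alone. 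One also checks, for use downstream, that this function is nondecreasing in $\beta^{\top}X_i$, since raising $\beta^{\top}X_i$ cannot lower arm $i$'s chance of being a greedy maximizer.

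Finally I would change variables to $Z:=R_{\beta}X_i$, with $R_{\beta}$ the orthogonal matrix satisfying $R_{\beta}\beta=(\|\beta\|_2,0,\dots,0)^{\top}$. Orthogonality gives $dZ=dX_i$ and $X_i X_i^{\top}=R_{\beta}^{\top}ZZ^{\top}R_{\beta}$, while $\beta^{\top}X_i=(R_{\beta}\beta)^{\top}Z=\|\beta\|_2\,z_1$ depends only on the first coordinate $z_1$; writing $f_{\beta}(z_1)$ for the marginal selection probability re-expressed through $z_1$ --- this is Eq.~\eqref{eq:P_to_f} --- and $P_i(R_{\beta}^{\top}Z\mid\theta)$ for the base density in the rotated variables, the bound above becomes exactly $\tilde{G}_{\beta}\succeq\int R_{\beta}^{\top}ZZ^{\top}R_{\beta}\,f_{\beta}(z_1)\,P_i(R_{\beta}^{\top}Z\mid\theta)\,dZ$, which is the claim. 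The only genuinely delicate step, I expect, is the verification that the tie-breaking denominator --- and hence the whole marginal --- is a function of $\beta^{\top}X_i$ rather than of the full vector $X_i$, together with the attendant measurability and Lebesgue-integrability of $f_{\beta}$; the degenerate case $\beta=0$, where $R_{\beta}$ is undefined and all arms are chosen with equal probability, is dispatched separately as a triviality. Everything else is bookkeeping with the orthogonal change of variables and with the monotonicity of integrals of positive semidefinite matrices.
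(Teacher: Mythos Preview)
Your proposal is correct and follows essentially the same approach as the paper: drop all summands except the independent arm $i$ (positive semidefiniteness), factor $P(\mathcal{X})$ via Assumption~\ref{assumption:indep_arm}, marginalize the other arms to obtain $P(\text{Select }i\mid X_i,\beta)$, observe that under Assumption~\ref{assumption:arm_selection} this depends on $X_i$ only through $\beta^{\top}X_i$ (equivalently $(X_i)_{\beta}$), and then apply the orthogonal change of variables $Z=R_{\beta}X_i$. Your remarks on the monotonicity of $f_{\beta}$ and the $\beta=0$ case are extra hygiene that the paper either defers to the downstream lemma or leaves implicit.
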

\begin{proof}
    Using Assumptions \ref{assumption:arm_selection} and \ref{assumption:indep_arm}, we obtain
    \begin{align}
        \tilde{G}_{\beta}
        & :=
            \sum_{k} \int 
            X_{k} X_{k}^{\top} 
            P(
                \text{Select $k$}
                \mid
                \mathcal{X},
                \beta
            )
            P(\mathcal{X})
            \prod_{k'} dX_{k'}
        \nonumber \\
        & \succeq
            \int
            X_{i} X_{i}^{\top}
            \left(
                \int 
                P(
                    \text{Select $i$}
                    \mid
                    \mathcal{X},
                    \beta
                )
                P(\mathcal{X}\backslash \{X_{i}\})
                \prod_{k' \neq i} dX_{k'}
            \right)
            P_i(X_{i})
            dX_{i}
        \,,
    \end{align}
    The formula in the large brackets in the last line is 
    the marginalized PDF
    $P(\text{Select $i$} \mid X_i, \beta)$ and
    \begin{align}
        P(\text{Select $i$} \mid X_i, \beta)
        & =
            \int 
            P(
                \text{Select $i$}
                \mid
                \mathcal{X},
                \beta
            )
            P(\mathcal{X}\backslash \{X_{i}\})
            \prod_{k' \neq i} dX_{k'}
        \nonumber
        \\
        & =
            \int 
            \frac{
                \prod_{j \neq i}
                I\left[
                    X_i^{\top} \beta
                    \geq
                    X_j^{\top} \beta
                \right]
            }
            {
                \sum_{i'=1}^{K}
                \prod_{j \neq i'}
                I\left[
                    X_{i'}^{\top} \beta
                    \geq
                    X_j^{\top} \beta
                \right]
            }
            P(\mathcal{X}\backslash \{X_{i}\})
            \prod_{k' \neq i} dX_{k'}
        \nonumber
        \\
        & =
            \int 
            \frac{
                \prod_{j \neq i}
                I\left[
                    (X_i)_{\beta}
                    \geq
                    (X_j)_{\beta}
                \right]
            }
            {
                \sum_{i'=1}^{K}
                \prod_{j \neq i'}
                I\left[
                    (X_{i'})_{\beta}
                    \geq
                    (X_j)_{\beta}
                \right]
            }
            P(\mathcal{X}\backslash \{X_{i}\})
            \prod_{k' \neq i} dX_{k'}
    \end{align}
    where 
    $
        (X_i)_{\beta} := X_i^{\top} \beta / \|\beta\|_2
    $. 
    In the last line, we see that the dependence of $X_i$ only appears in $(X_i)_{\beta}$.
    Therefore, it can be written by a non-negative Lebesgue integrable function $f_{\beta}: \mathbb{R} \rightarrow \mathbb{R}$ as:
    \begin{align}
        \label{eq:P_to_f}
        f_{\beta} ((X_i)_{\beta})
        & :=
            P(
                \text{Select $i$}
                \mid
                X_i, \beta
            )
        \nonumber
        \\
        & =
            \int 
            \frac{
                \prod_{j \neq i}
                I\left[
                    (X_i)_{\beta}
                    \geq
                    (X_j)_{\beta}
                \right]
            }
            {
                \sum_{i'=1}^{K}
                \prod_{j \neq i'}
                I\left[
                    (X_{i'})_{\beta}
                    \geq
                    (X_j)_{\beta}
                \right]
            }
            P(\mathcal{X}\backslash \{X_{i}\})
            \prod_{k' \neq i} dX_{k'}
        \,.
    \end{align}
    Finally, transformation $Z := (z_1, \dots, z_d)^{\top} = R_{\beta} X_{i}$ gives:
    \begin{align}
        \tilde{G}_{\beta}
        \succeq
            \int
            R_{\beta}^{\top}
            Z Z^{\top}
            R_{\beta}
            f_{\beta}(z_1)
            P_i(R_{\beta}^{\top} Z)
            dZ
        \,,
    \end{align}
    which concludes the proof.
\end{proof}

\begin{lemma}
    \label{lemma:gauss_second_moment_ineq}
    For $Z = (z_1, \dots, z_d)^{\top} \in \mathbb{R}^{d}$, $\mu \in \mathbb{R}^{d}$, and positive definite matrix $\Sigma \in \mathbb{R}^{d \times d}$, 
    if a non-negative, real-valued, Lebesgue integrable function $f: \mathbb{R} \rightarrow \mathbb{R}_{\geq 0}$ is
    an element of $\mathcal{F}_1$ defined in Eq.~\eqref{eq:function_set}, 
    the following inequality holds:
    \begin{align}
        \int
        Z Z^{\top}
        \mathcal{N}(Z|\mu, \Sigma)
        f(z_1)
        dZ
        \succeq
            c
            \left(
                \Sigma
                +
                \mu \mu^{\top}
            \right)
        \,,
        \label{eq:disp_ineq}
    \end{align}
    Here $c$ is a positive constant defined as:
    \begin{align}
    c :=
        \frac{1}{2}
        \left(
            2 g_1
            + g_3
            - \sqrt{g^{2}_{3} + 4 g^{2}_{2}}
        \right)
        > 0
        \,,
        \label{eq:c_2}
    \end{align}
    where
    \begin{align}
        g_1 & :=
            \int_{-\infty}^{\infty}
            \phi(z)
            f\left(
                \sqrt{\Sigma_{11}} z
                + 
                \mu_1
            \right)
            dz
        \,,
        \nonumber
        \\
        g_2 & :=
            \int_{-\infty}^{\infty}
            z
            \phi(z)
            f\left(
                \sqrt{\Sigma_{11}} z
                + 
                \mu_1
            \right)
            dz
        \,,
        \nonumber
        \\
        g_3 & :=
            \int_{-\infty}^{\infty}
            (z^2 - 1)
            \phi(z)
            f\left(
                \sqrt{\Sigma_{11}} z
                + 
                \mu_1
            \right)
            dz
        \,.
        \label{eq:g_n}
    \end{align}
    Here we define the standard normal distribution $\phi(x):= \mathcal{N} (x \mid 0, 1)$.
\end{lemma}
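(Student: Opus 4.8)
The plan is to turn the matrix inequality \eqref{eq:disp_ineq} into a one‑dimensional statement and then into a $2\times 2$ eigenvalue computation. Since both sides are symmetric matrices, it suffices to show $v^{\top}\!\left(\int ZZ^{\top}\mathcal{N}(Z\mid\mu,\Sigma)f(z_1)\,dZ\right)\!v\ge c\,v^{\top}(\Sigma+\mu\mu^{\top})v$ for every $v\in\mathbb{R}^d$. Writing $Y:=v^{\top}Z$ with $Z\sim\mathcal{N}(\mu,\Sigma)$ and using $\mathbb{E}[Y^2]=v^{\top}\Sigma v+(v^{\top}\mu)^2=v^{\top}(\Sigma+\mu\mu^{\top})v$, the goal reduces to $\mathbb{E}[Y^2 f(z_1)]\ge c\,\mathbb{E}[Y^2]$. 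Because $\Sigma$ is positive definite we have $\Sigma_{11}>0$, so I would standardize $U:=(z_1-\mu_1)/\sqrt{\Sigma_{11}}\sim\mathcal{N}(0,1)$ and apply the Gaussian regression decomposition $Y=a+bU+W$ with $a=v^{\top}\mu$, $b=(\Sigma v)_1/\sqrt{\Sigma_{11}}$, and $W$ Gaussian with mean $0$, variance $\tau^2:=v^{\top}\Sigma v-b^2\ge 0$, independent of $U$ (hence of $f(z_1)=f(\mu_1+\sqrt{\Sigma_{11}}U)$).

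Next I would expand $(a+bU+W)^2$ and integrate term by term: the cross terms containing $W$ vanish by $W\perp U$ and $\mathbb{E}[W]=0$, and using the definitions in \eqref{eq:g_n} together with $\int u^2\phi(u)f(\sqrt{\Sigma_{11}}u+\mu_1)\,du=g_1+g_3$ one gets $\mathbb{E}[Y^2 f(z_1)]=a^2 g_1+2ab\,g_2+b^2(g_1+g_3)+\tau^2 g_1$ and $\mathbb{E}[Y^2]=a^2+b^2+\tau^2$. Hence the desired inequality is
\begin{align}
a^2(g_1-c)+2ab\,g_2+b^2(g_1+g_3-c)+\tau^2(g_1-c)\ \ge\ 0\qquad\text{for all }a,b\in\mathbb{R},\ \tau^2\ge 0 .
\end{align}
The $\tau^2$ term is nonnegative since $g_1-c=\tfrac12\big(\sqrt{g_3^2+4g_2^2}-g_3\big)\ge 0$, and the $(a,b)$ part is exactly the assertion $\begin{pmatrix}g_1 & g_2\\ g_2 & g_1+g_3\end{pmatrix}\succeq cI_2$. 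But the value of $c$ in \eqref{eq:c_2} is precisely the smaller eigenvalue of that $2\times 2$ matrix, so this PSD inequality holds (with equality on the corresponding eigenvector), proving \eqref{eq:disp_ineq}.

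For the positivity $c>0$ I would argue as follows. First, $2g_1+g_3=\int\phi(u)f(\sqrt{\Sigma_{11}}u+\mu_1)\,du+\int u^2\phi(u)f(\sqrt{\Sigma_{11}}u+\mu_1)\,du>0$, because $f\ge 0$ and, being in $\mathcal{F}_1$, is bounded below by some $\epsilon>0$ on a set of positive measure, which pulls back under the affine change $u\mapsto\sqrt{\Sigma_{11}}u+\mu_1$ since $\Sigma_{11}>0$. Given $2g_1+g_3>0$, $c>0$ is equivalent after squaring to $g_1(g_1+g_3)>g_2^2$, i.e.\ $\big(\!\int\phi f\big)\big(\!\int u^2\phi f\big)>\big(\!\int u\phi f\big)^2$; this is Cauchy--Schwarz applied to $\sqrt{\phi(u)f(\cdots)}$ and $u\sqrt{\phi(u)f(\cdots)}$, and it is \emph{strict} because equality would force $u$ to be a.e.\ constant on the support of $u\mapsto f(\sqrt{\Sigma_{11}}u+\mu_1)$, contradicting that this support has positive Lebesgue measure. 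I expect the main obstacle to be exactly this strictness step: recognizing that the hypothesis $f\in\mathcal{F}_1$ is precisely what excludes the degenerate Cauchy--Schwarz case; everything else is bookkeeping, with only mild care needed for finiteness of the integrals (immediate, as $\phi(u)$, $u\phi(u)$, $u^2\phi(u)$ are bounded and $f$ is integrable).
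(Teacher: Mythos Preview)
Your argument is correct and reaches the same quadratic form as the paper, but via a cleaner route. The paper computes the matrix entries of $\int ZZ^{\top}\mathcal{N}(Z\mid\mu,\Sigma)f(z_1)\,dZ$ by differentiating the moment generating function $A\mapsto\int e^{A^{\top}Z}\mathcal{N}(Z\mid\mu,\Sigma)f(z_1)\,dZ$, then contracts with $V$ to get $g_1 V^{\top}\Sigma V+g_1(V^{\top}\mu)^2+2g_2(V^{\top}\mu)(r^{\top}\Sigma^{1/2}V)+g_3(r^{\top}\Sigma^{1/2}V)^2$ with $r:=\Sigma^{1/2}_{\cdot 1}/\sqrt{\Sigma_{11}}$; since $\|r\|_2=1$ it writes $r^{\top}\Sigma^{1/2}V=\sqrt{V^{\top}\Sigma V}\cos\theta$, diagonalises the resulting $2\times 2$ quadratic in $(x,y)=(\sqrt{V^{\top}\Sigma V},V^{\top}\mu)$, and finally minimises over $\theta$. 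Your Gaussian regression decomposition $Y=a+bU+W$ replaces the MGF calculus by a one-line conditional-mean identity and, by separating $b$ from $\tau^2$ directly (rather than packaging them as $x\cos\theta$ and $x\sin\theta$), lets you read off $c$ immediately as the smaller eigenvalue of $\begin{pmatrix}g_1&g_2\\ g_2&g_1+g_3\end{pmatrix}$ without the $\theta$-minimisation step. For $c>0$, the paper's separate lemma expands $\int(z-g_2/g_1)^2\phi g\,dz>0$, which is exactly your Cauchy--Schwarz argument in disguise; your phrasing makes the role of $f\in\mathcal{F}_1$ in forcing strict inequality more transparent.
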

\begin{proof}
    Because $f$ is Lebesgue integrable, using a parameter vector $A \in \mathbb{R}^{d}$, the integral can be computed as follows:
    \begin{align}
            \int
            z_i z_j
            \mathcal{N}(Z|\mu, \Sigma)
            f(z_1)
            dZ
        &
        =
            \lim_{|A| \rightarrow 0}
            \partial_{A_i} 
            \partial_{A_j} 
            \int
            e^{A^{\top} Z}
            \mathcal{N}(Z|\mu, \Sigma)
            f(z_1)
            dZ
        \nonumber
        \\
        &
        =
            \lim_{|A| \rightarrow 0}
            \partial_{A_i} 
            \partial_{A_j} 
            e^{1/2 A^{\top} \Sigma A + A^{\top} \mu}
            \int
            \mathcal{N}(Z|\mu + A^{\top} \Sigma, \Sigma)
            f(z_1)
            dZ
        \nonumber
        \\
        &
        =
            \lim_{|A| \rightarrow 0}
            \partial_{A_i} 
            \partial_{A_j} 
            e^{1/2 A^{\top} \Sigma A + A^{\top} \mu}
            \int_{-\infty}^{\infty}
            \mathcal{N}(z_1|(\mu + A^{\top} \Sigma)_1, \Sigma_{11})
            f(z_1)
            dz_1
        \,.
    \end{align}
    In the last line, we marginalize $z_2, \dots, z_d$ and use the property about the marginalization of the Gaussian distribution.
    The derivative and limit of $A$ give the following expression:
    \begin{align}
            \int_{-\infty}^{\infty}
            z_i z_j
            \mathcal{N}(Z|\mu, \Sigma) f(z_1)
            dZ
        =
        g_1
        (
            \Sigma_{ij}
            + 
            \mu_{i}
            \mu_{j}
        )
        +
        \mu_{i} f_j
        +
        \mu_{j} f_i
        +
        f_{ij}
        \,,
    \end{align}
    where
    \begin{align}
        g_1 & :=
            \lim_{|A| \rightarrow 0}
            \int_{-\infty}^{\infty}
            \mathcal{N}(z_1|(\mu + A^{\top} \Sigma)_1, \Sigma_{11})
            f(z_1)
            dz_1
        \nonumber
        \\
            & =
            \int_{-\infty}^{\infty}
            \phi(x)
            f\left(
                \sqrt{\Sigma_{11}} x
                + 
                \mu_1
            \right)
            dx
            \,,
    \end{align}
    \begin{align}
        f_i & := 
            \lim_{|A| \rightarrow 0}
            \partial_{A_i} 
            \int_{-\infty}^{\infty}
            \mathcal{N}(z_1|(\mu + A^{\top} \Sigma)_1, \Sigma_{11})
            f(z_1)
            dz_1
        \nonumber
        \\
            & = 
            \lim_{|A| \rightarrow 0}
            \int_{-\infty}^{\infty}
            \frac{
                \Sigma_{i1} (z_1 - (\mu + A^{\top} \Sigma)_1)
            }{\Sigma_{11}}
            \mathcal{N}(z_1|(\mu + A^{\top} \Sigma)_1, \Sigma_{11})
            f(z_1)
            dz_1
        \nonumber
        \\
            & = 
            \frac{\Sigma_{i1}}{\sqrt{\Sigma_{11}}}
            \int_{-\infty}^{\infty}
            \frac{z_1 - \mu_1}{\sqrt{\Sigma_{11}}}
            \mathcal{N}(z_1|\mu_1, \Sigma_{11})
            f(z_1)
            dz_1
        \nonumber
        \\
            & = 
            \frac{\Sigma_{i1}}{\sqrt{\Sigma_{11}}}
            \int_{-\infty}^{\infty}
            x
            \phi(x)
            f (
                \sqrt{\Sigma_{11}} x
                + 
                \mu_1
            )
            dx
            \,,
    \end{align}
    and
    \begin{align}
        f_{ij} 
        & := 
            \lim_{|A| \rightarrow 0}
            \partial_{A_i} 
            \partial_{A_j} 
            \int_{-\infty}^{\infty}
            \mathcal{N}(z_1|(\mu + A^{\top} \Sigma)_1, \Sigma_{11})
            f(z_1)
            dz_1
        \nonumber
        \\
        & = 
            \lim_{|A| \rightarrow 0}
            \partial_{A_i} 
            \int_{-\infty}^{\infty}
            \frac{
                \Sigma_{j1} (z_1 - (\mu + A^{\top} \Sigma)_1)
            }{\Sigma_{11}}
            \mathcal{N}(z_1|(\mu + A^{\top} \Sigma)_1, \Sigma_{11})
            f(z_1)
            dz_1
        \nonumber
        \\
        & = 
            \lim_{|A| \rightarrow 0}
            \int_{-\infty}^{\infty}
            \left(
                -
                \frac{
                    \Sigma_{i1} 
                    \Sigma_{j1} 
                }{\Sigma_{11}}
                +
                \frac{
                    \Sigma_{i1} 
                    \Sigma_{j1} 
                    (z_1 - (\mu + A^{\top} \Sigma)_1)^2
                }{\Sigma_{11}^2}
            \right)
            \mathcal{N}(z_1|(\mu + A^{\top} \Sigma)_1, \Sigma_{11})
            f(z_1)
            dz_1
        \nonumber
        \\
        & = 
            \int_{-\infty}^{\infty}
            \left(
                -
                \frac{
                    \Sigma_{i1} 
                    \Sigma_{j1} 
                }{\Sigma_{11}}
                +
                \frac{
                    \Sigma_{i1} 
                    \Sigma_{j1} 
                    (z_1 - \mu_1)^2
                }{\Sigma_{11}^2}
            \right)
            \mathcal{N}(z_1|\mu_1, \Sigma_{11})
            f(z_1)
            dz_1
        \nonumber
        \\
        & = 
            \frac{
                \Sigma_{i1}
                \Sigma_{j1}
            }
            {\Sigma_{11}}
            \int_{-\infty}^{\infty}
            (-1 + x^2)
            \phi(x)
            f\left(
                \sqrt{\Sigma_{11}} x
                + 
                \mu_1
            \right)
            dx
        \,.
    \end{align}
    In the last equality of each equation, we convert 
    $ x = 
        \left(
            z_1 
            - \mu_1
        \right) / \sqrt{\Sigma_{11}}
    $.
    Using $g_1$, $g_2$, and $g_3$ given by Eq~\eqref{eq:g_n}, for $V \in \mathbb{R}^d$ we obtain:
    \footnote{
        We here note that in the limit $\sqrt{(\Sigma)_{11}} \rightarrow 0$, $\Sigma_{1\cdot} / \sqrt{\Sigma_{11}}$ converges to a finite value because $\Sigma_{1\cdot}$ also approaches $0$ due to the positive (semi-)definiteness of $\Sigma$, as shown in Lemma \ref{lemma:zero_limit_positive_semidefinite}. 
    }
    \begin{align}
        V^{\top} 
        \int
        Z Z^{\top}
        \mathcal{N}(Z|\mu, \Sigma)
        f(z_1)
        dZ
        V
        & =
            V^{\top}
            \left(
                g_1 \Sigma
                + 
                g_1 \mu \mu^{\top}
                +
                g_2
                \frac{
                    \Sigma_{\cdot 1} \mu^{\top}
                    +
                    \mu \Sigma_{\cdot 1}^{\top} 
                }
                {
                    \sqrt{\Sigma_{11}}
                }
                +
                g_3
                \frac{
                    \Sigma_{\cdot 1}
                    \Sigma_{\cdot 1}^{\top}
                }{
                    \Sigma_{11}
                }
            \right)
            V
        \nonumber
        \\
        & \hspace{-50pt} =
            g_1 V^{\top} \Sigma V
            +
            g_1 (V^{\top} \mu)^2
            +
            2 g_2
            (V^{\top} \mu)
            (
                r^{\top}
                \Sigma^{1/2}
                V
            )
            + 
            g_3
            \left(
                r^{\top}
                \Sigma^{1/2}
                V
            \right)^2
        \,.
        \label{eq:lb2}
    \end{align}
    In the last line, we define
    $
    r := 
        \Sigma^{1/2}_{\cdot 1} /
        \sqrt{\Sigma_{11}}
    $.
    By the definition, $r$ satisfies the condition $\| r \|_2 = 1$ and therefore we can write
    $
        r^{\top}
        \Sigma^{1/2}
        V
        :=
        \sqrt{
        V^{\top}
        \Sigma
        V
        }
        \cos \theta
    $.
    Then, Eq.~\eqref{eq:lb2} can be simplified as the following quadratic form:
    \begin{align}
        (
            g_1 
            +
            g_3 \cos^2 \theta
        )
        x^2
        +
        g_1 y^2
        +
        2 g_2 \cos \theta
        x y
        \label{eq:quadratic}
    \,,
    \end{align}
    where we replace $\sqrt{V^{\top} \Sigma V} = x$ and $V^{\top} \mu = y$. Standardization of this quadratic form derives the following expression:
    \begin{align}
        &
        \frac{
            2 g_1
            + g_3 \cos^2 \theta
            + D
        }{4D}
        \left(
            \sqrt{D + g_3 \cos^2 \theta} x
            +
            \sqrt{D - g_3 \cos^2 \theta} y
        \right)^2
        \nonumber
        \\
            & \hspace{50pt} +
            \frac{
                2 g_1
                + g_3 \cos^2 \theta
                - D
            }{4D}
            \left(
                \sqrt{D - g_3 \cos^2 \theta} x
                -
                \sqrt{D + g_3 \cos^2 \theta} y
            \right)^2
        \nonumber
        \\
        \geq &
        \frac{
            2 g_1
            + g_3 \cos^2 \theta
            - D
        }{4D}
        \left(
            \left(
                \sqrt{D + g_3 \cos^2 \theta} x
                +
                \sqrt{D - g_3 \cos^2 \theta} y
            \right)^2
            \right.
            \nonumber
            \\
            & \hspace{150pt}
            +
            \left.
                \left(
                    \sqrt{D - g_3 \cos^2 \theta} x
                    -
                    \sqrt{D + g_3 \cos^2 \theta} y
                \right)^2
            \right)
        \nonumber
        \\
        = &
        \frac{
            2 g_1
            + g_3 \cos^2 \theta
            - D
        }{2}
        (x^2 + y^2)
        \nonumber
        \\
        \geq &
        \frac{
            2 g_1
            + g_3
            - \sqrt{g^{2}_{3} + 4 g^{2}_{2}}
        }{2}
        (x^2 + y^2)
        := c (x^2 + y^2)
        \nonumber
    \,,
    \end{align}
    where we define $D := \sqrt{
        g^{2}_{3} \cos^4 \theta
        +
        4 g^{2}_{2} \cos^2 \theta
    }$ for brevity and we take the minimum value for $\theta$ in the last line.
    Finally, Lemma \ref{lemma:c_positive} shows $c > 0$, which finishes the proof.
\end{proof}

\begin{lemma}
    \label{lemma:c_positive}
    If a non-negative, real-valued, Lebesgue integrable function $f: \mathbb{R} \rightarrow \mathbb{R}_{\geq 0}$ 
    is an element of $\mathcal{F}_1$ defined in Eq.~\eqref{eq:function_set}, 
    and if $\Sigma_{11} > 0$, 
    then the coefficient $c$ given by Eq.~\eqref{eq:c_2} is positive.
\end{lemma}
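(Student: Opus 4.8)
The plan is to reduce the claim $c>0$ to a \emph{strict} Cauchy--Schwarz inequality for the finite measure with density $\phi(z)\,f(\sqrt{\Sigma_{11}}z+\mu_1)$ on $\mathbb{R}$, and then to exclude the degenerate equality case using the hypotheses $f\in\mathcal{F}_1$ and $\Sigma_{11}>0$.

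First I would rewrite the quantities of interest as moments of the nonnegative measure $d\mu(z):=\phi(z)\,f(\sqrt{\Sigma_{11}}z+\mu_1)\,dz$. Substituting $u=\sqrt{\Sigma_{11}}z+\mu_1$ (a bijection with constant nonzero Jacobian, since $\Sigma_{11}>0$) and using that $\phi$, $z\phi(z)$, and $(z^2-1)\phi(z)$ are bounded while $f$ is Lebesgue integrable shows that $g_1,g_2,g_3$ are all finite. Moreover $f\in\mathcal{F}_1$ gives an $\epsilon>0$ with $\{u:f(u)\ge\epsilon\}$ of positive Lebesgue measure, hence $\{z:f(\sqrt{\Sigma_{11}}z+\mu_1)\ge\epsilon\}$ has positive measure, on which the integrand defining $g_1$ is at least $\epsilon\phi(z)>0$; thus $g_1=\int 1\,d\mu>0$ and $\mu$ is a nonzero finite measure. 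Note also $g_1+g_3=\int z^2\,d\mu$, $g_2=\int z\,d\mu$, and $2g_1+g_3=\int(z^2+1)\,d\mu\ge g_1>0$.

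Since $2g_1+g_3>0$, the claim $c>0$, i.e.\ $2g_1+g_3>\sqrt{g_3^2+4g_2^2}$, is equivalent after squaring and simplifying (cancel $g_3^2$, divide by $4$) to $g_1(g_1+g_3)>g_2^2$, that is, to $\bigl(\textstyle\int 1\,d\mu\bigr)\bigl(\int z^2\,d\mu\bigr)>\bigl(\int z\,d\mu\bigr)^2$. This is Cauchy--Schwarz in $L^2(d\mu)$ for the functions $1$ and $z$; equality there would force $z$ to be constant $\mu$-almost everywhere. But $\mu$ is absolutely continuous with respect to Lebesgue measure, so it assigns zero mass to every singleton while being supported on a set of positive Lebesgue measure, hence cannot be a point mass. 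Therefore the inequality is strict, giving $g_1(g_1+g_3)>g_2^2$ and thus $c>0$.

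The only delicate step is the last one: invoking the equality characterization of Cauchy--Schwarz correctly and arguing---via $f\in\mathcal{F}_1$ and $\Sigma_{11}>0$---that $\mu$ is neither the zero measure nor concentrated at a single point. Everything else (finiteness of $g_1,g_2,g_3$, the algebra of squaring the inequality, and rewriting $g_1+g_3$ and $2g_1+g_3$) is routine bookkeeping.
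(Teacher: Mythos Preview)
Your proof is correct and follows essentially the same route as the paper: both reduce $c>0$ to the strict inequality $g_1(g_1+g_3)>g_2^2$ via the algebra $(2g_1+g_3)^2-(g_3^2+4g_2^2)=4\bigl(g_1(g_1+g_3)-g_2^2\bigr)$, and both establish that inequality as a strict second-moment bound for the measure $\phi(z)f(\sqrt{\Sigma_{11}}z+\mu_1)\,dz$. The paper writes it out as $\int_{\mathcal{R}'}(z-g_2/g_1)^2\phi(z)g(z)\,dz>0$ (variance expansion), whereas you invoke Cauchy--Schwarz and its equality case directly; these are the same argument in different words, and your additional remarks on finiteness of the $g_i$ fill a small gap the paper leaves implicit.
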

\begin{proof}
    Let us write 
    $
        g(z) := f(
            \sqrt{\Sigma_{11}} z + \mu_1
        )
    $ 
    for brevity and define 
    $
        \mathcal{R} := \{z \in \mathbb{R} \mid f(z) \ge \epsilon\}
    $, where $\epsilon$ is given in Eq.~\eqref{eq:function_set}.
    By definition, $\mathcal{R}$ has a non-zero measure.
    We also define
    $
        \mathcal{R}' := 
        \{
            \sqrt{\Sigma_{11}} z + \mu_1 \in \mathcal{R} 
            \mid
            z \in \mathbb{R}
        \}
    $.
    Because $\Sigma_{11} > 0$, 
    $\mathcal{R}'$ also has a non-zero measure and therefore $g \in \mathcal{F}_1$. Then, it is immediate that $g_1$ is positive. 
    Moreover,
    $2g_1 + g_3$ is positive because
    \begin{align}
        g_1 + g_3 
        = 
            \int z^2 \phi(z) g(z) dz
        \geq 
            \int_{\mathcal{R}'} z^2 \phi(z) g(z) dz
        > 0
        \,.
        \nonumber
    \end{align}
    The following integral is also positive:
    \begin{align}
        0
        & < 
            \int_{\mathcal{R}'}
            \left(
                z - \frac{g_2}{g_1}
            \right)^2
            \phi(z) g(z) 
            dz
            \leq
                \int
                \left(
                    z - \frac{g_2}{g_1}
                \right)^2
                \phi(z) g(z) 
                dz
        \nonumber
        \\
        & = 
            \int
                z^2 \phi(z) g(z) 
                dz
            -
            \frac{2 g_2}{g_1}
            \int
                z \phi(z) g(z) 
                dz
            +
            \frac{g_2^2}{g_1^2}
            \int
                \phi(z) g(z) 
                dz
        \nonumber
        \\
        & =
            g_1 + g_3
            -
            \frac{g_2^2}{g_1}
            \,,
    \end{align}
    which gives $(g_1 + g_3)g_1 > g_2^2$. These inequalities conclude $c > 0$ as follows:
    \begin{align}
        2 c = 
        2 g_1 + g_3 - 
        \sqrt{g_3^2 + 4 g_2^2}
        & >
            2 g_1 + g_3 -
            \sqrt{
                g_3^2
                +
                4 g_1^2
                + 
                4 g_1 g_3
            }
        \nonumber
        \\
        & =
            2 g_1 + g_3 -
            \sqrt{
                (2 g_1 + g_3)^2
            }
        \nonumber
        \\
        & = 0 
        \,.
        \nonumber
    \end{align}
\end{proof}

\begin{lemma}
    \label{lemma:zero_limit_positive_semidefinite}
    For a positive semi-definite matrix $\Sigma \in \mathbb{R}^{d \times d}$,
    the vectors
    $(\Sigma)_{\cdot d} / (\Sigma)_{{dd}}$
    and
    $(\Sigma)_{d \cdot} / (\Sigma)_{{dd}}$
    converge to a finite value
    in the limit $(\Sigma)_{dd} \rightarrow 0$.
\end{lemma}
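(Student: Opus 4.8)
The plan is to reduce the vector claim to an entrywise scalar claim and then resolve the resulting $0/0$ indeterminacy using positive semidefiniteness together with the structure of the degeneration that produces the limit. Since $\Sigma=\Sigma^{\top}$, the assertion for the row $(\Sigma)_{d\cdot}$ is identical to that for the column $(\Sigma)_{\cdot d}$, so I would argue only about the column, and it suffices to track each ratio $(\Sigma)_{id}/(\Sigma)_{dd}$ for $i\in[d]$. The first step is to record the sign constraint on the $2\times 2$ principal minor on indices $\{i,d\}$: positive semidefiniteness forces $(\Sigma)_{ii}(\Sigma)_{dd}-(\Sigma)_{id}^{2}\ge 0$, hence $|(\Sigma)_{id}|\le \sqrt{(\Sigma)_{ii}(\Sigma)_{dd}}$. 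With the diagonal entries $(\Sigma)_{ii}$ bounded, this already yields $(\Sigma)_{\cdot d}\to 0$ as $(\Sigma)_{dd}\to 0$; the $i=d$ component of the ratio is identically $1$, so the only genuine content is that the off-diagonal components, which are of indeterminate $0/0$ form, converge to finite values.

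To identify those limits I would invoke how the degenerate limit arises in our setting. The low-rank Gaussian (and discrete) bases are obtained by driving eigenvalues of the covariance to zero along a fixed orthonormal eigenframe, so the direction whose variance vanishes is an eigenvector of $\Sigma$ throughout the limit. Writing $\Sigma=\sum_{k}\lambda_{k}v_{k}v_{k}^{\top}$ with orthonormal $v_{k}$, the hypothesis $(\Sigma)_{dd}\to 0$ forces each nonnegative contribution $\lambda_{k}(v_{k})_{d}^{2}\to 0$; along the eigen-aligned path the coordinate direction $e_{d}$ is itself the vanishing eigenvector, so every $v_{k}\neq e_{d}$ has $(v_{k})_{d}=0$ by orthonormality and only one term survives in column $d$. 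Consequently $(\Sigma)_{\cdot d}=(\Sigma)_{dd}\,e_{d}$ exactly, and the ratio equals the constant coordinate vector $e_{d}$, which trivially converges to a finite value.

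The main obstacle is precisely the gap between the rates $O(\sqrt{(\Sigma)_{dd}})$ and $\Theta((\Sigma)_{dd})$ for the off-diagonal entries. The bare minor bound above only certifies the weaker rate, which is enough to send the numerator $(\Sigma)_{\cdot d}$ to zero but not, on its own, to force the normalized ratio to a finite limit for an arbitrary approach to a degenerate matrix. The delicate step is therefore to bring in the eigen-aligned nature of the limit — that the vanishing direction is an eigenvector of $\Sigma$ rather than a generic direction — which upgrades the rate to $\Theta((\Sigma)_{dd})$ and makes column $d$ asymptotically proportional to $e_{d}$. Once that structural input is in place, both the existence of the finite limit and its explicit value $e_{d}$ follow immediately, and the symmetric statement for $(\Sigma)_{d\cdot}/(\Sigma)_{dd}$ is obtained by transposition.
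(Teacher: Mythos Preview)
You have correctly diagnosed the core obstacle, but you are fighting a typo in the statement rather than a genuine difficulty. The paper's own proof normalizes by $\sqrt{(\Sigma)_{dd}}$, not $(\Sigma)_{dd}$: writing $\Sigma_{ij}=v_i^{\top}v_j$ via a Gram factorization, one has $(\Sigma)_{id}/\sqrt{(\Sigma)_{dd}} = \|v_i\|_2\cos\theta_{id}$, which is bounded by $\|v_i\|_2=\sqrt{(\Sigma)_{ii}}$. This is exactly the content of your $2\times 2$ principal-minor inequality $|(\Sigma)_{id}|\le \sqrt{(\Sigma)_{ii}(\Sigma)_{dd}}$, so your first paragraph already matches the paper's argument and is complete for what the paper actually needs (the footnote that invokes this lemma explicitly uses $\Sigma_{1\cdot}/\sqrt{\Sigma_{11}}$, not $\Sigma_{1\cdot}/\Sigma_{11}$).

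Your second and third paragraphs are an attempt to upgrade the rate from $O(\sqrt{(\Sigma)_{dd}})$ to $\Theta((\Sigma)_{dd})$ by importing an ``eigen-aligned'' hypothesis. That hypothesis is not part of the lemma, and more importantly it is not available in the application: the index ``$1$'' there is the $\beta$-direction after the rotation $R_\beta$, which is in general \emph{not} an eigenvector of the covariance $\Sigma_n$. So the structural assumption you add would not hold in the very place the lemma is used. Moreover, as you implicitly noted, the literal statement with denominator $(\Sigma)_{dd}$ is false for general PSD families (e.g.\ $\Sigma_\epsilon=\begin{pmatrix}1&\epsilon\\ \epsilon&\epsilon^2\end{pmatrix}$ gives $(\Sigma_\epsilon)_{12}/(\Sigma_\epsilon)_{22}=1/\epsilon\to\infty$). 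Drop the extra structure, read the denominator as $\sqrt{(\Sigma)_{dd}}$, and your minor-based argument coincides with the paper's Gram-factorization proof.
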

\begin{proof}
    For the positive semi-definite matrix $\Sigma$, there exists vectors $v_i \in \mathbb{R}^{d}$, $i \in [d]$ that satisfy $(\Sigma)_{ij} = v_i^{\top} v_j$. Therefore, $\sqrt{(\Sigma)_{dd}} \rightarrow 0$ implies $\|v_d\|_2 \rightarrow 0$, in which $(\Sigma)_{id} / \sqrt{(\Sigma)_{dd}}$ converges to  $\|v_i\|_2 \cos \theta_{id}$, where $\theta_{id}$ is the angle between $v_i$ and $v_d$.
\end{proof}

\subsection{Proof of Lemma \ref{lemma:gmm_coef}}
\begin{proof}
    If $\beta^T \Sigma_n \beta > 0$, then $(\Sigma_{n, \beta})_{11} > 0$.
    Therefore, from Lemma \ref{lemma:c_positive}, $c_n(\beta) > 0$.
    If $(\Sigma_{n, \beta})_{11} \rightarrow 0$,
    then 
    from Eq.~\eqref{eq:g_n_beta}, 
    the coefficients are given by
    \begin{align}
        g_{1n,\beta} & \rightarrow 
            f_{\beta}
            \left(
                (\mu_{n, \beta})_1
            \right)
        \,,
        \nonumber
        \\
        g_{2n,\beta} & \rightarrow
            f_{\beta}
            \left(
                (\mu_{n, \beta})_1
            \right)
            \int x \phi(x) dx
            =
            0
        \,,
        \nonumber
        \\
        g_{3n,\beta} & \rightarrow
            f_{\beta}
            \left(
                (\mu_{n, \beta})_1
            \right)
            \int (x^2 -1) \phi(x) dx
            =
            0
        \,.
        \label{eq:g_n_limit}
    \end{align}
    Therefore,
    $ c_n (\beta) = 
        f_{\beta}\left(
            (\mu_{n, \beta})_1
        \right)
    $.
    Finally, from Eq.~\eqref{eq:P_to_f},
    \begin{align}
        f_{\beta} (
            (\mu_{n, \beta})_1
        )
        =
        f_{\beta} (
            (\mu_{n})_{\beta}
        )
        =
        P(
            \text{Select $i$}
            \mid
            X_i = \mu_{n}, \beta
        )
        \,,
    \end{align}
    which concludes the proof.
\end{proof}

\subsection{Proof of Theorem \ref{theorem:low_rank_gaussian_mixture}}
\begin{proof}
    \label{proof:low_rank_gaussian_mixture}
    Similar to the proof of Lemma \ref{lemma:gm_lower_bound}, 
    from Lemma \ref{lemma:Gt}, 
    it is sufficient to show that there exists a constant $\phi_0 > 0$ that satisfies $\phi_{\mathcal{S}}(\tilde{G}_{\beta}) > \phi_0$ for any $\beta \in \mathbb{R}^d$.
    From Lemma \ref{lemma:gm_lower_bound},
    we have
    \begin{align}
        \phi_{\mathcal{S}}
        \left(
            \tilde{G}_{\beta}
        \right)
        & \geq
            \phi_{\mathcal{S}}
            \left(
                \sum_{n} 
                c_n(\beta)
                w_n (\Sigma_{n}^{d_n} + \mu_{n} \mu_{n}^{\top}) 
            \right)
        \nonumber
        \\
        & \geq
            \inf\limits_{\beta \in \mathbb{R}^d}
            \phi_{\mathcal{S}}
            \left(
                \sum_{n} 
                c_n(\beta)
                w_n (\Sigma_{n}^{d_n} + \mu_{n} \mu_{n}^{\top}) 
            \right)
        \geq
        \phi_0
    \,.
    \end{align}
\end{proof}

\subsection{Proof of Theorem \ref{theorem:radial}}
\label{proof:radial}
\begin{proof}
    Similar to the proof of Lemma \ref{lemma:gm_lower_bound}, from Lemma \ref{lemma:Gt}, it is sufficient to show that there exists constant $\phi_0 > 0$ that satisfies $\phi_{\mathcal{S}}(\tilde{G}_{\beta}) > \phi_0$ for any $\beta \in \mathbb{R}^d$.
    Under Assumptions 
    \ref{assumption:arm_selection}
    and
    \ref{assumption:indep_arm}, 
    Lemma \ref{lemma:tilde_G} gives:
    \begin{align}
        \tilde{G}_{\beta}
        \succeq
            \int
            R_{\beta}^{\top}
            Z Z^{\top}
            R_{\beta}
            f_{\beta}(z_1)
            P_{D}(R_{\beta}^{\top} Z)
            dZ
        \,,
        \label{eq:tilde_G_lower_D}
    \end{align}
    where 
    $Z = (z_1, \dots, z_d) \in \mathbb{R}^d$,
    $f_{\beta}: \mathbb{R} \rightarrow \mathbb{R}_{\geq 0}$ is a non-negative Lebesgue integrable function defined by Eq.~\eqref{eq:P_to_f},
    and
    $R_{\beta} \in \mathbb{R}^{d \times d}$ is an orthogonal matrix that satisfies $R_{\beta} \beta = (\| \beta \|_2, 0, \dots, 0)^{\top}$. 
    Here, $P_{D}(R_{\beta}^{\top} Z)$ can be written as:
    \begin{align}
        P_{D} (R_{\beta} Z) 
        = 
            \sum_{n=1}^{N} w_{n}
            Q_n (\|
                R_{\beta}^{\top} Z - \mu_{n}
            \|_2)
        = 
            \sum_{n=1}^{N} w_{n}
            Q_n (\|
                Z - \mu_{n, \beta}
            \|_2)
        \,,
    \end{align}
    where we define $\mu_{n, \beta} := R_{\beta} \mu_n$ for $n \in [N]$. 

    From the discussion around Eq.~\eqref{eq:selection_possibility} in the proof of Lemma \ref{lemma:gm_lower_bound},
    Assumptions 
    \ref{assumption:arm_selection} 
    and
    \ref{assumption:selection_possibility} ensure that
    the function
    $
        Z \rightarrow
        f_{\beta} (z_1)
        P_{D}(R_{\beta}^{\top} Z)
    $ 
    is in $\mathcal{F}_d$ defined in Eq.~\eqref{eq:function_set}.\footnote{
        In the proof, we only consider distributions that have density functions. We note that such distributions can approximate distributions that do not have a density function with arbitrary precision.
    }
    Because $P_{D}$ is a mixture distribution of $Q_n$s, it implies that there exists a set $\mathcal{M} \subset [N]$ such that 
    the functions
    $
        Z \rightarrow
        f_{\beta} (z_1)
        Q_n(
            \|Z - \mu_{n, \beta}\|_2
        )
    $ are in $\mathcal{F}_d$ for $n \in \mathcal{M}$.
    
    Then,
    \begin{align}
        \phi_{\mathcal{S}} 
        \left(
            \tilde{G}_{\beta} (P)
        \right)
        & \geq
            \phi_{\mathcal{S}} 
            \left(
                \sum_{n=1}^{N}
                w_n
                R_{\beta}^{\top}
                \left(
                    \int
                    Z Z^{\top}
                    f_{\beta} (z_1)
                    Q_n (\|
                        Z - \mu_{n, \beta}
                    \|_2)
                    dZ
                \right)
                R_{\beta}
            \right)
        \nonumber
        \\
        &  \geq
            \sum_{n \in \mathcal{M}}
            w_n
            \phi_{\mathcal{S}} 
            \left(
                R_{\beta}^{\top}
                \int
                Z Z^{\top}
                f_{\beta} (z_1)
                Q_n (\|
                    Z - \mu_{n, \beta}
                \|_2)
                dZ
                R_{\beta}
            \right)
        \,.
    \end{align}
    In the last line, we use Lemma \ref{lemma:cc_ineq}.
    From Lemma \ref{lemma:radial_second_moment_ineq}, 
    the matrix in the bracket in the last line is a positive definite matrix, which concludes the proof.
\end{proof}

\begin{lemma}
    \label{lemma:radial_second_moment_ineq}
    If a non-negative Lebesgue integrable function 
    $
        f :\mathbb{R} \rightarrow \mathbb{R}_{\geq 0}
    $
    is an element of $\mathcal{F}_1$,
    and if the function 
    $ X \rightarrow f(x_1) Q(\|X - \mu\|_2) $ 
    is an element of $\mathcal{F}_d$,
    then
    \begin{align}
        \int X X^{\top} 
        f_{\beta}(x_1) 
        Q(\|X - \mu\|_2) 
        dX
        \succ 0
    \,.
    \end{align}
\end{lemma}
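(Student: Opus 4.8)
The plan is to prove the equivalent quadratic-form statement: writing $M := \int X X^\top f(x_1)\,Q(\|X-\mu\|_2)\,dX$ for the symmetric matrix in question, I would show that $V^\top M V > 0$ for every nonzero $V \in \mathbb{R}^d$. Moving the quadratic form inside the integral,
\begin{align}
V^\top M V = \int (V^\top X)^2\, f(x_1)\, Q(\|X - \mu\|_2)\, dX,
\end{align}
and since the integrand is a product of non-negative factors ($f \ge 0$, $Q \ge 0$, $(V^\top X)^2 \ge 0$), it suffices to exhibit a subset of $\mathbb{R}^d$ of positive Lebesgue measure on which this integrand is strictly positive.

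First, I would invoke the hypothesis that the function $g(X) := f(x_1)\,Q(\|X-\mu\|_2)$ lies in $\mathcal{F}_d$ (which in the application is established from the radial structure of $Q$ together with $f \in \mathcal{F}_1$, via the discussion around Eq.~\eqref{eq:selection_possibility}): there is an $\epsilon' > 0$ such that $A := \{X \in \mathbb{R}^d : g(X) \ge \epsilon'\}$ has positive measure. This yields the lower bound $V^\top M V \ge \epsilon' \int_A (V^\top X)^2\, dX$. The remaining step is purely measure-theoretic: because $V \ne 0$, the zero set $H := \{X : V^\top X = 0\}$ is a hyperplane and hence has $d$-dimensional Lebesgue measure zero, so $A \setminus H$ still has positive measure; on $A \setminus H$ we have $(V^\top X)^2 > 0$, so $\int_A (V^\top X)^2\, dX \ge \int_{A \setminus H} (V^\top X)^2\, dX > 0$. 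Combining gives $V^\top M V > 0$ for every nonzero $V$, i.e.\ $M \succ 0$.

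The one technical wrinkle is that a general radial $Q$ need not have a finite second moment, so $M$ could a priori have infinite entries; this is harmless, since replacing $A$ by any bounded subset $A_0 \subseteq A$ of positive measure produces a genuinely finite positive definite matrix $M_0 := \int_{A_0} X X^\top g(X)\, dX$ with $V^\top M V \ge V^\top M_0 V > 0$, and the downstream use of the lemma in the proof of Theorem~\ref{theorem:radial} (bounding $\phi_{\mathcal{S}}$ from below through Lemma~\ref{lemma:cc_ineq}) only requires such a finite positive definite lower bound. I do not expect a real obstacle here: the entire content is the translation of ``$g$ is, modulo null sets, supported on a set of full $d$-dimensional dimension'' into ``the second-moment matrix tested against $g$ is non-degenerate,'' and the radial hypothesis on $Q$ and the $\mathcal{F}_1$ hypothesis on $f$ enter only through the already-granted fact that their product belongs to $\mathcal{F}_d$.
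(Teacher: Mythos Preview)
Your argument is correct but follows a genuinely different route from the paper. The paper proceeds by explicit computation: after the shift $Z = X - \mu$ it exploits the radial symmetry of $Q$ to kill all cross-moments involving $z_i$ with $i \ge 2$ (since $z_i\,Q(\|Z\|_2)$ and $z_i z_j\,Q(\|Z\|_2)$ for $i\neq j$ are odd in $z_i$), and obtains the closed form
\[
\int X X^{\top} f(x_1)\,Q(\|X-\mu\|_2)\,dX \;=\; \tilde{c}\bigl(\mathrm{diag}(\tilde{\sigma}_1^2,\dots,\tilde{\sigma}_d^2) + \tilde{\mu}\tilde{\mu}^{\top}\bigr),
\]
from which positive definiteness follows because $\tilde{c}>0$ and each $\tilde{\sigma}_i^2>0$ (these being guaranteed by $g\in\mathcal{F}_d$). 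Your approach instead tests the quadratic form directly and uses only the measure-theoretic content of $g\in\mathcal{F}_d$ together with the fact that a hyperplane is Lebesgue-null; this is shorter, more conceptual, and in fact never uses the radial structure of $Q$ nor the hypothesis $f\in\mathcal{F}_1$ beyond what is already packaged into $g\in\mathcal{F}_d$. The trade-off is that the paper's explicit diagonal-plus-rank-one formula is not incidental: it is reused verbatim in the later ``Summary of the lower-bound formulae'' section to give a concrete, computable lower bound on $\phi_{\mathcal{S}}(\tilde{G}_\beta)$ for the radial basis, something your abstract argument would not supply. Your remark about possibly infinite second moments is a fair caveat that the paper leaves implicit (effectively covered by Assumption~\ref{assumption:x_beta_bound}).
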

\begin{proof}
    Converting $Z = X - \mu$, we obtain
    \begin{align}
        \int X X^{\top} f(x_1) 
        Q(\|X - \mu\|_2) dX
        = 
            \int
                (Z + \mu) (Z + \mu)^{\top} 
                f(z_1 + \mu_1)
                Q(\|Z\|_2) 
            dZ
        \,.
    \end{align}
    We define the following symbols:
    \begin{align}
        \tilde{c}
            & := 
                \int
                f(z_1 + \mu_1) 
                Q(\|Z\|_2)
                dZ
        \,,
        \nonumber
        \\
        \tilde{\mu}_1
            & := 
                \frac{1}{\tilde{c}}
                \int
                z_1
                f(z_1 + \mu_1) 
                Q(\|Z\|_2)
                dZ
        \,,
        \nonumber
        \\
        \tilde{\sigma}_1^2
            & := 
                \frac{1}{\tilde{c}}
                \int
                (z_1 - \tilde{\mu}_1)^2
                f(z_1 + \mu_1) 
                Q(\|Z\|_2)
                dZ
            = 
                \frac{1}{\tilde{c}}
                \int
                z_1^2
                f(z_1 + \mu_1) 
                Q(\|Z\|_2)
                dZ
                -
                \tilde{\mu}_1^2
        \,,
        \nonumber
        \\
        \tilde{\sigma}_i^2
            & := 
                \frac{1}{\tilde{c}}
                \int
                z_i^2
                f(z_1 + \mu_1) 
                Q(\|Z\|_2)
                dz
            \hspace{10pt}
            \text{(for $i = 2, \dots , d$)}
        \,.
    \end{align}
    Here, $\tilde{c}, \tilde{\sigma}_1^2, \tilde{\sigma}_i^2 > 0$ because 
    the function
    $X \rightarrow f(x_1) Q(\|X - \mu\|_2) $ is in $\mathcal{F}_d$.
    We note that $\tilde{\mu}_i := \int z_i f(z_1) Q(\|Z\|_2) dz = 0$ because $z_i Q(\|Z\|_2)$ is an odd function for $z_i$.
    For indices $1, i, j$, $2 \leq i, j \leq d$, the integrals are computed as:
    \begin{align}
        \int 
        (z_1 + \mu_1)^2 f(z_1 + \mu_1) Q(\|Z\|_2) dZ
        & = 
            \int 
            (z_1^2 + 2 \mu_1 z_1 + \mu_1^2)
            f(z_1 + \mu_1) 
            Q(\|Z\|_2)
            dZ
        \nonumber
        \\
        & = 
            \tilde{c}
            \left(
                \tilde{\sigma}_1^2 + \tilde{\mu}_1^2
                +
                2 \mu_1 \tilde{\mu}_1
                +
                \mu_1^2
            \right)
        \nonumber
        \\
        & = 
            \tilde{c}
            \left(
                \tilde{\sigma}_1^2 
                +
                (\tilde{\mu}_1 + \mu_1)^2
            \right)
        \,,
        \nonumber
        \\
        \int 
        (z_1 + \mu_1) (z_j + \mu_j) f(z_1 + \mu_1) 
        Q(\|Z\|_2) dZ
        & =
            \int 
            (z_1 z_j + \mu_1 z_j + z_1 \mu_j + \mu_1 \mu_j) f(z_1 + \mu_1) 
            Q(\|Z\|_2) dZ
        \nonumber
        \\
        & =
            \int 
            (z_1 \mu_j + \mu_1 \mu_j) f(z_1 + \mu_1) 
            Q(\|Z\|_2) dZ
        \nonumber
        \\
        & =
            \tilde{c}
            \left(
                \tilde{\mu}_1
                +
                \mu_1
            \right)
            \mu_j
        \,,
        \nonumber
        \\
        \int (z_i + \mu_i) (z_j + \mu_j) f(z_1 + \mu_1) Q(\|Z\|_2) dZ
        & = 
            \int (z_i^2 \delta_{ij} + \mu_i \mu_j) f(z_1 + \mu_1) Q(\|Z\|_2) dZ
        \nonumber
        \\
        & =
            \tilde{c}
            \left(
                \tilde{\sigma}_i^2 \delta_{ij}
                +
                \mu_i \mu_j
            \right)
        \,,
    \end{align}
    respectively.
    In summary, we obtain
    \begin{align}
        \int X X^{\top} f(x_1) 
        Q(\|X - \mu\|_2) dX
        =
        \tilde{c}
        \left(
            \mathrm{diag}(\tilde{\sigma}_1^2, \dots, \tilde{\sigma}_d^2)
            +
            \tilde{\mu} \tilde{\mu}^{\top}
        \right)
        \,,
    \end{align}
    where 
    $
        \tilde{\mu} :=
        \{
            \mu_1 + \tilde{\mu}_1, \mu_2, \dots, \mu_d
        \}
    $.
    Since $\tilde{c}, \tilde{\sigma}_1^2, \tilde{\sigma}_i^2 > 0$, the first term proves the claim.
\end{proof}

\subsection{Proof of Lemma \ref{lemma:cc_sum}}
\begin{proof}
    \label{proof:cc_sum}
    For any $V \in \mathbb{R}^{d}$, 
    the positive semi-definiteness of the matrices gives $V^{\top} \Lambda_n V \geq 0$.
    If any $V \in \mathcal{D}$ satisfies $V^{\top} (\sum_n w_n \Lambda_n) V > 0$, there exists $m \in [N]$ such that $V^{\top} \Lambda_{m} V > 0$ for each $V$,
    and therefore $ V^{\top} (\sum_n w'_{n} \Lambda_{n}) V \geq V^{\top} w'_{m} \Lambda_{m} V > 0$ for any $V \in \mathcal{D}$.
\end{proof}

\section{Summary of the lower-bound formulae}
We summarize the formulae for the lower bound of each basis introduced in the previous section.
First, in Eq.~\eqref{eq:P_to_f}, a non-negative Lebesgue integrable function $f_{\beta}: \mathbb{R} \rightarrow \mathbb{R}$ is defined as:
\begin{align}
    f_{\beta} ((X_i)_{\beta})
    & :=
        P(
            \text{Select $i$}
            \mid
            X_i, \beta
        )
    \nonumber
    \\
    & =
        \int 
        \frac{
            \prod_{j \neq i}
            I\left[
                (X_i)_{\beta}
                \geq
                (X_j)_{\beta}
            \right]
        }
        {
            \sum_{i'=1}^{K}
            \prod_{j \neq i'}
            I\left[
                (X_{i'})_{\beta}
                \geq
                (X_j)_{\beta}
            \right]
        }
        P(\mathcal{X}\backslash \{X_{i}\})
        \prod_{k' \neq i} dX_{k'}
    \,.
\end{align}
\begin{remark}
    From
    Lemma \ref{lemma:gm_lower_bound}, 
    Eq.~\eqref{eq:c_beta},
    Eq.~\eqref{eq:g_n_beta},
    and
    Eq.~\eqref{eq:g_n_limit},
    under Assumptions 
    \ref{assumption:arm_selection},
    \ref{assumption:indep_arm},
    and
    \ref{assumption:selection_possibility},
    if $P_i(X)$ is the 
    Gaussian mixture / low-rank Gaussian mixture / discrete basis 
    then the following lower bound holds:
    \begin{align}
        \phi_{\mathcal{S}} 
        \left(
            \tilde{G}_{\beta}
        \right)
        \geq
        \phi_{\mathcal{S}} 
        \left(
            \sum_{n=1}^{N}
            w_n 
            c_n(\beta)
            (\Sigma_n + \mu_n \mu_n^{\top})
        \right)
        \,,
    \end{align}
    where $\Sigma_n$ may be low-rank for the low-rank Gaussian mixture basis and is zero for the discrete basis.  The coefficient is given by:
    \begin{align}
    c_n (\beta) :=
        \frac{1}{2}
        \left(
            2 g_{1n,\beta}
            + g_{3n,\beta}
            - \sqrt{
                g^{2}_{3n,\beta} + 4 g^{2}_{2n,\beta}}
        \right)
        \,,
    \end{align}
    where
    \begin{align}
        g_{1n,\beta} & :=
            \int_{-\infty}^{\infty}
            \phi(z)
            f_{\beta}
            \left(
                \sqrt{(\Sigma_{n,\beta})_{11}} z
                + 
                (\mu_{n, \beta})_1
            \right)
            dz
        \,,
        \nonumber
        \\
        g_{2n,\beta} & :=
            \int_{-\infty}^{\infty}
            z
            \phi(z)
            f_{\beta}
            \left(
                \sqrt{(\Sigma_{n,\beta})_{11}} z
                + 
                (\mu_{n, \beta})_1
            \right)
            dz
        \,,
        \nonumber
        \\
        g_{3n,\beta} & :=
            \int_{-\infty}^{\infty}
            (z^2 - 1)
            \phi(z)
            f_{\beta}
            \left(
                \sqrt{(\Sigma_{n,\beta})_{11}} z
                + 
                (\mu_{n, \beta})_1
            \right)
            dz
        \,,
    \end{align}
    for $(\Sigma_{n,\beta})_{11} > 0$, and 
    \begin{align}
        g_{1n,\beta} 
        =
            f_{\beta}
            \left(
                (\mu_{n, \beta})_1
            \right)
        \,,
        \hspace{5pt}
        g_{2n,\beta} 
        & =
        g_{3n,\beta} 
        = 0
        \,,
    \end{align}
    for $(\Sigma_{n,\beta})_{11} \rightarrow 0$.
\end{remark}
\begin{remark}
    If $P_i(X)$ is the radial basis, the following lower bound holds:
    \begin{align}
        \phi_{\mathcal{S}} 
        \left(
            \tilde{G}_{\beta}
        \right)
        &  \geq
            \phi_{\mathcal{S}} 
            \left(
                \sum_{n=1}^{N}
                w_n
                \tilde{c}_{n, \beta}
                R_{\beta}^{\top}
            \left(
                \mathrm{diag}(
                    \tilde{\sigma}_{1n, \beta}^2, 
                    \dots, 
                    \tilde{\sigma}_{dn, \beta}^2
                )
                +
                \tilde{\mu}_{n, \beta} \tilde{\mu}_{n, \beta}^{\top}
            \right)
                R_{\beta}
        \right)
        \,,
    \end{align}
    where
    \begin{align}
        \tilde{c}_{n, \beta}
            & := 
                \int
                f_{\beta}(z_1 + (\mu_{n, \beta})_1) 
                Q(\|Z\|_2)
                dZ
        \,,
        \nonumber
        \\
        \tilde{\mu}_{1n, \beta}
            & := 
                \frac{1}{\tilde{c}}
                \int
                z_1
                f_{\beta}(z_1 + (\mu_{n, \beta})_1) 
                Q(\|Z\|_2)
                dZ
        \,,
        \nonumber
        \\
        \tilde{\sigma}_{1n, \beta}^2
            & := 
                \frac{1}{\tilde{c}}
                \int
                (z_1 - \tilde{\mu}_{1n, \beta})^2
                f_{\beta}(z_1 + (\mu_{n, \beta})_1) 
                Q(\|Z\|_2)
                dZ
            = 
                \frac{1}{\tilde{c}}
                \int
                z_1^2
                f_{\beta}(z_1 + (\mu_{n, \beta})_1) 
                Q(\|Z\|_2)
                dZ
                -
                \tilde{\mu}_{1n, \beta}^2
        \,,
        \nonumber
        \\
        \tilde{\sigma}_{in,\beta}^2
            & := 
                \frac{1}{\tilde{c}}
                \int
                z_i^2
                f_{\beta}(z_1 + (\mu_{n, \beta})_1) 
                Q(\|Z\|_2)
                dz
            \hspace{10pt}
            \text{(for $i = 2, \dots , d$)}
        \nonumber
        \\
        \tilde{\mu}_{n,\beta}
            & :=
            \{
                (\mu_{n, \beta})_1 + \tilde{\mu}_{1n,\beta}, 
                (\mu_{n, \beta})_2
                \dots, 
                (\mu_{n, \beta})_d
            \}
        \,.
    \end{align}
\end{remark}

\section{Numerical experiment}
\begin{figure}[t]
    \centering
    \includegraphics[]{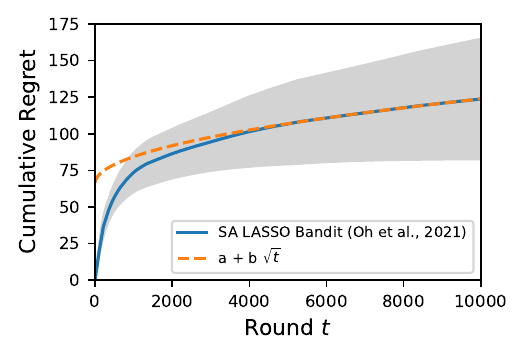}
    \caption{
        Cumulative regret of SA LASSO Bandit algorithm~\citep{DBLP:conf/icml/OhIZ21} on artificial data that do not satisfy the RS condition.
        The blue line represents the average of 100 trials, while the orange line shows the function $a + b \sqrt{t}$ fitted with the results for rounds $t=5000$ to $10000$.
        The shaded area represents the 0.5-$\sigma$ standard deviation region.
    }
    \label{fig:result}
\end{figure}
To empirically validate our claim, we performed a numerical experiment with artificial data. In this experiment, we set the number of arms $K = 3$, the arm feature dimension $d = 10$, and each arm feature was assumed to be generated from a uniform distribution of 0 to 1. In this situation, the support of the arm feature distribution is origin asymmetric and does not satisfy Assumption \ref{assumption:RS}. The reward was given by the inner product of the selected arm feature and the sparse parameter $\beta$ plus noise. The sparsity of parameter $\beta$ was set to 2, and the non-zero components were generated by a uniform distribution of 0 to 1. Gaussian noise with a variance of 0.1 was added to the reward.  We employed SA LASSO Bandit~\citep{DBLP:conf/icml/OhIZ21} as the bandit algorithm. The experiment was conducted 100 times with different $\beta$s. We show the result in Fig \ref{fig:result}. In the figure, the blue line shows the average value of the cumulative regrets, while the orange line shows the function $a + b \sqrt{t}$ fitted with the results for rounds $t=5000$ to $10000$. We observe that the greedy algorithm yields cumulative regret that asymptotically behaves like $\sqrt{T}$ even for the arm feature distribution that does not satisfy Assumption \ref{assumption:RS}.

\section{Proofs of the results in Section \ref{section:application}}

\subsection{Proof of Corollary \ref{corollary:combinatorial_bound_empirical}}
\begin{proof}
    \label{proof:combinatorial_bound_empirical}
    Similar to the proof of Lemma \ref{lemma:greedy_bound_empirical}, we define the following high probability event at round $t$:
    \begin{align}
        \mathcal{E}_{\mathrm{gap}}^t
        = \{
            \|\beta^* - \hat{\beta}_t \|_1 \leq 
            8\lambda_{Lt} s_0 / \bar{\phi}^2
        \}
        \,,
    \end{align}
    and below, we set $\lambda_{Lt} = 4 x_{\mathrm{max}} \sigma \sqrt{ (\delta^2 + 2 \log d) / L t}$.
    By Lemma \ref{lemma:lasso_estimator}, the above inequality is valid when both events Eq.~\eqref{eq:epsX} and Eq.~\eqref{eq:cc_btwn_empirical_expected} occur simultaneously.
    From Lemmas \ref{lemma:epsX} and \ref{lemma:cc_btwn_empirical_expected}, the probability of the event $\mathcal{E}_{\mathrm{gap}}^t$ is given by:
    \begin{align}
        P(\mathcal{E}_{\mathrm{gap}}^t)
        \geq
        1 - 2 e^{-\delta^2/2} - e^{-Lt \kappa(\bar{\phi})^2}\,,
    \end{align}
    for
    $ t \geq T_0 := \log(d (d+1))/L \kappa(\bar{\phi})$,
    where 
    $\kappa(\bar{\phi}) := \min (2 - \sqrt{2}, \bar{\phi}^2 / (256 x_{\mathrm{max}}^2 s_0))$.

    For the combinatorial setting, the reward $\mathrm{reg}(t)$ is bounded as follows:
    For each arm in the set $\mathcal{I}_t$ and $\mathcal{I}_t^*$ , we index it according to the size of its inner product with $\hat{\beta}_{t-1}$ as follows.
    \begin{align}
        X_{a_t^{1}} \hat{\beta}_{t-1}
        \geq
        X_{a_t^{2}} \hat{\beta}_{t-1}
        \geq
        \dots
        \geq
        X_{a_t^{L}} \hat{\beta}_{t-1}
    \,,
    \nonumber
    \\
        X_{a_t^{*1}} \hat{\beta}_{t-1}
        \geq
        X_{a_t^{*2}} \hat{\beta}_{t-1}
        \geq
        \dots
        \geq
        X_{a_t^{*L}} \hat{\beta}_{t-1}
    \,.
    \end{align}
    Then, due to the greedy selection policy for $\mathcal{I}_t$, $a_t^{i}$ is the $i$-th largest of all the arms, and the following inequality holds:
    \begin{align}
        X_{a_t^{i}} \hat{\beta}_{t-1}
        \geq
        X_{a_t^{*i}} \hat{\beta}_{t-1}
        \,,
    \end{align}
    for each $i \in [L]$.
    Then the regret in round $t$ is bounded as:
    \begin{align}
        \mathrm{reg}(t) 
        & = 
            \sum_{a_t^* \in \mathcal{I}_t^{*}}
            X_{a_t^*}^{t\top} \beta^* 
            -
            \sum_{a_t \in \mathcal{I}_t}
            X_{a_t}^{t\top} \beta^*
        \nonumber
        \\
        & = 
            \sum_{i=1}^{L}
            X_{a_t^{*i}}^{t\top} \beta^* 
            -
            X_{a_t^{i}}^{t\top} \beta^*
        \nonumber
        \\
        & = 
            \sum_{i=1}^{L}
              X_{a_t^{*i}}^{t\top} \beta^* 
            - X_{a_t^{*i}}^{t\top} \hat{\beta}_{t-1}
            + X_{a_t^{*i}}^{t\top} \hat{\beta}_{t-1}
            - X_{a_t^{i}}^{t\top} \beta^*
        \nonumber
        \\
        & \leq
            \sum_{i=1}^{L}
              X_{a_t^{*i}}^{t\top} \beta^* 
            - X_{a_t^{*i}}^{t\top} \hat{\beta}_{t-1}
            + X_{a_t^{i}}^{t\top} \hat{\beta}_{t-1}
            - X_{a_t^{i}}^{t\top} \beta^*
        \nonumber
        \\
        & =
            \sum_{i=1}^{L}
            X_{a_t^{*i}}^{t\top} (
                \beta^* - \hat{\beta}_{t-1}
            )
            + 
            X_{a_t^{i}}^{t\top} (
                \hat{\beta}_{t-1} - \beta^*
            )
        \nonumber
        \\
        & =
            \sum_{i=1}^{L}
            (X_{a_t^{*i}}^{t\top} - X_{a_t^i}^{t\top})
            (\beta^* - \hat{\beta}_{t-1})
        \nonumber
        \\
        & \leq 
            \sum_{i=1}^{L}
            \|X_{a_t^{*i}}^t - X_{a_t^i}^t\|_{\infty}
            \|\beta^* - \hat{\beta}_{t-1}\|_1
        \nonumber
        \\
        \label{eq:reward_beta_gap_comb}
        & \leq 
            2 L x_{\mathrm{max}}
            \|\beta^* - \hat{\beta}_{t-1}\|_1
        \,.
    \end{align}
    Below we also use $\mathrm{reg}(t) \leq 2 L x_{\mathrm{max}} b$, which can be derived from the first line of Eq.~\eqref{eq:reward_beta_gap_comb}.
    Then, the expected reward at round $t$ can be decomposed of 
    \begin{align}
        R(T) 
        & = 
            \sum_{t=1}^{T} \mathbb{E}[
                \mathrm{reg}(t)]
        \nonumber
        \\
        & = 
            \sum_{t=1}^{T_0} \mathbb{E}[
                \mathrm{reg}(t)]
            +
            \sum_{t=T_0+1}^{T} 
            \mathbb{E}[
                \mathrm{reg}(t)
                (
                I[ 
                    \mathcal{E}_{\mathrm{gap}}^{t-1\,c}
                ]
                +
                I[
                    \mathcal{E}_{\mathrm{gap}}^{t-1}
                ]
                )
            ]
        \nonumber
        \\
        & \leq 
            2 L x_{\mathrm{max}} b T_0
            +
            \sum_{t=T_0+1}^{T}
            2 L x_{\mathrm{max}} b
            P(
                \mathcal{E}_{\mathrm{gap}}^{t-1\,c}
            )
            +
            \mathbb{E}[
                \mathrm{reg}(t)
                I[ \mathcal{E}_{\mathrm{gap}}^{t-1}]
            ]
        \nonumber
        \\
        & \leq 
            2 L x_{\mathrm{max}} b T_0
            +
            2 L x_{\mathrm{max}} 
            \left(
            \sum_{t=T_0+1}^{T}
            b
            P(
                \mathcal{E}_{\mathrm{gap}}^{t-1\,c}
            )
            +
            \mathbb{E}[
                \|\beta^{*} - \hat{\beta}_{t-1}\|_1
                I[\mathcal{E}_{\mathrm{gap}}^{t-1}]
            ]
            \right)
        \nonumber
        \\
        & \leq 
            2 L x_{\mathrm{max}} b T_0
            +
            2 L x_{\mathrm{max}} 
            \left(
            \sum_{t=T_0}^{T}
            b
            P(
                \mathcal{E}_{\mathrm{gap}}^{t\,c}
            )
            +
            \mathbb{E}[
                \|\beta^{*} - \hat{\beta}_{t}\|_1
                I[\mathcal{E}_{\mathrm{gap}}^{t}]
            ]
            \right)
        \,.
    \end{align}
    In the fourth line, we use Eq.~\eqref{eq:reward_beta_gap_comb}.
    The rest of the proof is almost identical to the proof of Lemma \ref{lemma:greedy_bound_empirical}.
    Applying the inequality given by the event $\mathcal{E}_{\mathrm{gap}}^{t}$ to the last term, we obtain the following bound:
    \begin{align}
        R(T)
        & \leq 
            2 L x_{\mathrm{max}} b T_0
            +
            2 L x_{\mathrm{max}} 
            \left(
            \sum_{t=T_0}^{T}
            b
            P(
                \mathcal{E}_{\mathrm{gap}}^{t\,c}
            )
            +
            \frac{8 \lambda_{Lt} s_0}{\bar{\phi}^2}
            \mathbb{E}[
                I[\mathcal{E}_{\mathrm{gap}}^t]
            ]
            \right)
        \nonumber
        \\
        & \leq 
            2 L x_{\mathrm{max}} b T_0
            +
            2 L x_{\mathrm{max}} 
            \left(
            \sum_{t=T_0}^{T}
            b
            P(
                \mathcal{E}_{\mathrm{gap}}^{t\,c}
            )
            +
            \frac{8 \lambda_{Lt} s_0}{\bar{\phi}^2}
            \right)
        \nonumber
        \\
        & \leq
            2 L x_{\mathrm{max}} b T_0
            +
            2 L x_{\mathrm{max}} 
            \left(
            \sum_{t=T_0}^{T}
            b
            e^{-L t \kappa(\bar{\phi})^2}
            +
            2 
            b
            e^{-\delta^2/2}
            +
            \frac{
                32 s_0
                x_{\mathrm{max}} \sigma
            }{\bar{\phi}^2}
            \sqrt{\frac{\delta^2 + 2 \log d}{Lt}}
            \right)
        \,,
        \nonumber
    \end{align}
    where we substitute $\lambda_{Lt} = 4 x_{\mathrm{max}} \sigma \sqrt{(\delta^2 + 2 \log d)/ L t}$ in the last equality.
    Taking $\delta^2 = 4 \log (L t)$, we obtain:
    \begin{align}
        R(T) \leq
            2 L x_{\mathrm{max}} b T_0
            +
            2 L x_{\mathrm{max}} 
            \left(
            \sum_{t=T_0}^{T}
            b
            e^{-L t \kappa(\bar{\phi})^2}
            +
            \frac{2 b}{L^2 t^2}
            +
            \frac{
                32 s_0
                x_{\mathrm{max}} \sigma
            }{\bar{\phi}^2}
            \sqrt{\frac{4 \log (Lt) + 2 \log d}{Lt}}
            \right)
        \,.
        \nonumber
    \end{align}
    The second, third and fourth terms are upper bounded by:
    \begin{align}
        &
        \sum_{t=T_0}^T b e^{-L t \kappa(\bar{\phi})^2} 
            \leq
            \int_{0}^{\infty}
            b e^{-Lt\kappa(\bar{\phi})^2}
            dt
            =
            \frac{b}{L \kappa(\bar{\phi})^2}
        \,,
        \nonumber
        \\ 
        &
        \sum_{t=T_0}^T
        \frac{2 b}{L^2 t^2}
            \leq
            \sum_{t=1}^{\infty}
            \frac{2 b}{L^2 t^2}
            =
            \frac{\pi^2 b}{3 L^2}
        \,,
        \nonumber
        \\ 
        &
        \sum_{t=T_0}^T
        \sqrt{\frac{4\log (Lt) + 2\log d}{L t}}
        \leq
        \int_{0}^{T}
        \sqrt{\frac{4\log (LT) + 2\log d}{L t}}
        dt
        =
        2 \sqrt{
            \frac{(4\log (LT) + 2\log d)T}{L}
        }
        \,,
        \nonumber
    \end{align}
    respectively. Substituting $T_0 = \log (d (d+1)) / L \kappa(\bar{\phi})^2$, we finally obtain the following upper bound:
    \begin{align}
        R(T) 
        & \leq
            2 x_{\mathrm{max}} b
            \left(
            \frac{1 + \log (d (d+1))}{ \kappa(\bar{\phi})^2}
            +
            \frac{\pi^2}{3 L}
            \right)
            +
            \frac{
                128 s_0 x_{\mathrm{max}}^2 \sigma
            }{\bar{\phi}^2}
            \sqrt{
                \frac{ (4\log (LT) + 2\log d) T}{L}
            }
        \,.
    \end{align}
\end{proof}

\subsection{Proof of Theorem \ref{theorem:combinatorial}}
\begin{proof}
    \label{proof:combinatorial}
    We first redefine $\tilde{G}_{\beta}$ for the combinatorial setting as: 
    \begin{align}
        \tilde{G}_{\beta}
        :=
            \frac{1}{L}
            \sum_{
                |\mathcal{I}| = L
            }
            \int
            \left(
                \sum_{k \in \mathcal{I}}
                X_{k} X_{k}^{\top}
            \right)
            P(
                \text{Select $\mathcal{I}$} 
                \mid 
                \mathcal{X}, \beta
            ) 
            P(\mathcal{X}) 
            \prod_{k'}^{K} 
            dX_{k'}
            \,,
    \end{align}
    where $\mathcal{I} \subset [K]$ is a set of the arm indices, $\sum_{|\mathcal{I}| = L}$ is the sum over all possible combinations of $\mathcal{I}$ satisfying $|\mathcal{I}| = L$,
    and the arm selection policy is defined by:
    \begin{align}
        P(
            \text{Select $\mathcal{I}$} 
            \mid 
            \mathcal{X}, \beta
        ) 
        := 
            \frac{
                \prod_{i \in \mathcal{I}} 
                \prod_{j \in [K] \backslash \mathcal{I}} 
                I[
                    X_i^{\top} \beta
                    \geq 
                    X_j^{\top} \beta
                ]
            }{
                C
                (X_1^{\top} \beta, \dots, X_K^{\top} \beta)
            }
        \,.
        \label{eq:P_sel_comb}
    \end{align}
    Here
    $
        C(X_1^{\top} \beta, \dots, X_K^{\top} \beta)
    $
    is the normalization constant for the arm selection policy, which is a function of
    $X_1^{\top} \beta, \dots, X_K^{\top} \beta$.

    The expected Gram-matrix can be written as:
    \begin{align}
        \bar{G}_t 
        & := 
            \frac{1}{Lt} \sum_{s=1}^{t} 
            \mathbb{E} [
                \sum_{a_s \in \mathcal{I}_s}
                X_{a_{s}}^{s} X_{a_{s}}^{s\top} 
                \mid
                \mathcal{F}'_{s-1}
            ]
        \nonumber
        \\
        & = 
            \frac{1}{Lt} \sum_{s=1}^{t} 
            \sum_{|\mathcal{I}| = L}
            \int
                \left(
                    \sum_{k \in \mathcal{I}}
                    X_{k} X_{k}^{\top} 
                \right)
                P(
                    \text{Select $\mathcal{I}$} 
                    \mid 
                    \mathcal{X}, \hat{\beta}_{s-1}
                ) 
                P(\mathcal{X})
                \prod_{k'} dX_{k'}
        \nonumber
        \\
        & =
            \frac{1}{Lt} \sum_{s=1}^{t} 
            \tilde{G}_{\hat{\beta}_{s-1}}
        \,.
    \end{align}
    Therefore, similar to Lemma \ref{lemma:Gt}, it is sufficient to show that there exists $\phi_0 > 0$ that satisfies $\inf\limits_{\beta \in \mathbb{R}^d} \phi_{\mathcal{S}} (\tilde{G}_{\beta}) \geq \phi_0$.
    \begin{align}
        \tilde{G}_{\beta}
        & =
            \frac{1}{L}
            \sum_{
                |\mathcal{I}| = L
            }
            \int
            \left(
                \sum_{k \in \mathcal{I}}
                X_{k} X_{k}^{\top}
            \right)
            P(
                \text{Select $\mathcal{I}$} 
                \mid 
                \mathcal{X}, \beta
            ) 
            P(\mathcal{X}) 
            \prod_{k'}^{K} 
            dX_{k'}
        \nonumber
        \\
        & \succeq 
            \frac{1}{L}
            \sum_{
                |\mathcal{I}| = L
            }
            \int
            X_{i} X_{i}^{\top}
            I[i \in \mathcal{I}]
            P(
                \text{Select $\mathcal{I}$} 
                \mid 
                \mathcal{X}, \beta
            ) 
            P(\mathcal{X}) 
            \prod_{k'}^{K} 
            dX_{k'}
        \nonumber
        \\
        & =
            \frac{1}{L}
            \int
            X_{i} X_{i}^{\top}
            P(X_i) 
            P (\text{Select $i$} \mid X_i, \beta)
            dX_{i}
        \,.
    \end{align}
    In the second line, we only consider $X_i X_i^{\top}$ component and
    in the last line, we use
    \begin{align}
        P (\text{Select $i$} \mid X_i, \beta)
        & =
            \sum_{
                |\mathcal{I}| = L
            }
            \int
            I[i \in \mathcal{I}]
            P(
                \text{Select $\mathcal{I}$} 
                \mid 
                \mathcal{X}, \beta
            ) 
            P(\mathcal{X} \backslash \{X_i\})  
            \prod_{k' \neq i}^{K} 
            dX_{k'}
        \,.
    \end{align}
    We note that \textit{Select $i$} 
    in $P(\text{Select $i$} \mid X_i, \beta)$ 
    means that $i$ is in the top-$L$ set.
    From Eq.~\eqref{eq:P_sel_comb}, 
    $P(\text{Select $i$} \mid X_i, \beta)$ is a function of $(X_i)_{\beta}$ and therefore, we obtain
    \begin{align}
        \tilde{G}_{\beta}
        =
            \frac{1}{L}
            \int
            X_{i} X_{i}^{\top}
            f_{\beta} ((X_i)_{\beta})
            P_i (X_i)
            dX_{i}
    \end{align}
    where
    $
        f_{\beta} ((X_i)_{\beta})
        :=
        P (\text{Select $i$} \mid X_i, \beta)
    $.
    We note that by definition, 
    if $f_{\beta}(z) > 0$ for some $z$, 
    then $f_{\beta}(z') > 0$ for $z' > z$,
    and therefore 
    under Assumption \ref{assumption:selection_possibility},
    for any $\beta$, 
    $f_{\beta}$ is an element of $\mathcal{F}_1$ 
    defined in Eq.~\eqref{eq:function_set}.
    The above is the same form of 
    Eq.~\eqref{eq:tilde_G_lower_GM} in
    Lemma \ref{lemma:gm_lower_bound} 
    or 
    Eq.~\eqref{eq:tilde_G_lower_D} in
    Theorem \ref{theorem:radial}.
    Hence, 
    under Assumption \ref{assumption:selection_possibility},
    the same discussion in Section \ref{section:extension} holds when $P_i$ is the basis of the greedily-applicable distribution.
\end{proof}

\section{Specific cases for Theorem \ref{theorem:cc_mixture_general}}
Here we consider specific cases of Theorem \ref{theorem:cc_mixture_general}, i.e., situations where each arm follows an independent probability distribution, and each arm feature distribution can be written as a mixture of a distribution with good properties and the rest.
In such cases, the following remark holds.
\begin{remark}
    \label{theorem:cc_mixture_specific}
    Suppose that both an arm feature distribution $P(\mathcal{X})$ and 
    a $\phi_0$-greedy-applicable distribution $\tilde{P}(\mathcal{X})$ 
    are independent for each arm as 
    $P(\mathcal{X}) = \prod_{i=1}^{K} P_i (X_i)$, 
    and
    $\tilde{P}(\mathcal{X}) = \prod_{i=1}^{K} \tilde{P}_i (X_i)$, respectively.
    If 
    each $P_i(X_i)$ is a mixture of a PDF $Q_i(X_i)$ and $\tilde{P}_i(X_i)$, i.e., $P_i(X_i) = c_i \tilde{P_i} (X_i) + (1-c) Q_i(X_i)$ for a constant $0<c_i<1$ for each $i \in [K]$, then $P(\mathcal{X})$ is a $(\prod_{i=1}^{K}c_i)\phi_0$-greedy-applicable distribution.
\end{remark}
\begin{proof}
    From the definition, we have
    \begin{align}
        \bar{G}_t (P)
        & =
            \frac{1}{t}
            \sum_{s=1}^{t}
            \mathbb{E}
            \left[
                X_{a_t}^{s}
                X_{a_t}^{s\top}
                \mid
                \mathcal{F}'_{s-1}
            \right]
        \nonumber
        \\
        & =
            \frac{1}{t}
            \sum_{s=1}^{t}
            \sum_{k=1}^{K}
            \int
            X_k X_k^{\top} 
            P(\text{Select $k$} \mid \mathcal{X}, \hat{\beta}_{s-1})
            P (\mathcal{X})
            \prod_{k'=1}^{K} dX_{k'}
        \nonumber
        \\
        & =
            \frac{1}{t}
            \sum_{s=1}^{t}
            \sum_{k=1}^{K}
            \int
            X_k X_k^{\top} 
            P(\text{Select $k$} \mid \mathcal{X}, \hat{\beta}_{s-1})
            \prod_{k'=1}^{K} 
            P_{k'} (X_{k'})
            dX_{k'}
        \nonumber
        \\
        & =
            \frac{1}{t}
            \sum_{s=1}^{t}
            \sum_{k=1}^{K}
            \int
            X_k X_k^{\top} 
            P(\text{Select $k$} \mid \mathcal{X}, \hat{\beta}_{s-1})
            \prod_{k'=1}^{K} 
            (
                c_{k'} \tilde{P}_{k'} (X_{k'})
                +
                (1-c_{k'}) Q_{k'} (X_{k'})
            )
            dX_{k'}
        \nonumber
        \\
        & \succeq
            \left(\prod_{k'=1}^{K} c_{k'}\right)
            \frac{1}{t}
            \sum_{s=1}^{t}
            \sum_{k=1}^{K}
            \int
            X_k X_k^{\top} 
            P(\text{Select $k$} \mid \mathcal{X}, \hat{\beta}_{s-1})
            \prod_{k'=1}^{K} 
            \tilde{P}_{k'} (X_{k'})
            dX_{k'}
        \nonumber
        \\
        & =
            \left(\prod_{k'=1}^{K} c_{k'}\right)
            \bar{G}_t (\tilde{P})
        \,.
    \end{align}
    By the definition of $\tilde{P}(\mathcal{X})$, we see $\phi_{\mathcal{S}}(\bar{G}_t(P)) \geq (\prod_{i=1}^{K} c_i) \phi_{\mathcal{S}}(\bar{G}_t(\tilde{P})) = (\prod_{i=1}^{K} c_i) \phi_0$ from the last line.
\end{proof}
Particularly, if all arm feature distributions are the same distribution, then the following obviously holds:
\begin{remark}
    Assume that
    both an arm feature distribution $P(\mathcal{X})$ and 
    a $\phi_0$-greedy-applicable distribution $\tilde{P}(\mathcal{X})$ 
    are independent for each arm and the distribution of each arm is identical:
    $P(\mathcal{X}) = \prod_{i=1}^{K} P' (X_i)$, 
    and
    $\tilde{P}(\mathcal{X}) = \prod_{i=1}^{K} \tilde{P}' (X_i)$, respectively.
    If $P'(X)$ is a mixture of a PDF $Q(X)$ and $\tilde{P}'(X)$, i.e., $P'(X) = c \tilde{P} (X) + (1-c) Q(X)$ for a constant $0<c<1$, then $P(\mathcal{X})$ is a $c^K \phi_0$-greedy-applicable distribution.
\end{remark}

\section{Approximations}
The claim of Theorem \ref{theorem:cc_mixture_general} is for the case where arm feature distribution $P(X)$ has a mixture component of a greedy-applicable distribution $\tilde{P}(X)$. 
Intuitively, one can infer that a PDF that is well approximated by a greedy-applicable distribution (e.g., with an accuracy of $O(\epsilon)$) is still often a greedy-applicable distribution.
In fact, the conjecture turns out to be true if both $X \sim {P}(X)$ and $X \sim \tilde{P}(X)$ are bounded:
\begin{theorem}
    \label{theorem:approx}
    Let us write the error of the approximator $\tilde{P}(X)$ for the original probability distribution $P(X)$ as $||P-\tilde{P}||_{L^{1}} < \epsilon $, where the $L^1$-norm for a function $f: \mathbb{R}^d \rightarrow \mathbb{R}$ is defined by $||f||_{L^1} := \int_{\mathbb{R}^d} |f(X)| dX$. 
    Under Assumptions
    \ref{assumption:x_beta_bound} and
    \ref{assumption:arm_selection}, 
    and
    if $X \sim \tilde{P} (X)$ is bounded as $\|X\|_{\infty} < x_{\mathrm{max}}$,
    then,
    $\phi_{\mathcal{S}}(\bar{G}_t(P)) \geq \phi_{\mathcal{S}}(\bar{G}_t(\tilde{P})) - O(\epsilon)$.
\end{theorem}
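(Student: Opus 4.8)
\emph{Proof plan.} The plan is to show that swapping $P$ for its $L^1$-close, bounded approximator $\tilde P$ perturbs the expected Gram-matrix by at most $d\,x_{\mathrm{max}}^2\epsilon$ in spectral norm, and then to feed this into an elementary perturbation estimate for $\phi_{\mathcal S}$. Writing out the difference of the two Gram-matrices for a fixed realisation of the estimator sequence $\hat\beta_{s-1}$ (this is the sense in which $\bar G_t(P)$ and $\bar G_t(\tilde P)$ are compared; cf.\ Remark~\ref{remark:assumptions} and Lemma~\ref{lemma:Gt}), Assumption~\ref{assumption:arm_selection} gives
\begin{align}
    \bar G_t(P)-\bar G_t(\tilde P)
    =\frac1t\sum_{s=1}^{t}\sum_{k=1}^{K}\int X_kX_k^{\top}\,P(\text{Select }k\mid\mathcal X,\hat\beta_{s-1})\,\bigl(P(\mathcal X)-\tilde P(\mathcal X)\bigr)\prod_{k'=1}^{K}dX_{k'} .
\end{align}
The crucial observation — and the reason the hypothesis $\|X\|_\infty<x_{\mathrm{max}}$ for $X\sim\tilde P$ is needed — is that on the whole set $\{P>0\}\cup\{\tilde P>0\}$, which contains the support of the integrand, every $X_k$ satisfies $\|X_k\|_\infty<x_{\mathrm{max}}$: this holds on $\{P>0\}$ by Assumption~\ref{assumption:x_beta_bound} and on $\{\tilde P>0\}$ by the added hypothesis.

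Next I would bound the quadratic form. For any $v\in\mathbb R^d$ we have $(v^{\top}X_k)^2\le\|v\|_1^2\|X_k\|_\infty^2\le d\,x_{\mathrm{max}}^2\|v\|_2^2$ on that set, and $\sum_{k}P(\text{Select }k\mid\mathcal X,\hat\beta_{s-1})=1$ for every $\mathcal X$, so, using $\|P-\tilde P\|_{L^1}<\epsilon$,
\begin{align}
    \bigl|v^{\top}\bigl(\bar G_t(P)-\bar G_t(\tilde P)\bigr)v\bigr|
    &\le d\,x_{\mathrm{max}}^2\|v\|_2^2\cdot\frac1t\sum_{s=1}^{t}\int\bigl|P(\mathcal X)-\tilde P(\mathcal X)\bigr|\prod_{k'=1}^{K}dX_{k'}
    \nonumber\\
    &< d\,x_{\mathrm{max}}^2\,\epsilon\,\|v\|_2^2 .
\end{align}
Hence $\bar G_t(P)\succeq\bar G_t(\tilde P)-d\,x_{\mathrm{max}}^2\epsilon\,I$.

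Finally I would convert this $\succeq$ into a bound on $\phi_{\mathcal S}$. Since $V\mapsto V^{\top}\Sigma V/\|V\|_{\mathcal S,1}^2$ is homogeneous of degree $0$, the minimum defining $\phi_{\mathcal S}$ may be taken over $V\in\mathcal D$ with $\|V\|_{\mathcal S,1}=1$; such $V$ obey $\|V\|_2\le\|V\|_1=\|V\|_{\mathcal S,1}+\|V\|_{\mathcal S^c,1}\le 4$, so $V^{\top}\bar G_t(P)V\ge V^{\top}\bar G_t(\tilde P)V-16\,d\,x_{\mathrm{max}}^2\epsilon$ for all of them. Taking the minimum and multiplying by $s_0=|\mathcal S|$ gives $\phi_{\mathcal S}(\bar G_t(P))^2\ge\phi_{\mathcal S}(\bar G_t(\tilde P))^2-16\,s_0\,d\,x_{\mathrm{max}}^2\epsilon$, and then $\sqrt{1-x}\ge 1-x$ for $x\in[0,1]$ applied with $x=16 s_0 d x_{\mathrm{max}}^2\epsilon/\phi_{\mathcal S}(\bar G_t(\tilde P))^2$ yields
\begin{align}
    \phi_{\mathcal S}(\bar G_t(P))\ \ge\ \phi_{\mathcal S}(\bar G_t(\tilde P))-\frac{16\,s_0\,d\,x_{\mathrm{max}}^2}{\phi_{\mathcal S}(\bar G_t(\tilde P))}\,\epsilon
\end{align}
(the displayed bound being trivially true when its right-hand side is negative), which is the assertion $\phi_{\mathcal S}(\bar G_t(P))\ge\phi_{\mathcal S}(\bar G_t(\tilde P))-O(\epsilon)$.

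The step I expect to be the main obstacle is precisely this last conversion. Because $\phi_{\mathcal S}$ is essentially the square root of a quadratic form, an $O(\epsilon)$ spectral perturbation only guarantees an $O(\sqrt\epsilon)$ change of $\phi_{\mathcal S}$ in general; one gets the sharper $O(\epsilon)$ rate — with hidden constant proportional to $s_0\,d\,x_{\mathrm{max}}^2/\phi_{\mathcal S}(\bar G_t(\tilde P))$ — exactly when $\phi_{\mathcal S}(\bar G_t(\tilde P))$ is bounded away from $0$, i.e.\ in the intended setting where $\tilde P$ is itself a $\phi_0$-greedy-applicable approximator. The only other care needed is the boundedness bookkeeping on $\{P>0\}\cup\{\tilde P>0\}$, which is exactly what the extra hypothesis on $\tilde P$ supplies; without it the difference of Gram-matrices need not even have finite norm.
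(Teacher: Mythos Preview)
Your proof is correct and follows essentially the same route as the paper: write $\bar G_t(P)-\bar G_t(\tilde P)$ as an integral against $P-\tilde P$, use the $\|X_k\|_\infty\le x_{\mathrm{max}}$ bound on the union of supports, and convert the resulting quadratic-form perturbation into a bound on $\phi_{\mathcal S}$ via $\|V\|_1\le 4\|V\|_{\mathcal S,1}$ on $\mathcal D$. Your version is in fact a bit tighter and more careful than the paper's: you exploit $\sum_k P(\text{Select }k\mid\mathcal X,\beta)=1$ to avoid the paper's extra factor $K$, and you explicitly handle the square-root step (noting the hidden constant $\propto 1/\phi_{\mathcal S}(\bar G_t(\tilde P))$ needed for the $O(\epsilon)$ rather than $O(\sqrt\epsilon)$ rate), which the paper simply asserts.
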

\begin{proof}
    \label{proof:approx}
    From the definition, we have
    \begin{align}
        \bar{G}_t (P)
        & =
            \frac{1}{t}
            \sum_{s=1}^{t}
            \mathbb{E}
            \left[
                X_{a_t}^{s}
                X_{a_t}^{s\top}
                \mid
                \mathcal{F}'_{s-1}
            \right]
        \nonumber
        \\
        & =
            \frac{1}{t}
            \sum_{s=1}^{t}
            \sum_{k=1}^{K}
            \int
            X_k X_k^{\top} 
            P(\text{Select $k$} \mid \mathcal{X}, \hat{\beta}_{s-1})
            P (\mathcal{X})
            \prod_{k'=1}^{K} dX_{k'}
        \nonumber
        \\
        & =
            \frac{1}{t}
            \sum_{s=1}^{t}
            \sum_{k=1}^{K}
            \int
            X_k X_k^{\top} 
            P(\text{Select $k$} \mid \mathcal{X}, \hat{\beta}_{s-1})
            (
                \tilde{P}(\mathcal{X})
                +
                (
                    P(\mathcal{X})
                    -
                    \tilde{P}(\mathcal{X})
                )
            )
            \prod_{k'=1}^{K} dX_{k'}
        \nonumber
        \\
        & \succeq 
            \frac{1}{t}
            \sum_{s=1}^{t}
            \sum_{k=1}^{K}
            \int
            X_k X_k^{\top} 
            P(\text{Select $k$} \mid \mathcal{X}, \hat{\beta}_{s-1})
            \tilde{P}(\mathcal{X})
            \prod_{k'=1}^{K} dX_{k'}
        \nonumber
        \\
        & \hspace{50pt}
            -
            \frac{1}{t}
            \sum_{s=1}^{t}
            \sum_{k=1}^{K}
            \int
            X_k X_k^{\top} 
            P(\text{Select $k$} \mid \mathcal{X}, \hat{\beta}_{s-1})
            |
                P(\mathcal{X})
                -
                \tilde{P}(\mathcal{X})
            |
            \prod_{k'=1}^{K} dX_{k'}
        \nonumber
        \\
        & = 
            \bar{G}_t (\tilde{P})
            -
            \frac{1}{t}
            \sum_{s=1}^{t}
            \sum_{k=1}^{K}
            \int
            X_k X_k^{\top} 
            P(\text{Select $k$} \mid \mathcal{X}, \hat{\beta}_{s-1})
            |
                P(\mathcal{X})
                -
                \tilde{P}(\mathcal{X})
            |
            \prod_{k'=1}^{K} dX_{k'}
        \nonumber
        \\
        & \succeq 
            \bar{G}_t (\tilde{P})
            -
            \frac{1}{t}
            \sum_{s=1}^{t}
            \sum_{k=1}^{K}
            \int
            X_k X_k^{\top} 
            |
                P(\mathcal{X})
                -
                \tilde{P}(\mathcal{X})
            |
            \prod_{k'=1}^{K} dX_{k'}
            \,.
        \label{eq:Gt_approx}
    \end{align}
    Because $P$ and $\tilde{P}$ are bounded, we obtain
    \begin{align}
        \bar{G}_t (P)
        & \succeq 
            \bar{G}_t (\tilde{P})
            -
            K x_{\max}^2 1_{d \times d}
            \int
            |
                P(\mathcal{X})
                -
                \tilde{P}(\mathcal{X})
            |
            \prod_{k'=1}^{K} dX_{k'}
        \nonumber
        \\
        & =
            \bar{G}_t (\tilde{P})
            -
            \epsilon K x_{\max}^2 1_{d \times d}
        \,.
    \end{align}
    Here $1_{d\times d} \in \mathbb{R}^{d \times d}$ represents the all ones matrix.
    For any $V \in \mathbb{R}^d$, we have
    \begin{align}
        V^{\top} \bar{G}_t (P) V
        \geq 
        V^{\top} \bar{G}_t (\tilde{P}) V
        -
        \epsilon K x_{\max}^2 (\sum_d (V)_d)^2
        \geq 
        V^{\top} \bar{G}_t (\tilde{P}) V
        -
        \epsilon K x_{\max}^2 \|V\|_1^2
        \,,
    \end{align}
    which concludes
    $
        \phi_{\mathcal{S}}(\bar{G}_t(P)) 
        \geq 
        \phi_{\mathcal{S}}(
            \bar{G}_t(\tilde{P})
        ) - O(\epsilon)
    $.
\end{proof}

\section{An application to the dense parameter setting}
In the non-sparse case, the analysis of the greedy algorithm has also been performed by \citet{bastani2021mostly}.
In this work, a more abstract assumption is employed:
\begin{assumption}{Covariate Density}
    \label{assumption:CD}
    For a vector $\beta \in \mathbb{R}^{d}$,
    there exists $\lambda_0 > 0$ such that 
    \begin{align}
        \lambda_{\mathrm{min}}
        \left(
            \mathbb{E} [XX^{\top} I[X^{\top} \beta > 0]]
        \right)
        \geq \lambda_0
        \,,
    \end{align}
    where $\lambda_{\mathrm{min}}(A)$ represents the minimum eigenvalue of $A \in \mathbb{R}^{d \times d}$. 
\end{assumption}
In this study, Assumption \ref{assumption:RS} is mentioned as one of the sufficient conditions.
As with the sparse setting, our analysis extends the applicability of this theory. For this setting, we make the following assumption, which corresponds to Assumption \ref{assumption:selection_possibility}:
\begin{assumption}
    \label{assumption:selection_possibility_dense}
    We assume
    $
    \inf_{\beta \in \mathbb{R}^d} 
    \int I[X^{\top} \beta \geq 0] P(X) dX > 0
    $.
\end{assumption}
Then, we find
\begin{theorem}
    \label{theorem:CD}
    Under Assumption 
    \ref{assumption:selection_possibility_dense}, 
    if 
    $P(X)$ is described by $P_{GM}$, $P_{LGM}$, $P_{D}$, or $P_{R}$ with the operator $\phi_{\mathcal{S}}$ replaced by $\lambda_{\mathrm{min}}$,
    then there exists $\lambda_0 > 0$ that satisfies Assumption \ref{assumption:CD}.
\end{theorem}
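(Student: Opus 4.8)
The plan is to repeat, almost verbatim, the arguments behind Theorems \ref{theorem:gaussian_mixture}, \ref{theorem:low_rank_gaussian_mixture}, Corollary \ref{corollary:discrete} and Theorem \ref{theorem:radial}, observing that the matrix $\mathbb{E}[XX^\top I[X^\top\beta > 0]]$ of Assumption \ref{assumption:CD} has exactly the structure of $\tilde{G}_\beta$ in Lemma \ref{lemma:tilde_G}, only simpler. First I would rotate coordinates by the orthogonal $R_\beta$ with $R_\beta\beta = (\|\beta\|_2, 0, \dots, 0)^\top$ and write $X^\top\beta > 0 \iff z_1 > 0$ for $Z = R_\beta X$. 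Thus the role of the selection function $f_\beta$ of Lemma \ref{lemma:tilde_G} is played here by $f_\beta(z_1) := I[z_1 > 0]$, which is non-negative, bounded, and satisfies the monotone-support property used throughout Section \ref{section:extension} (if $f_\beta(z) > 0$ then $f_\beta(z') > 0$ for $z' > z$). No marginalization over the other arms is needed, since $I[X^\top\beta > 0]$ already depends on $X$ alone.

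Second, I would invoke Assumption \ref{assumption:selection_possibility_dense}: the condition $\int I[X^\top\beta \ge 0] P(X)\,dX > 0$ for every $\beta$ plays precisely the role of Eq.~\eqref{eq:selection_possibility}, guaranteeing that $Z \mapsto f_\beta(z_1)P(R_\beta^\top Z)$ has non-zero measure support, hence that (the relevant restriction of) $f_\beta$ lies in $\mathcal{F}_1$ and the full product lies in $\mathcal{F}_d$. With this in hand, Lemma \ref{lemma:gauss_second_moment_ineq} applies to each Gaussian component of $P_{GM}$ and, through the low-rank / point-mass limit, of $P_{LGM}$ and $P_D$, while Lemma \ref{lemma:radial_second_moment_ineq} applies to each radial component of $P_R$; in every case Lemma \ref{lemma:c_positive} gives that the resulting coefficients $c_n(\beta)$ are strictly positive (for $P_{GM}$ this uses $(\Sigma_{n,\beta})_{11} = \beta^\top\Sigma_n\beta/\|\beta\|_2^2 > 0$). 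This yields, for each fixed $\beta$, a genuinely positive-definite lower bound $\mathbb{E}[XX^\top I[X^\top\beta>0]] \succeq \sum_n w_n c_n(\beta)(\Sigma_n + \mu_n\mu_n^\top) \succ 0$ (with $\Sigma_n$ possibly low-rank or zero), exactly as in the sparse proofs.

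Third, I would replace $\phi_{\mathcal{S}}$ by $\lambda_{\mathrm{min}}$ everywhere. The two structural facts the sparse proofs rely on carry over: superadditivity under positive-weighted sums (the analogue of Lemma \ref{lemma:cc_ineq}, which holds for $\lambda_{\mathrm{min}}$ by concavity of the minimum eigenvalue) and the implication $\sum_n w_n\Lambda_n \succ 0 \Rightarrow \sum_n w'_n\Lambda_n \succ 0$ for positive weights (the analogue of Lemma \ref{lemma:cc_sum}, proved by the same pointwise-in-$V$ argument). Since the bounds from the previous step are strictly positive-definite, each gives $\lambda_{\mathrm{min}} > 0$. For $P_{LGM}$ and $P_D$ the required uniformity in $\beta$ is built into their definitions (with $\phi_{\mathcal{S}}$ read as $\lambda_{\mathrm{min}}$); for $P_{GM}$ and $P_R$ it follows because $\mathbb{E}[XX^\top I[X^\top\beta > 0]]$ depends on $\beta$ only through $\beta/\|\beta\|_2$, this dependence is continuous (the densities charge no hyperplane), and the unit sphere is compact, so the pointwise-positive function $\beta \mapsto \lambda_{\mathrm{min}}(\mathbb{E}[XX^\top I[X^\top\beta>0]])$ attains a positive minimum $\lambda_0 > 0$, which is the claim.

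The step I expect to be the main obstacle is exactly this last uniformity-over-$\beta$ point together with checking that $f_\beta = I[\,\cdot > 0\,]$ legitimately meets the hypotheses of the moment lemmas — boundedness makes it integrable against the Gaussian and radial kernels, non-negativity and the monotone-support property are immediate, and membership in $\mathcal{F}_d$ comes from Assumption \ref{assumption:selection_possibility_dense} — after which the remainder is a transcription of the Section \ref{section:extension} proofs with $\phi_{\mathcal{S}} \mapsto \lambda_{\mathrm{min}}$. A minor bookkeeping point is the $>0$ versus $\ge 0$ discrepancy between Assumptions \ref{assumption:CD} and \ref{assumption:selection_possibility_dense}, which is immaterial for the absolutely continuous bases and is absorbed into the limiting argument for $P_D$.
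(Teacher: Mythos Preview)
Your proposal is correct and follows essentially the same path as the paper's proof: identify $\mathbb{E}[XX^\top I[X^\top\beta>0]]$ with the structure of $\tilde{G}_\beta$, observe that $f_\beta(z_1)=I[z_1>0]$ plays the role of the selection function, use Assumption \ref{assumption:selection_possibility_dense} in place of Assumption \ref{assumption:selection_possibility}, and then rerun Section \ref{section:extension} with $\lambda_{\mathrm{min}}$ in place of $\phi_{\mathcal{S}}$.

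The only packaging difference is that the paper makes the reduction \emph{literal} rather than structural: it observes that $\mathbb{E}[XX^\top I[X^\top\beta\ge 0]]$ is exactly $\tilde{G}_\beta$ for a two-arm greedy problem with $X_1\sim P$ and $X_2\equiv 0$ (a point mass at the origin), so Assumptions \ref{assumption:indep_arm} and \ref{assumption:selection_possibility} become trivially satisfied and Assumption \ref{assumption:selection_possibility_dense} respectively, and the theorems of Section \ref{section:extension} apply verbatim without re-deriving any moment bounds. Your direct route---plugging $f_\beta=I[\,\cdot>0\,]$ into Lemmas \ref{lemma:gauss_second_moment_ineq} and \ref{lemma:radial_second_moment_ineq}---reaches the same destination and is arguably more transparent; you also make the uniformity-in-$\beta$ step explicit (compactness of the sphere, continuity of the map), which the paper leaves implicit. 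Either packaging is fine.
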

\begin{proof}
    \label{proof:CD}
    In the proof of all theorems in Section \ref{section:extension}, we proved the lower bound for $\phi_{\mathcal{S}}$ by using the inequality $\succeq$ on the positive definiteness of matrices.
    In other words, the same theorem can be proved by replacing the operator $\phi_{\mathcal{S}}$ by $\lambda_{\min}$.

    The expectation in Assumption \ref{assumption:CD} is identical to $\tilde{G}_{\beta}$ in the greedy policy with $K=2$ and the arm features generated independently by $P(X_1)$ and $\delta(X_2)$, respectively:
    \begin{align}
        \tilde{G}_{\beta}
        & =
            \int 
            (
                X_1 X_1^{\top}
                I[X_1^{\top} \beta \geq X_2^{\top} \beta]
                +
                X_2 X_2^{\top}
                I[X_2^{\top} \beta \geq X_1^{\top} \beta]
            )
            P(X_1) P(X_2)
            dX_1 dX_2
        \nonumber
        \\
        & =
            \int 
            X_1 X_1^{\top}
            I[X_1^{\top} \beta \geq 0]
            P(X_1)
            dX_1
        \,.
    \end{align}
    Assumption \ref{assumption:selection_possibility} can be written as
    \begin{align}
    \inf_{\beta \in \mathbb{R}^d} 
    P(\text{Select $i$} \mid \beta) 
    = 
    \inf_{\beta \in \mathbb{R}^d} 
    \int I[X^{\top} \beta \geq 0] P(X) dX 
    > 0
    \,,
    \end{align}
    which is the same statement as Assumption \ref{assumption:selection_possibility_dense}.
    Therefore, under the conditions of the theorem, the same discussion holds as in Section \ref{section:extension} with the operator $\phi_{\mathcal{S}}$  replaced by $\lambda_{\min}$.
\end{proof}


\end{document}